\documentclass{article}
\usepackage{amsmath, amssymb, amsthm} 
\usepackage{amsfonts}
\usepackage{graphicx}
\usepackage{fullpage}
\usepackage{booktabs}
\usepackage{natbib}
\usepackage{thm-restate}

\newcommand{\MSE}[1]{\mathrm{MSE}(#1)}
\usepackage[utf8]{inputenc}
\usepackage{amsmath,amssymb}
\usepackage{bbm}

 



\usepackage{hyperref}
\usepackage{cleveref}

\usepackage{amsthm}
\newtheorem{theorem}{Theorem}[section]
\newtheorem{lemma}[theorem]{Lemma}

\newtheorem{definition}[theorem]{Definition}
\newtheorem{corollary}[theorem]{Corollary}

\newtheorem{proposition}[theorem]{Proposition}
\newtheorem{remark}[theorem]{Remark}

\newtheorem{assumption}{Assumption}

\usepackage{xcolor}

\usepackage{comment}
\usepackage{mathtools}

\usepackage[shortlabels]{enumitem}

\newcommand{\E}{\mathbb{E}}

\usepackage[shortlabels]{enumitem}

\newcommand{\ignore}[1]{}



\usepackage[shortlabels]{enumitem}

\usepackage[shortlabels]{enumitem}

\usepackage{caption}
\usepackage{xcolor}



\DeclarePairedDelimiter{\norm}{\|}{\|}

\newcommand{\cD}{\mathcal{D}}

\newcommand{\cG}{\mathcal{G}}
\newcommand{\cH}{\mathcal{H}}

\newcommand{\cX}{\mathcal{X}}








\usepackage{nicematrix}



\newcommand{\R}{\mathbb{R}}

\usepackage{algorithm}
\usepackage[noend]{algpseudocode}

\title{Networked Information Aggregation via Machine Learning}
\author{Michael Kearns \and Aaron Roth \and Emily Ryu} 
\date{\today}

\begin{document}
\maketitle
\begin{abstract}
We study a distributed learning problem in which learning agents are embedded in a directed acyclic graph (DAG). There is a fixed and arbitrary distribution over feature/label pairs, and each agent or vertex in the graph is able to directly observe only a subset of the features --- potentially a different subset for every agent. The agents learn sequentially in some order consistent with a topological sort of the DAG, committing to a model mapping observations to predictions of the real-valued label. Each agent observes the predictions of their parents in the DAG, and trains their model using both the features of the instance that they directly observe, and the predictions of their parents as additional features. We ask when this process is sufficient to achieve \emph{information aggregation}, in the sense that some agent in the DAG is able to learn a model whose error is competitive with the best model that could have been learned (in some hypothesis class) with direct access to \emph{all} features, despite the fact that no single agent in the network has such access. We give upper and lower bounds for this problem for both linear and general hypothesis classes. Our results identify the \emph{depth} of the DAG as the key parameter: information aggregation can occur over sufficiently long paths in the DAG, assuming that all of the relevant features are well represented along the path, and there are distributions over which information aggregation cannot occur even in the linear case, and even in arbitrarily large DAGs that do not have sufficient depth (such as a hub-and-spokes topology in which the spoke vertices collectively see all the features). We complement our theoretical results with a comprehensive set of experiments.  
\end{abstract}

\thispagestyle{empty} \setcounter{page}{0}
\clearpage

 \tableofcontents
 \thispagestyle{empty} \setcounter{page}{0}
 \clearpage

\section{Introduction}

There is a rich literature dating back to \cite{degroot1974reaching} studying \emph{social learning in networks}. Motivating this literature are settings in which information is distributed heterogeneously across different parties who are all trying to solve the same prediction problem. The parties make predictions publicly, and those predictions are observed by others --- and so may in turn influence others' subsequent predictions --- but the parties do not directly share their observations. This literature seeks to understand when social learning of this sort is able to recover estimates of the underlying ground truth that are as accurate as if all of the information had been aggregated and used by a central party to learn. 

This literature has focused on either extremely simple learning heuristics (e.g. so-called DeGroot dynamics \citep{degroot1974reaching} that involve each party naively averaging the opinions/predictions of their neighbors in the graph), or has studied implausibly powerful learners (e.g. perfect Bayesians who communicate their entire posterior distribution to their neighbors). Most theoretical analyses in this literature assume for tractability that the learning instance is extremely simple (e.g. independent and unbiased signals of a real valued ground state). In this paper, we revisit the problem of learning on networks from a machine learning perspective. There is an arbitrary distribution over feature/label pairs, and the features (which may be correlated with one another in arbitrary ways) are arbitrarily distributed across the agents, not necessarily forming a partition. Rather than using simple averaging heuristics or perfect Bayes optimal learning, agents solve empirical risk minimization problems (e.g. least squares regression) using their neighbors' predictions as features. We ask: when is this process sufficient to converge to a model that is as accurate as one that could have been obtained had a single party solved a risk minimization problem over the same model class, with access to all of the relevant features, despite the fact that no single party had access to all features?

We formalize this question as follows: There is a distribution $\cD \in \Delta (\cX \times \mathbb{R})$ over feature/label pairs. $\cX \subseteq \mathbb{R}^d$ is a $d$-dimensional feature space. Agents $i$ are embedded in a directed acyclic graph, and each directly observes only some subset $S_i \subseteq [d]$ of the features of each example, which we write as $x_{S_i}$ for an example $x \in \cX$. In some topologically sorted ordering of the agents, the agents $i$ sequentially train predictors $f_i$ that are functions both of their own observed features $x_{S_i}$ and of the predictions made by their parents in the graph. We ask: 
\begin{quote}
\textit{Under what circumstances (and how quickly) does this process result in predictions that successfully aggregate information across agents, in the sense that they are as accurate as the best predictions that could be made as a function of the union of all of the features, despite the fact that every agent in the learning process has only a much more limited view? }
\end{quote}

\subsection{Our Results and Techniques}
We first study a linear regression setting in which agents learn linear models of their observed features $x_{S_i}$ and the predictions $\hat y_j$ of their predecessors $j$. 

\paragraph{Intuition for Non-Triviality.}
It is tempting to think that because any particular linear model is additively separable across features, then the presence of a connected subgraph (or perhaps a chain) of agents, the union of whose observations comprise the whole feature space, should be sufficient to learn the optimal linear model. This is not true, as the following example (which we extend to a more general lower bound in  Section \ref{sec:lower-bounds}) demonstrates. Consider a $d = 2$ dimensional instance in which $x_1$ and $y$ are both independent Gaussian random variables, and $x_2 = y - x_1$. Given access to both features, there is a perfect linear predictor: $y = x_1 + x_2$. But suppose we have two agents arranged in a line: $A_1 \to A_2$, with agent 1 observing the first feature and agent 2 observing the second feature. Despite the fact that the optimal \emph{joint} predictor puts unit weight on both $x_1$ and $x_2$, the optimal predictor of $y$ based on the first feature $x_1$ alone puts no weight on it, and is instead the constant function $f_1(x) = 0$. This is because $x_1$ is independent of $y$ --- it contains no signal on its own, without access to $x_2$. Hence, agent 1's prediction --- which is all that gets passed to agent 2 --- contains no information at all about $x_1$. Thus agent 2 must make her prediction as a function of $x_2$ alone: As a result, she will not be able to make predictions that have mean squared error (MSE) below $1/2$, despite the fact that there is a perfect linear predictor that jointly uses her feature and that of her direct predecessor. As a result, even for linear predictors, in order to compete with the best model defined on all features, it is insufficient to simply have a subtree or path in the graph that includes agents whose observations span all of the features.

\paragraph{Upper Bound in the Linear Case.}
Nevertheless, in Section \ref{sec:linear} we show a general positive result for linear learners on graphs that have depth $D$ --- i.e. contain paths of length $D$. Consider any assignment of features to agents such that every contiguous interval of $M$ agents along a path cover all $d$ features: for example, if in a path each agent sees a single feature in round-robin order, then $M = d$; if each agent sees a random subset of half of the features, then with high probability $M = \tilde O(\log d)$. Then, as we show in Corollary \ref{cor:best_linear_error}, the final agent in the path learns a model that is competitive with the best linear model defined on all features, up to error $\eta = O\left(\frac{M}{\sqrt{D}} \right)$. In other words, roughly speaking, to get excess error at most $\eta$ relative to the best linear model on all features, this bound tells us that it suffices for  the graph to contain a path of agents with length scaling as $1/\eta^2$. We complement this with a lower bound in Section \ref{sec:lower_bound_path}, demonstrating a family of instances consisting of single paths on which $\eta = \Omega\left(\frac{M}{D}\right)$ ---  i.e. in which seeing each feature at least $1/\eta$ times is unavoidable.

We sketch our analysis here, because it isolates two key ``orthogonality'' requirements of the learning process that lets us generalize  our theorems beyond linear regression. For notational brevity, in the definitions below
and elsewhere we use $f$ and $g$ as shorthand for $f(z)$ and $g(z)$, where in general $z$ consists of the 
features directly observable to an agent/model as well as the predictions of their parents,
and the expectations are with respect to the distribution on $x,y$.
\begin{enumerate}
    \item \textbf{Multiaccuracy}: A predictor $f$ is said to satisfy \emph{multiaccuracy} \citep{multicalibration,multiaccuracy} with respect to a collection of functions $G$ if each function $g \in  \cG$ is orthogonal to $f$'s residuals:
    $$\E[g(f-y)] = 0$$
    \item \textbf{Self-Orthogonality}:  A predictor $f$ is said to satisfy self-orthogonality if it is orthogonal to its own residuals:
    $$\E[f(f-y)] = 0$$
    This is what \cite{gopalan2022low} call ``degree 2 calibration''. 
\end{enumerate}
We use the fact that a predictor $f$ that is multiaccurate with respect to a collection of functions $G$ and also self-orthogonal has mean squared error that is at most that of the optimal function in $G$ --- this is a consequence of the simple MSE decomposition that we state in Lemma \ref{lem:mse_decomposition}:
$$\E[(f-y)^2] \leq \min_{g \in G}\E[(g-y)^2]$$
and is closely related to a theorem from \cite{gopalan2022low} showing that ``degree 2 multicalibration'' suffices for error optimality within a class $G$ in squared error. Our requirements are slightly weaker\footnote{In fact, their proof (Propostion A.1 of \cite{gopalan2022low}) also only uses these two conditions. We call out the weaker statement here explicitely because these weaker properties --- but not full degree 2 multicalibration --- follow ``for free'' from linear regression, which is important for our application.} --- multiaccuracy is what \cite{gopalan2022low} call degree 1 multicalibration, and so we need only ``degree 1 multicalibration'' and ``degree 2 calibration'', echoing recent work on relaxing multicalibration requirements to ``calibrated multiaccuracy'' requirements \citep{gopalan2023loss,casacuberta2025global}.

Linear regression always produces a predictor $f$ that is multiaccurate with respect to all linear functions of its inputs --- this is a direct consequence of the first order optimality conditions of squared error, and is a standard method for obtaining multiaccurate predictions \citep{gopalan2023loss}. Since $f$ is itself a linear function of its inputs, this in particular means that linear regression also  always produces a self-orthogonal predictor.   

Now recall that one of the inputs to agent $i$'s model $f_i$ is the prediction $f_{j}$ of any agent $j$ that is a parent of $i$ in the graph. Hence, it follows from $f_i$'s multiaccuracy guarantee that $\E[f_{j}(f_i-y)] = 0$. Lemma \ref{lem:mse_improvement_closeness_dag} shows that this condition implies that:
\[ \mathbb{E}[(f_i - f_{j})^2] = \mathbb{E}[(f_{j} - y)^2] - \mathbb{E}[(f_i - y)^2]. \]

In other words, if agent $i$'s model $f_i$ has MSE that is close to the MSE of it's parent agent $j$'s model $f_{j}$, then \emph{their predictions must also be close}. Since each agent $i$ uses their parents' predictions as inputs, it also follows that MSE must be monotonically decreasing along any path in the graph. Because MSE is always non-negative, this monotonicity property implies that if  the length of any path is sufficiently large, there must exist a long subsequence of agents $A_\ell \to \ldots \to A_j$ whose MSE is very similar: $|\E[(f_j-y)^2] - \E[(f_\ell-y)^2]  \leq \epsilon$. The above Lemma therefore implies that the predictions of every agent along this subsequence are also very close: $\sum_{i=\ell+1}^j \E[(f_i-f_{i-1})^2] \leq \epsilon$. Since each agent  $i$ produces a predictor  $f_i$ that  is multiaccurate with respect to their own features $S_i$, but every other agent in this stretch has a very similar predictor, this means that  for every agent $i$ in this stretch, its predictor is in fact approximately multiaccurate with respect to  the \emph{union} of all features available to any agent in the  stretch. Since  the models $f_i$ are also  all self-orthogonal, if the stretch is long enough to span  \emph{all} of the features, then the predictors must in fact be competitive with the best linear predictor  trained jointly on all features --- i.e. information  must have been successfully aggregated. We perform our main analysis in the distributional regime (where we assume each agent can directly optimize directly on the distribution), but prove uniform convergence results that let us extend our results to the finite sample regime in Appendix \ref{sec:generalization-linear}. 

\paragraph{Beyond Linear Models} Linear regression produces models that are multiaccurate with respect to their input features --- crucially including the prediction of their predecessor --- and self-orthogonal. But those were the \emph{only} two properties of linear regression that drove our analysis. Hence, to generalize our results to more complicated function classes $\cH$, it suffices to be able to guarantee these two properties. This is what we do in Section \ref{sec:general_classes}. In this section we now assume that each agent $A_i$ has the ability to perform MSE optimization over an arbitrary hypothesis class $H_i$, which could e.g. represent decision trees or neural networks defined over the subset of features directly observable by them. The algorithm we analyze (``Greedy Orthogonal Regression'') then proceeds as follows: each agent $i$ will iteratively build up a set of features $\mathcal{F}_i$, initially consisting of only the predictions of their parents. Iteratively, agent $i$ will train a linear regression model over the features $\mathcal{F}_i$, and then perform squared error regression over their hypothesis class $H_i$ (which again may take directly observable features as inputs) to minimize the prediction error of a model $h$ with respect to the residuals of their current linear regression model --- or equivalently to find a model $h \in \cH_i$ that maximizes the correlation with the residuals of their linear regression model. If the correlation of the newly discovered model exceeds a threshold $\Delta$, then $h$ is added to the feature set $\mathcal{F}_i$, and agent $i$ re-solves the linear regression problem over this newly expanded feature set. Finally, at termination agent $i$ uses the optimal linear regression model over their final feature set. This resembles gradient-boosting-like  algorithms that have previously been used to obtain multiaccurate predictors \cite{multicalibration, multiaccuracy} but differs in that rather than simply adding a new component to an additive model at each iteration, it re-solves for the weights of previously added components as well. The result is that in addition to guaranteeing that the learned model is approximately multi-accurate with respect to $H_i$, it guarantees that it is exactly orthogonal with respect to its parents predictions, and exactly self-orthogonal --- the conditions we need to drive our analysis. Our final guarantee is similar to what we offer in the linear case: in any DAG that has sufficient depth $D$, learners are able to compete up to excess error $\eta$ with a benchmark that corresponds to the MSE of the best predictor in the span of $M$ hypothesis classes $H_i$ appearing in sequence in some path, where $\eta$ scales with $O\left(\frac{M}{\sqrt{D}} \right)$. Once again we prove our main results in distribution, and then give generalization bounds that extend our theorems to the finite sample setting in Appendix \ref{sec:nonlinear_generalization}.  

\paragraph{Lower Bounds} In Section \ref{sec:lower-bounds} we prove two lower bounds, both based on the same lower bound distribution (Definition \ref{def:lb}). Our upper bounds, informally speaking, show that if a DAG contains a path with length scaling as $O(1/\eta^2)$ that also contains each variable many times, then this is sufficient for an agent at the end of this path to have found a predictor that has excess error at most $\eta$ compared to the optimal predictor. We first give a lower bound construction (in Section \ref{sec:lower_bound_path}) witnessing an instance on which it really is necessary to see the entire sequence of variables $\Omega(1/\eta)$ many times in sequence before any agent is able to learn a predictor with excess error $\eta$.

Then, in Section \ref{sec:lower_bound_general}, we show that \emph{depth} is the a crucial property of a DAG for information aggregation. Specifically, we show that for any depth $D$, there is a distribution (with dimension $D+1$) that simultaneously has the property that:
\begin{enumerate}
    \item There is a perfect linear predictor (i.e. with MSE 0) depending on all of the variables, but
    \item For \emph{any} DAG of arbitrary topology and depth at most $D$, and for \emph{any} assignment of single variables to the agents in this DAG, \emph{no} agent is able to learn a predictor with MSE lower than $1/(D+1)$.
\end{enumerate}

We note that our lower bound implies that even in a hub-and-spokes topology in which the spoke vertices collectively see all the features, in the worst case there is a constant lower bound on the error of the hub --- even in the linear case in which least squares model on all features has zero error.

Our upper bounds promise diminishing excess error as a function of the depth of a graph --- this lower bound shows that no similar bound is possible for graph properties (like e.g. subtree size) that do not imply growing depth --- at least in the worst case. However, our experimental results (summarized next) demonstrate that on real datasets and natural graph topologies and feature sets, other topological properties can indeed strongly correlate with agent performance. 

\paragraph{Experiments} 
In Section~\ref{sec:experiments}, we illustrate our theoretical framework and results with
an experimental study on two real datasets. We consider both simple chain and tree topologies for
the agents, and for the latter consider both top-down and bottom-up aggregation of information.
Our findings confirm and illustrate the theory, with fast convergence to the optimal model with
the depth of the topology, but also show empirically that quantities such as subtree size can
correlate strongly performance, suggesting more nuanced theoretical analyses that move beyond
the worst case.

\subsection{Additional Related Work}
\paragraph{Opinion Dynamics and Consensus Models}
There is a large literature on opinion dynamics in networks, beginning with \cite{degroot1974reaching}. In the \cite{degroot1974reaching} model, each agent in a graph begins with some belief (a distribution over a finite number of states), and then at each iteration simply broadcasts their belief to the neighbors in the graph, and then updates their belief as a fixed convex combination of the beliefs of their neighbors --- i.e. using a simple heuristic update rule, rather than one designed to solve inference problems. \cite{degroot1974reaching} characterizes when this dynamic leads to consensus. \cite{bala1998learning} and \cite{ellison1993rules} also study heuristic models of opinion dynamics on graphs. \cite{golub2010naive} study information aggregation under DeGroot dynamics: they study a setting in which the true state is an unknown scalar value, and each agent receives an independent unbiased estimate of the state. They study conditions on the graph such that in the limit, DeGroot dynamics converge to the true state. \cite{akbarpour2017information} study a variant in which agents arrive and depart over time (so information flows ``forward'' as in our model) and study several different learning DeGroot style learning heuristics.

\paragraph{Bayesian Learning in Graphs}
Another line of work with roots going back to \citep{aumann1976agreeing,banerjee1992simple,bikhchandani1992theory} studies the dynamics of Bayesian learning --- i.e. settings in which all agents have a correct prior on the set of all joint observations and outcomes, and are able to perform perfect Bayesian updates. Aumann's classic ``agreement theorem'' \citep{aumann1976agreeing} states that two such Bayesians, who repeatedly exchange their posterior expectations (and update their posteriors as a result of the exchange) will in the limit converge to the same posterior. \citep{geanakoplos1982we} and later \citep{aaronson2005complexity} extend this limiting analysis to interactions of finite length, between multiple agents arranged on a communication network with concrete rates at which approximate agreement is reached. As was already noted in \cite{geanakoplos1982we} agreement among Bayesians does not imply information aggregation --- i.e. through exchanging beliefs, Bayesians might \emph{agree} on a posterior that is less accurate than they would have arrived at had they directly shared their observations. \cite{bo2023agreement} and \cite{kong2023false} give conditions on the prior distribution under which Bayesian agreement does imply information aggregation.

\cite{banerjee1992simple} and \cite{bikhchandani1992theory} study simple models in which Bayesian agents act sequentially, each taking an action that maximizes their utility in expectation over their posterior beliefs of a single, static state. In these models, agents observe the actions of all previous agents, and the main phenomenon of interest is ``herding'' --- i.e. when an infinite sequence of agents takes the wrong action, as the cumulative decisions of their predecessors overwhelm the strength of their own signal. An interesting feature of our model is that herding/information cascades cannot occur indefinitely --- this is a consequence of our information aggregation theorems, and a qualitative distinction between our work and models that study binary action games. \cite{acemoglu2011bayesian} extend the earlier herding results to general networks in which agents do not observe the actions of all predecessors, but only those adjacent to them in the network. They study conditions (expansion properties) under which information aggregation occurs in the infinite limit. \cite{mossel2015strategic,mossel2020social} study a variant of this style of model in which agents act repeatedly rather than sequentially, and study information aggregation in equilibrium.  \cite{jadbabaie2012non,shahrampour2013exponentially} study a related model in which agents make Bayesian updates with respect to their own signals, but then broadcast their posterior and simply average their posterior beliefs with those of their neighbors, in the style of a \cite{degroot1974reaching} model. They similarly give conditions sufficient for information aggregation. 

These models differ from ours in a number of ways. In their models agents, are trying to predict some static single ``state'' that they all have a signal about --- in contrast, in our model, there is a distribution on outcomes $y$ that is correlated with observed features $x$, and agents are interested in learning a model mapping $x$ to $y$ to minimize MSE. In some sense our model can be viewed as simultaneously analyzing an infinite number of interactions, each defined by a single $(x,y)$ pair.  To make the models tractable to analyze, this literature generally restricts itself to extremely simple distributions, in which the state of the world and action space are both binary and the observations made by each agent are sampled independently conditional on the underlying state (\cite{jadbabaie2012non} is an exception to this, allowing complex correlated signals, although \cite{shahrampour2013exponentially} once again assume independent signals in order to derive concrete rates). In high dimensional feature spaces such as the ones we consider, computing Bayesian updates (even just with respect to one's own observations as in \citep{jadbabaie2012non}) is both computationally and statistically intractable.  In contrast, we make no assumptions at all on the underlying distribution, which supports high dimensional features which can be arbitrarily correlated with one another and with the label. Rather than giving guarantees of Bayes optimality we give guarantees relative to the best model in a fixed hypothesis class. Our analysis uses only orthogonality conditions that would be satisfied by Bayesians, and so we are studying a tractable \emph{relaxation} of Bayesian rationality assumptions, just as \cite{collina2025tractable,collina2025collaborative} are. We also give results with concrete rates for finite graphs rather than limiting results for infinite graphs.

\paragraph{Information Aggregation Using Tractable Calibration Conditions}
Technically, our analysis is most closely related to a recent line of work that uses conditional calibration conditions for ensembling and information aggregation \cite{roth2023reconciling,globus2023multicalibration,globus2024model,collina2025collaborative,collina2025tractable}. Two recent works of \cite{collina2025tractable,collina2025collaborative} in particular show how to recover the same quantitative results in the style of Aumann's agreement theorem \citep{aumann1976agreeing,aaronson2005complexity} and its information aggregation properties \cite{bo2023agreement} without assuming full Bayesian rationality, but instead assuming only simple \emph{calibration conditions} that can be tractably enforced even in sequential adversarial settings \citep{noarov2023high,arunachaleswaran2025elementary}. Our work is distinguished from this literature in two ways: first,  this literature considers agents who engage in an extended interaction in which each agent participates multiple times, solving many learning problems across this interaction. In contrast, in our model, each agent learns only once, and then fixes their model: information aggregation needs to occur as predictions percolate through a network of agents, no pair of which might observe the same set of features. Second, this literature assumes  stronger conditional calibration conditions (``multi-calibration'' \citep{multicalibration}) that require learning  more complicated non-linear models, even to compete with linear benchmark classes. In contrast, our analysis requires only much weaker orthogonality conditions (also known as ``multi-accuracy'' \citep{multicalibration,multiaccuracy}) which not only are satisfiable by linear models but are \emph{already} satisfied by least squares regression, and so need no further algorithmic intervention. Our model could be used to simulate the repeated interaction studied in \cite{collina2025collaborative,collina2025tractable} by constructing a path graph in which each agent was repeatedly represented by alternating nodes --- for squared error minimization over additively separable classes in the batch setting, our results would recover some of those of \cite{collina2025collaborative} with substantially simpler learning algorithms (note however that \cite{collina2025collaborative} also studies more complex settings with abstract action spaces and utility functions, the online adversarial setting, and a more general weak learning condition).

\section{Preliminaries}

We consider a distributed learning setting with a set of $N$ agents, $\mathcal{A} = \{A_1, \ldots, A_N\}$. The agents are organized in a Directed Acyclic Graph (DAG), $G = (\mathcal{A}, E)$, where an edge $(A_j, A_i) \in E$ indicates that agent $A_i$ receives information from agent $A_j$. We sometimes also write $A_j \to A_i$ to denote this relationship. We denote the set of parents of agent $A_i$ as $\text{Pa}(i) = \{A_j \mid (A_j, A_i) \in E\}$. The agents learn in an order consistent with a topological sort of the DAG, with ties in the topological ordering broken arbitrarily.

Let $[d] = \{1, 2, \ldots, d\}$ be the set of indices for $d$ total features. Each agent $A_i \in \mathcal{A}$ is associated with a specific subset of these features, $S_i \subseteq [d]$.
We assume data points $(x, y)$ are drawn from an underlying distribution $\mathcal{D}$, where $x \in \mathbb{R}^d$ is a feature vector and $y \in \mathbb{R}$ is the target label. For any agent $A_i$, its local view of the features is $x_{S_i}$, which is the sub-vector of $x$ corresponding to the features indexed by $S_i$.

The agents learn  models in a sequential manner. Initially we restrict attention to \emph{linear} models, and then generalize this in Section \ref{sec:general_classes}. Each agent $A_i$ aims to train a model $f_i$ to minimize the squared error $\mathbb{E}_{(x,y) \sim \mathcal{D}} [(y - \hat{y}_i)^2]$, where $\hat{y}_i$ is the prediction made by agent $A_i$.

Each agent $A_i$ observes its local features $x_{S_i}$ and the set of predictions from its parents, $\{\hat{y}_j\}_{j \in \text{Pa}(i)}$. The model $f_i$ for agent $A_i$ is a linear function of its inputs:
\[ f_i(x_{S_i}, \{\hat{y}_j\}_{j \in \text{Pa}(i)}) = w_i^T x_{S_i} + \sum_{j \in \text{Pa}(i)} v_{ij} \hat{y}_j \]
where $w_i$ and the coefficients $v_{ij}$ are learnable parameters, chosen to minimize the mean squared error $\mathbb{E}[(y - f_i)^2]$. The prediction of agent $A_i$ is $\hat{y}_i = f_i(x_{S_i}, \{\hat{y}_j\}_{j \in \text{Pa}(i)})$. For any agent $A_i$ with no parents (a root of the DAG), its model is simply $f_i(x_{S_i}) = w_i^T x_{S_i}$.

Our primary interest lies in understanding the conditions under which the predictions $\hat{y}$ of some agents $A_i$ in the graph achieve accuracy comparable to that of an optimal linear model $f^*(x) = w^{*T}x$ trained centrally on all $d$ features. The optimal benchmark model $f^*$ minimizes $\mathbb{E}[(y - w^T x)^2]$.

We measure the accuracy of predictors by their mean squared error:

\begin{definition}[Mean Squared Error] The performance of a predictor $f$ is measured by its Mean Squared Error, which we denote by $\MSE{f}$:
\[ \MSE{f} = \mathbb{E}[(f(x) - y)^2]. \]
\end{definition}

We sometimes take norms of random variables. 
\begin{definition}[The $L_2$ norm of a random variable]
The $L_2$ norm of a random variable $Z$ is defined as the square root of its expected squared magnitude:
\[ \left\| Z \right\|_{L_2} := \sqrt{\mathbb{E}[Z^2]}. \]
This norm is used to measure the magnitude of random variables or the distance between them.
\end{definition}

Multiaccuracy is an orthogonality condition that is central to our analysis \cite{multicalibration, multiaccuracy}:

\begin{definition}[Multiaccuracy] A predictor $f: \mathbb{R}^d \to \mathbb{R}$ is said to be $\epsilon$-\emph{multiaccurate} with respect to a collection of real-valued functions $\mathcal{G} = \{g | g: \mathbb{R}^d \to \mathbb{R}\}$ if, for every function $g \in \mathcal{G}$, the following condition holds:
\[ \left|\mathbb{E}_{(x,y) \sim \mathcal{D}} [g(x)(f(x) - y)] \right|\leq \epsilon \]
This means that the predictor's errors $f(x)-y$ have low correlation with every function $g$ in the collection $\mathcal{G}$. If $\epsilon = 0$ we simply say that $f$ is multiaccurate with respect to $\mathcal{G}$.
\end{definition}

Another important condition is self-orthogonality --- i.e. self-referential multi-accuracy. This condition is implied by (but is substantially weaker than) calibration.
\begin{definition}
    A predictor $f: \mathbb{R}^d \to \mathbb{R}$ is said to be $\epsilon$-\emph{self-orthogonal} if:
\[ \left|\mathbb{E}_{(x,y) \sim \mathcal{D}} [f(x)(f(x) - y)] \right|\leq \epsilon \]
If $\epsilon = 0$ we simply say that $f$ is self-orthogonal.
\end{definition}
Self orthogonality is what \cite{gopalan2023loss} define as ``degree 2 calibration''.

\section{Linear Learners}
\label{sec:linear}

In this section we establish upper bounds for the accuracy of linear learners in a DAG by focusing our analysis on long \emph{paths} within the DAG. We will later see in Section \ref{sec:lower-bounds} that our focus on depth is well justified. 

Throughout our analysis, we work in the distributional regime (i.e. in the population or infinite-sample limit), assuming that agents can directly optimize over the true distribution $\mathcal{D}$. This allows us to isolate the structural aspects of the equilibrium and learning dynamics. Specifically, under this assumption, linear regression produces predictors that satisfy exact multiaccuracy and exact self-orthogonality, the two key properties that drive our analysis. The finite-sample and out-of-sample generalization guarantees corresponding to these results are established separately in Appendices~\ref{sec:generalization_prelim}–\ref{sec:generalization-linear}.

\subsection{Structural Properties of Squared Error Regression}
We start by establishing several useful structural properties of linear least squares regression. The first is that a predictor learned by solving a linear least squares regression problem yields a predictor whose residuals are orthogonal to each of its features, and is therefore ``multiaccurate'' with respect to all linear functions of its features. This is a standard property of linear regression and has also been noted as a means to obtain multiaccuracy \cite{gopalan2023loss}. This follows directly from the first-order optimality conditions of squared loss; we provide a proof here for completeness.

\begin{lemma}
\label{lem:lsq_multiaccurate_linear}
If $f(x)$ is the least squares estimator for a target $y$ using features $x \in \mathbb{R}^k$ (i.e., $f(x) = w^{*T}x$ where $w^* = \arg\min_w \mathbb{E}[(y - w^T x)^2]$), then the residuals of $f(x)$ are orthogonal to each of its features. In other words for each index $j \in [k]$:
\[ \mathbb{E}[x_j(y - f(x))] = 0 \]
This implies that $f$ is multiaccurate with respect to the collection of all linear functions $g(x) = a^T x$ for any $a \in \mathbb{R}^k$.
\end{lemma}

\begin{proof}
The least squares estimator $f(x) = w^{*T}x$ is found by minimizing the expected squared error $L(w) = \mathbb{E}[(y - w^T x)^2]$.
The first-order optimality condition requires the gradient of $L(w)$ with respect to $w$ to be zero at $w = w^*$.
The gradient is:
\[ \nabla_w L(w) = \mathbb{E}[\nabla_w (y - w^T x)^2] = \mathbb{E}[2(y - w^T x)(-x)] = -2 \mathbb{E}[x(y - w^T x)] \]
Setting $\nabla_w L(w^*) = 0$ we obtain the vector equation:
\[ \mathbb{E}[x(y - f(x))] = 0 \]
which establishes that the residuals of $f(x)$ are orthogonal to each of its features. 

Now, consider an arbitrary linear function $g(x) = a^T x = \sum_{j=1}^k a_j x_j$, where $a \in \mathbb{R}^k$ is a vector of coefficients.
We want to show that  $\mathbb{E}[g(x)(f(x) - y)] = 0$. We can compute:
\begin{align*}
\mathbb{E}[g(x)(f(x) - y)] &= \mathbb{E}\left[\left(\sum_{j=1}^k a_j x_j\right)(f(x) - y)\right] \\
&= \mathbb{E}\left[\sum_{j=1}^k a_j x_j (f(x) - y)\right] \\
&= \sum_{j=1}^k a_j \mathbb{E}[x_j (f(x) - y)] \quad \text{(by linearity of expectation)} \\
&= 0
\end{align*}
where the last inequality follows from our first result. 

This holds for any choice of vector $a$, and thus for any linear function $g(x) = a^T x$. Therefore, the least squares estimator $f(x)$ is multiaccurate with respect to the collection of all linear functions of $x$.
\end{proof}

An immediate but useful corollary of Lemma \ref{lem:lsq_multiaccurate_linear} is that the solution to a least squares estimation problem gives a model $f$ that is orthogonal to its own residuals:
\begin{corollary}
\label{cor:orthogonal}
    If $f(x)$ is the least squares estimator for a target $y$ using features $x \in \mathbb{R}^k$ (i.e., $f(x) = w^{*T}x$ where $w^* = \arg\min_w \mathbb{E}[(y - w^T x)^2]$), then $f(x)$ is \emph{self-orthogonal}:
    $$\mathbb{E}[f(x)(f(x) - y)] = 0$$
\end{corollary}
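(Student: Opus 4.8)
The plan is to observe that self-orthogonality is just the special case of multiaccuracy where the test function is the predictor itself. Lemma~\ref{lem:lsq_multiaccurate_linear} already establishes that the least squares estimator $f(x) = w^{*T}x$ is multiaccurate with respect to the entire collection of linear functions $g(x) = a^T x$ for every coefficient vector $a \in \mathbb{R}^k$. So the only work is to point out that $f$ is itself a member of this collection.

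Concretely, I would instantiate the multiaccuracy guarantee at the particular linear function $g(x) = w^{*T}x$, i.e.\ take $a = w^*$. Lemma~\ref{lem:lsq_multiaccurate_linear} then yields $\mathbb{E}[g(x)(f(x) - y)] = 0$, and since $g = f$ by this choice of $a$, this reads exactly as
\[
\mathbb{E}[f(x)(f(x) - y)] = 0,
\]
which is the claimed self-orthogonality.

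There is essentially no obstacle here, since the corollary is a direct specialization; the one point worth stating explicitly is that $f$ lies in the function class with respect to which it is guaranteed multiaccurate (a linear predictor is a linear function of its own features), which is what makes the ``self-referential'' substitution legitimate. I would keep the proof to two or three lines, simply invoking Lemma~\ref{lem:lsq_multiaccurate_linear} with $a = w^*$ rather than re-deriving the orthogonality condition from the first-order optimality of squared loss.
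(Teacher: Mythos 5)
Your proposal is correct and matches the paper's own proof exactly: both simply observe that Lemma~\ref{lem:lsq_multiaccurate_linear} gives multiaccuracy with respect to all linear functions $g(x) = a^T x$, and that $f$ itself is such a function (take $a = w^*$). Nothing further is needed.
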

\begin{proof}
    Lemma \ref{lem:lsq_multiaccurate_linear} establishes that $f$ is multi-accurate with respect to the class of all linear functions $g(x) = a^Tx$ --- and $f$ itself is such a function.
\end{proof}

Next we observe a useful identity that we will make use of several times: for any pair of predictors $f$ and $g$, we can express the MSE of $f$ in terms of:
\begin{enumerate}
    \item The MSE of $g$,
    \item The correlation that $g$ has with the residuals of $f$ (i.e. the multiaccuracy error that $f$ has with respect to $g$),
    \item The correlation that $f$ has with its own residuals, and
    \item The expected squared \emph{disagreement} between $f$ and $g$.
\end{enumerate}
This decomposition does not depend on linearity. 

\begin{lemma}[MSE Decomposition]
\label{lem:mse_decomposition}
For any two predictors $f$ and $g$, we have:
\[ \MSE{f} = \MSE{g} - 2\mathbb{E}[g(f - y)] + 2\mathbb{E}[f(f - y)]  - \mathbb{E}[(f-g)^2]. \]
\end{lemma}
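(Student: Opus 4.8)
The statement is a pure algebraic identity over expectations: it holds for \emph{any} square-integrable random variables $f$, $g$, and $y$, and requires no linearity, optimality, or distributional assumptions (consistent with the paper's remark that ``this decomposition does not depend on linearity''). So the plan is simply to manipulate the right-hand side into $\MSE{f}$. Rather than brute-force expanding everything into the six second moments $\mathbb{E}[f^2], \mathbb{E}[g^2], \mathbb{E}[y^2], \mathbb{E}[fg], \mathbb{E}[fy], \mathbb{E}[gy]$ and cancelling (which works but is bookkeeping-heavy and error-prone in the signs), I would regroup the terms to expose a difference of squares.

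Concretely, first I would combine the two ``residual correlation'' terms, which share the common factor $(f-y)$:
\[ -2\mathbb{E}[g(f-y)] + 2\mathbb{E}[f(f-y)] = 2\,\mathbb{E}[(f-g)(f-y)]. \]
Next I would fold in the disagreement term by factoring out $(f-g)$:
\[ 2\,\mathbb{E}[(f-g)(f-y)] - \mathbb{E}[(f-g)^2] = \mathbb{E}\!\left[(f-g)\bigl(2(f-y) - (f-g)\bigr)\right] = \mathbb{E}\!\left[(f-g)(f+g-2y)\right]. \]
The key observation is then that this last product is a difference of squares: writing $f-g = (f-y)-(g-y)$ and $f+g-2y = (f-y)+(g-y)$, we get $(f-g)(f+g-2y) = (f-y)^2 - (g-y)^2$. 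Taking expectations yields $\MSE{f} - \MSE{g}$, and adding back the leading $\MSE{g}$ term gives exactly $\MSE{f}$, establishing the identity.

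There is no genuine obstacle here beyond careful algebra; the lemma is elementary and its content lies entirely in how it will later be \emph{used} (e.g.\ in Lemma~\ref{lem:mse_improvement_closeness_dag}, where the multiaccuracy and self-orthogonality terms vanish). The only thing to watch is the sign on the $-\mathbb{E}[(f-g)^2]$ term, since an off-by-sign slip would produce the superficially similar but incorrect ``complete the square'' identity $\MSE{f} = \MSE{g} + 2\mathbb{E}[(f-g)(g-y)] + \mathbb{E}[(f-g)^2]$; the grouping above makes the correct sign transparent. I would present the three displayed manipulations and conclude, noting explicitly that no distributional assumptions are invoked so that the decomposition is available for arbitrary predictors.
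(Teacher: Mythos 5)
Your proof is correct and is essentially the same argument as the paper's: both are pure algebraic manipulations that pass through the identical intermediate identity $\MSE{f} = \MSE{g} + 2\mathbb{E}[(f-g)(f-y)] - \mathbb{E}[(f-g)^2]$, with the paper expanding $f-y=(f-g)+(g-y)$ from the left-hand side while you regroup the right-hand side via a difference of squares. The difference is purely one of direction and presentation, not of substance.
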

\begin{proof}
We start by expanding $\MSE{f} = \mathbb{E}[(f-y)^2]$. We introduce $g$ by writing $f-y = (f-g) + (g-y)$:
\begin{align*} \MSE{f} &= \mathbb{E}[((f-g) + (g-y))^2] \\&= \mathbb{E}[(f-g)^2 + (g-y)^2 + 2(f-g)(g-y)] \\&= \mathbb{E}[(f-g)^2] + \MSE{g} + 2\mathbb{E}[(f-g)(g-y)]. \end{align*}
We rewrite the term $2\mathbb{E}[(f-g)(g-y)]$. Since $g-y = (f-y) - (f-g)$, we have:
\begin{align*}2\mathbb{E}[(f-g)(g-y)] &= 2\mathbb{E}[(f-g)((f-y) - (f-g))] \\ &= 2\mathbb{E}[(f-g)(f-y) - (f-g)^2] \\ &= 2\mathbb{E}[(f-y)(f-g)] - 2\mathbb{E}[(f-g)^2]. \end{align*} Substituting this back into our expression for $\MSE{f}$: \begin{align*} \MSE{f} &= \mathbb{E}[(f-g)^2] + \MSE{g} + (2\mathbb{E}[(f-y)(f-g)] - 2\mathbb{E}[(f-g)^2]) \\ &= \MSE{g} + 2\mathbb{E}[(f-y)(f-g)] - \mathbb{E}[(f-g)^2]. \end{align*}
\end{proof}

From the decomposition in Lemma \ref{lem:mse_decomposition} we can derive several useful properties. First, if we have a model $f$ that is approximately multiaccurate with respect to a class of functions $\mathcal{G}$, and is also approximately self-orthogonal, then $f$ must have MSE bounded by the MSE of the \emph{best} function in $\mathcal{G}$.

\begin{lemma}[Approximate Multiaccuracy and Error]
\label{lem:multiaccurate_error_bound}
Let $f(x)$ be a predictor for a target $y$. Let $\mathcal{G}$ be a collection of functions, and let $g^*(x) = \arg\min_{g \in \mathcal{G}} \mathbb{E}[(g(x)-y)^2]$ be the function in $\mathcal{G}$ that minimizes expected squared error.

Suppose $f(x)$ satisfies:
\begin{enumerate}
    \item \textbf{Approximate Self-Orthogonality:} $|\mathbb{E}[f(x)(f(x)-y)]| = \delta_f$.
    \item \textbf{Approximate Multiaccuracy w.r.t. $\mathcal{G}$:} $\max_{g \in \mathcal{G}}|\mathbb{E}[g(x)(f(x)-y)]| = \epsilon_g$.
\end{enumerate}
Then, its mean squared error is bounded as follows:
\[ \mathbb{E}[(f(x)-y)^2] \le \mathbb{E}[(g^*(x)-y)^2] + 2\epsilon_g + 2\delta_f. \]
\end{lemma}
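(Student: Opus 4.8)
The plan is to invoke the MSE decomposition of Lemma~\ref{lem:mse_decomposition} directly, instantiating the generic second predictor $g$ with the benchmark minimizer $g^*$. This yields
\[ \MSE{f} = \MSE{g^*} - 2\mathbb{E}[g^*(f-y)] + 2\mathbb{E}[f(f-y)] - \mathbb{E}[(f-g^*)^2]. \]
Each of the four terms on the right now maps cleanly onto a hypothesis of the lemma, so the argument reduces to reading off the bounds term by term.

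First I would handle the benchmark term $\MSE{g^*}$, which is exactly the quantity $\mathbb{E}[(g^*(x)-y)^2]$ appearing on the right-hand side of the claim, so it passes through untouched. Next, the cross term $-2\mathbb{E}[g^*(f-y)]$ is controlled by approximate multiaccuracy: since $g^* \in \mathcal{G}$, the hypothesis $\max_{g \in \mathcal{G}}|\mathbb{E}[g(f-y)]| = \epsilon_g$ gives $|\mathbb{E}[g^*(f-y)]| \le \epsilon_g$, hence $-2\mathbb{E}[g^*(f-y)] \le 2\epsilon_g$. Similarly, the self-referential term $2\mathbb{E}[f(f-y)]$ is controlled by approximate self-orthogonality: $|\mathbb{E}[f(f-y)]| = \delta_f$ forces $2\mathbb{E}[f(f-y)] \le 2\delta_f$. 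Finally, the disagreement term $-\mathbb{E}[(f-g^*)^2]$ is the negative of a nonnegative quantity and so is at most zero; dropping it only weakens the bound. Summing these four estimates yields $\MSE{f} \le \MSE{g^*} + 2\epsilon_g + 2\delta_f$, which is the claim.

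I do not expect any genuine obstacle here: all of the analytic work is already packaged inside Lemma~\ref{lem:mse_decomposition}, and the present statement is essentially a matter of substituting $g = g^*$ and applying the two approximate-orthogonality hypotheses together with the sign of the disagreement term. The only points worth flagging are that the absolute-value hypotheses must be converted to one-sided upper bounds in the correct direction (we need upper bounds on $-2\mathbb{E}[g^*(f-y)]$ and on $2\mathbb{E}[f(f-y)]$, both of which follow immediately from $|\cdot| \le \epsilon_g$ and $|\cdot| = \delta_f$ respectively), and that discarding the nonpositive $-\mathbb{E}[(f-g^*)^2]$ term is precisely what converts the exact decomposition into an inequality in the stated direction.
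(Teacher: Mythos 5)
Your proposal is correct and follows exactly the paper's own argument: instantiate Lemma~\ref{lem:mse_decomposition} with $g = g^*$, drop the nonpositive term $-\mathbb{E}[(f-g^*)^2]$, and bound the two cross terms by $2\epsilon_g$ and $2\delta_f$ using the hypotheses. No gaps.
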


\begin{proof}
The result follows directly from Lemma~\ref{lem:mse_decomposition} instantiated with $f$ and $g^*$:
\begin{eqnarray*}
    \MSE{f} &=& \MSE{g^*} - 2\mathbb{E}[g^*(f - y)] + 2\mathbb{E}[f(f - y)]  - \mathbb{E}[(f-g)^2] \\
    &\leq&\MSE{g^*} - 2\mathbb{E}[g^*(f - y)] + 2\mathbb{E}[f(f - y)]
\end{eqnarray*}
where the inequality follows from the fact that $\mathbb{E}[(f-g)^2]$ is non-negative. 
Substituting the given conditions, $|\mathbb{E}[f(x)(f(x)-y)]| = \delta_f$ and $|\mathbb{E}[g^*(x)(f(x)-y)]| \leq \epsilon_g$, immediately yields:
\[ \MSE{f} \le \MSE{g^*} + 2\delta_f + 2\epsilon_g. \]
This is the desired result.
\end{proof}

Before we move on to our analysis of learners in a graph, we extract one more simple but very useful consequence of the decomposition in Lemma \ref{lem:mse_decomposition}. If a model $f$ is self-orthogonal as well as being multiaccurate with respect to $g$, then not only does $f$ have weakly lower MSE compared to $g$, but the extent to which the actual predictions made by $f$ differ from those made by $g$ can be bounded by the improvement in MSE that $f$ obtains over $g$. Thus two such predictors with similar MSE must actually make very similar predictions. This next lemma can be viewed  as a``pythagorean theorem'' for orthogonal projections and is a corollary of other similar pythagorean theorems --- e.g. \cite{gopalan2022low} gives one for ``degree 2 multicalibration'' --- but we give the (very simple) proof here for completeness.
\begin{lemma}
\label{lem:stability}
    Let $f(x)$ and $g(x)$ be predictors for a target $y$. Suppose $f(x)$ satisfies:
\begin{enumerate}
    \item \textbf{Self-Orthogonality:} $\mathbb{E}[f(x)(f(x)-y)] = 0$.
    \item \textbf{Multiaccuracy w.r.t. $g$:} $\mathbb{E}[g(x)(f(x)-y)] = 0$.
\end{enumerate}
Then:
$$\mathbb{E}[(f-g)^2] = \MSE{g} - \MSE{f}$$
\end{lemma}
\begin{proof}
    This follows directly from Lemma \ref{lem:mse_decomposition} and the hypotheses of the Lemma. We have:
    \begin{eqnarray*}
        \MSE{f} &=& \MSE{g} - 2\mathbb{E}[g(f - y)] + 2\mathbb{E}[f(f - y)]  - \mathbb{E}[(f-g)^2] \\
        &=& \MSE{g} - \mathbb{E}[(f-g)^2]
    \end{eqnarray*}
Solving for $\mathbb{E}[(f-g)^2]$ gives the result.
\end{proof}

\subsection{Analysis of Learners Along a Path}
\label{sec:linear-path}
We will now apply these structural results to analyze the evolution of MSE for a sequence of learners arranged in a path in an arbitrary DAG.

First we observe that the agents in a DAG learn predictors that are multiaccurate with respect to linear functions on their own inputs \emph{as well as the predictions of their own parents}.

\begin{lemma}[Per-Step Multiaccuracy in DAG]
\label{lem:per_step_multiaccuracy_dag}
For any agent $A_i$ in the DAG, its predictor $\hat{y}_i$ is multiaccurate with respect to the set of all linear functions of its inputs. Specifically, for any linear function $g(x_{S_i}, \{\hat{y}_j\}_{j \in \text{Pa}(i)})$:
\[ \mathbb{E}[g(x_{S_i}, \{\hat{y}_j\}_{j \in \text{Pa}(i)})(\hat{y}_i - y)] = 0. \]
This implies the following key orthogonality conditions:
\begin{enumerate}
    \item For each local feature $x_k$ where $k \in S_i$: $\mathbb{E}[x_k (\hat{y}_i - y)] = 0$.
    \item For each parent's prediction $\hat{y}_j$ where $j \in \text{Pa}(i)$: $\mathbb{E}[\hat{y}_j (\hat{y}_i - y)] = 0$.
\end{enumerate}
\end{lemma}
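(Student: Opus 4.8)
The plan is to recognize that agent $A_i$'s training problem is literally an instance of linear least squares regression, and then invoke Lemma \ref{lem:lsq_multiaccurate_linear} directly. First I would define the stacked input (feature) vector $z_i = (x_{S_i}, \{\hat{y}_j\}_{j \in \text{Pa}(i)})$, collecting agent $i$'s directly observed features together with the predictions of its parents. The crucial observation is that because the agents learn in a topological order, by the time $A_i$ trains, every parent $j \in \text{Pa}(i)$ has already committed to its model $f_j$; consequently each parent prediction $\hat{y}_j = f_j(\cdot)$ is a fixed, deterministic function of the underlying example (ultimately of $x$ alone, through a finite composition of its ancestors' models). Hence $z_i$ is a bona fide feature vector --- a function of $x$ --- and does not depend on the label $y$.

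With $z_i$ in hand, the agent's model $f_i(x_{S_i}, \{\hat{y}_j\}_{j \in \text{Pa}(i)}) = w_i^T x_{S_i} + \sum_{j \in \text{Pa}(i)} v_{ij}\hat{y}_j$ can be written as $f_i = \theta^T z_i$ for a stacked coefficient vector $\theta = (w_i, \{v_{ij}\}_{j \in \text{Pa}(i)})$, and by definition $\theta$ is chosen to minimize $\mathbb{E}[(y - \theta^T z_i)^2]$. This is exactly the least squares objective analyzed in Lemma \ref{lem:lsq_multiaccurate_linear}, with $z_i$ playing the role of the features. Applying that lemma immediately gives that $\hat{y}_i = f_i$ is multiaccurate with respect to the collection of all linear functions of $z_i$: for every linear $g$ we have $\mathbb{E}[g(z_i)(\hat{y}_i - y)] = 0$.

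The two itemized orthogonality conditions then follow by specializing $g$ to coordinate projections. Taking $g$ to be the projection onto the coordinate $x_k$ for $k \in S_i$ yields $\mathbb{E}[x_k(\hat{y}_i - y)] = 0$, and taking $g$ to be the projection onto the coordinate $\hat{y}_j$ for $j \in \text{Pa}(i)$ yields $\mathbb{E}[\hat{y}_j(\hat{y}_i - y)] = 0$. I do not anticipate a real obstacle here: the only point needing care is the ``feature versus label'' distinction above --- namely that the parents' predictions are legitimate inputs (functions of $x$, already fixed by the topological ordering) rather than quantities depending on $y$ --- and once this is established the statement is an immediate corollary of Lemma \ref{lem:lsq_multiaccurate_linear}.
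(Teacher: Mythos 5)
Your proposal is correct and is essentially the paper's own proof, which simply observes that the statement is a direct consequence of Lemma \ref{lem:lsq_multiaccurate_linear} together with the fact that each agent uses its parents' predictions as features. Your additional care in noting that the parent predictions are already-fixed functions of $x$ (by the topological ordering) is a worthwhile elaboration of the same argument, not a different route.
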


\begin{proof}
This is a direct consequence of Lemma~\ref{lem:lsq_multiaccurate_linear} and the fact that each agent $i$ in a DAG uses the predictions of its parents as features.
\end{proof}

Next we note that the MSE of agents along any path in a DAG must be monotonically decreasing:

\begin{lemma}[Monotonically Non-Increasing Error along a Path]
\label{lem:mse_non_increasing_dag}
For any path $P = (A_{p_1}, \ldots, A_{p_k})$ in the DAG, the mean squared error (MSE) of a predictor $\hat{y}_{p_i}$ is less than or equal to the MSE of its predecessor's predictor on the path, $\hat{y}_{p_{i-1}}$, for any $i \in \{2, \ldots, k\}$.
\[ \mathbb{E}[(\hat{y}_{p_i} - y)^2] \le \mathbb{E}[(\hat{y}_{p_{i-1}} - y)^2] \]
\end{lemma}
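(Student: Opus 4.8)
The plan is to exploit the fact that each agent $A_{p_i}$ on the path has direct access to its parent's prediction $\hat{y}_{p_{i-1}}$ as one of its input features, combined with the fact that $A_{p_i}$ chooses its model to minimize squared error over all linear functions of those inputs. The cleanest route is to invoke Lemma~\ref{lem:stability} with $f = \hat{y}_{p_i}$ and $g = \hat{y}_{p_{i-1}}$, since both of its hypotheses are already available from the structural lemmas just established.

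First I would verify the two orthogonality conditions required by Lemma~\ref{lem:stability}. Self-orthogonality of $\hat{y}_{p_i}$, i.e. $\E[\hat{y}_{p_i}(\hat{y}_{p_i} - y)] = 0$, follows directly from Corollary~\ref{cor:orthogonal}, since $\hat{y}_{p_i}$ is itself the least-squares solution over its inputs. Multiaccuracy of $\hat{y}_{p_i}$ with respect to $g = \hat{y}_{p_{i-1}}$, i.e. $\E[\hat{y}_{p_{i-1}}(\hat{y}_{p_i} - y)] = 0$, is exactly the second key orthogonality condition of Lemma~\ref{lem:per_step_multiaccuracy_dag}, which holds precisely because the parent's prediction $\hat{y}_{p_{i-1}}$ is among the features used by $A_{p_i}$. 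Here I would note that the path edge $A_{p_{i-1}} \to A_{p_i}$ guarantees $p_{i-1} \in \text{Pa}(p_i)$, so this prediction is genuinely one of $A_{p_i}$'s inputs.

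With both hypotheses in hand, Lemma~\ref{lem:stability} yields the identity
\[ \E[(\hat{y}_{p_i} - \hat{y}_{p_{i-1}})^2] = \MSE{\hat{y}_{p_{i-1}}} - \MSE{\hat{y}_{p_i}}. \]
Since the left-hand side is the expectation of a square and hence non-negative, rearranging immediately gives $\MSE{\hat{y}_{p_i}} \le \MSE{\hat{y}_{p_{i-1}}}$, which is the claim. (This identity is also the quantitative engine behind the later telescoping argument, so it is worth surfacing it here rather than only the inequality.)

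I do not expect a genuine obstacle: the only point requiring care is confirming that the edge along the path makes $A_{p_{i-1}}$ a parent of $A_{p_i}$, which is immediate from the definition of a path in the DAG. As a sanity check one can recover the same inequality without Lemma~\ref{lem:stability}: the linear rule that copies the parent (weight $1$ on $\hat{y}_{p_{i-1}}$ and $0$ on all other inputs) is a feasible competitor in $A_{p_i}$'s least-squares problem, so the minimizing $f_{p_i}$ can only do weakly better, giving $\MSE{\hat{y}_{p_i}} \le \MSE{\hat{y}_{p_{i-1}}}$ directly.
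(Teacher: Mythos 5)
Your proof is correct, but your primary route differs from the paper's. The paper proves this lemma with the elementary ``feasible competitor'' argument that you relegate to a parenthetical sanity check at the end: the pass-through model with $v_{p_i,p_{i-1}}=1$ and all other weights zero is available to $A_{p_i}$, has MSE equal to $\MSE{\hat{y}_{p_{i-1}}}$, and the least-squares minimizer can only do weakly better. Your main argument instead derives the inequality from the exact identity $\E[(\hat{y}_{p_i}-\hat{y}_{p_{i-1}})^2]=\MSE{\hat{y}_{p_{i-1}}}-\MSE{\hat{y}_{p_i}}$ via Lemma~\ref{lem:stability}, using self-orthogonality (Corollary~\ref{cor:orthogonal}) and multiaccuracy with respect to the parent (Lemma~\ref{lem:per_step_multiaccuracy_dag}); there is no circularity since all three ingredients are established before this lemma. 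The trade-off is this: your route proves strictly more --- the identity you surface is exactly the content of the paper's subsequent Lemma~\ref{lem:mse_improvement_closeness_dag}, so you are effectively folding that lemma in here --- but it leans on the exact first-order optimality conditions of least squares, whereas the paper's argument needs only that the agent optimizes over a class containing the pass-through model, which is a weaker and more robust hypothesis (e.g., it survives approximate optimization, where the exact orthogonality identity would not). Both arguments are valid as written.
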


\begin{proof}
By definition, agent $A_{p_i}$ chooses parameters for its model $$f_{p_i}(x_{S_{p_i}}, \{\hat{y}_j\}_{j \in \text{Pa}(p_i)}) = w_{p_i}^T x_{S_{p_i}} + \sum_{j \in \text{Pa}(p_i)} v_{p_i,j} \hat{y}_j$$ to minimize the expected squared error $\mathbb{E}[(y - f_{p_i})^2]$. 

Since $A_{p_i}$'s predecessor in the path, $A_{p_{i-1}}$ is one of its parents, one model available to $A_{p_i}$ is a ``pass-through'' model that puts weight $1$ on $\hat y_{p_{i-1}}$ (i.e. sets $v_{p_i, p_{i-1}} = 1$ and sets  $v_{p_i, j} = 0$ for all other parents $j \in \text{Pa}(p_i) \setminus \{p_{i-1}\}$ and sets $w_{p_i} = \mathbf{0}$). This model necessarily has MSE identical to that of $A_{p_{i-1}}$. Since agent $i$ chooses their model to minimize MSE across all linear models available to them, they obtain MSE weakly lower than this. 
\end{proof}

Since MSE is monotonically decreasing along any path and is also bounded below by 0, this immediately implies that over sufficiently long paths, there must be long subsequences of agents whose MSE is very similar. But for our information aggregation argument, we would like their predictions themselves to be similar. This is what we establish in the next lemma: for any two adjacent agents in any path in a DAG, the expected squared difference in their predictions is equal to their difference in MSE.

\begin{lemma}[MSE Improvement and Predictor Closeness on a Path]
\label{lem:mse_improvement_closeness_dag}
For any path $P = (A_{p_1}, \ldots, A_{p_k})$, the squared L2 distance between the predictions of consecutive agents on the path, $A_{p_i}$ and $A_{p_{i-1}}$, is equal to the improvement in Mean Squared Error (MSE) between them.
\[ \mathbb{E}[(\hat{y}_{p_i} - \hat{y}_{p_{i-1}})^2] = \mathbb{E}[(\hat{y}_{p_{i-1}} - y)^2] - \mathbb{E}[(\hat{y}_{p_i} - y)^2]. \]
\end{lemma}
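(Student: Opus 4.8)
The plan is to obtain this as an immediate consequence of the stability identity in Lemma~\ref{lem:stability}, applied with $f = \hat{y}_{p_i}$ (the downstream agent's predictor) and $g = \hat{y}_{p_{i-1}}$ (its path-predecessor's predictor). Lemma~\ref{lem:stability} states that whenever $f$ is self-orthogonal and multiaccurate with respect to $g$, we have $\mathbb{E}[(f-g)^2] = \MSE{g} - \MSE{f}$, which is exactly the claimed identity once we substitute $f$ and $g$. So the entire task reduces to verifying that the predictor $\hat{y}_{p_i}$ satisfies the two hypotheses of that lemma with $g = \hat{y}_{p_{i-1}}$.

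For the first hypothesis, self-orthogonality $\mathbb{E}[\hat{y}_{p_i}(\hat{y}_{p_i} - y)] = 0$, I would invoke either Corollary~\ref{cor:orthogonal} or, more directly, Lemma~\ref{lem:per_step_multiaccuracy_dag}: since $A_{p_i}$ solves a linear least squares problem over its inputs and its own prediction $\hat{y}_{p_i}$ is itself a linear function of those inputs, it is orthogonal to its own residuals. For the second hypothesis, multiaccuracy with respect to the specific function $\hat{y}_{p_{i-1}}$, the key observation is that because $P$ is a path in the DAG, $A_{p_{i-1}}$ is a parent of $A_{p_i}$, and therefore $\hat{y}_{p_{i-1}}$ is one of the features that $A_{p_i}$ regresses on. Lemma~\ref{lem:per_step_multiaccuracy_dag} (its second listed consequence) then gives precisely $\mathbb{E}[\hat{y}_{p_{i-1}}(\hat{y}_{p_i} - y)] = 0$.

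With both conditions in hand, applying Lemma~\ref{lem:stability} directly yields
\[
\mathbb{E}[(\hat{y}_{p_i} - \hat{y}_{p_{i-1}})^2] = \MSE{\hat{y}_{p_{i-1}}} - \MSE{\hat{y}_{p_i}} = \mathbb{E}[(\hat{y}_{p_{i-1}} - y)^2] - \mathbb{E}[(\hat{y}_{p_i} - y)^2],
\]
which is the statement to be proved. There is no real technical obstacle here; the only point requiring care is the structural observation that the path-predecessor $A_{p_{i-1}}$ is genuinely a \emph{parent} of $A_{p_i}$ (not merely an earlier agent in the topological order), so that its prediction is among $A_{p_i}$'s regression features and the parent-prediction multiaccuracy condition of Lemma~\ref{lem:per_step_multiaccuracy_dag} applies to it exactly. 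Everything else is a mechanical substitution into the already-established identity.
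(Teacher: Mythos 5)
Your proposal is correct and follows exactly the same route as the paper's own proof: self-orthogonality via Corollary~\ref{cor:orthogonal}, multiaccuracy with respect to the parent's prediction via Lemma~\ref{lem:per_step_multiaccuracy_dag}, and then a direct application of Lemma~\ref{lem:stability}. No differences worth noting.
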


\begin{proof}
Each agent is solving a squared error linear regression problem, and so the predictor of agent $A_{p_i}$ is self-orthogonal by Corollary \ref{cor:orthogonal}. By Lemma \ref{lem:per_step_multiaccuracy_dag}, agent $A_{p_i}$'s predictor is also multiaccurate with respect to the predictor of $A_{p_{i-1}}$, since this is one of its parents. Hence the result follows by an application of Lemma \ref{lem:stability}. 
\end{proof}

We are now ready to put the pieces together to formulate a statement of our main upper bound: Along any sufficiently long path in a DAG, the learners must aggregate information available to the union of the agents along some ``long'' subsequence of that path. The analysis will proceed as follows: As established in Lemma \ref{lem:mse_non_increasing_dag}, the MSE of agent predictors is monotonically decreasing along any path in the DAG. Since MSE is bounded from below, this necessarily implies the existence of long contiguous subsequences of the path on which MSE changes very little. By Lemma \ref{lem:mse_improvement_closeness_dag}, we know that along these subsequences, the actual predictors of the agents themselves change very little! Each of the agents' predictors is (perfectly) self-orthogonal and multi-accurate with respect to linear functions of the features that they themselves observe. But because each agents' predictor is close to the predictor of every other agent along this ``stable'' subsequence, each of these agents predictors must also be \emph{approximately} multiaccurate with respect to linear functions of the features observed by the other agents along the stable subsequence. Hence by Lemma \ref{lem:multiaccurate_error_bound}, they must have MSE that is competitive with the best linear function defined on the union of all of these features --- i.e they must have successfully aggregated information. Formalizing this intuition requires keeping track of the error propagation along these stable subsequences, which we do below. Because the multi-accuracy error will be non-zero for features held by \emph{other} agents in the stable subsequence, we will need to assume that two quantities are bounded: the norm of the comparison predictor and second moments of the features. These are both mild assumptions.

\begin{theorem}[Competing with Bounded-Norm Predictors on a Path]
\label{thm:small_improvement_path}
Consider any path $P = (A_{p_1}, \ldots, A_{p_k})$ in the DAG, and a subsequence from index $i$ to $j$ (where $1 \le i \le j \le k$).
Let $S_{p_{i..j}} = \bigcup_{m=i}^{j} S_{p_m}$ be the union of features available to agents on this subsequence.
Let $g(x) = \sum_{l \in S_{p_{i..j}}} \alpha_l x_l$ be any linear predictor using these features.
Suppose the total MSE improvement across this subsequence is $\epsilon_{path}$, defined as:
\[ \MSE{\hat{y}_{p_{i-1}}} - \MSE{\hat{y}_{p_j}} = \epsilon_{path}, \quad \text{where } \epsilon_{path} \ge 0. \]
Let the following quantities be bounded:
\begin{itemize}
    \item The L1 norm of the predictor's coefficients: $\sum_{l \in S_{p_{i..j}}} |\alpha_l| \le A_g$.
    \item The second moments of the features: $\mathbb{E}[x_l^2] \le M_X^2$ for all $l \in S_{p_{i..j}}$.
\end{itemize}
Then, the MSE of the final predictor on the subsequence, $\hat{y}_{p_j}$, is bounded with respect to $\MSE{g}$:
\[ \MSE{\hat{y}_{p_j}} \le \MSE{g} + C \sqrt{\epsilon_{path}}, \]
where $N_{path} = j-i+1$ is the number of agents on the subsequence, and $C$ is defined as:
\[ C = 2A_g M_X \sqrt{N_{path}}. \]
\end{theorem}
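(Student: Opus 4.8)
The plan is to show that the final predictor $\hat{y}_{p_j}$ is approximately multiaccurate with respect to $g$, and then invoke Lemma~\ref{lem:multiaccurate_error_bound}. Since $\hat{y}_{p_j}$ solves a least-squares regression it is \emph{exactly} self-orthogonal by Corollary~\ref{cor:orthogonal}, so in the language of that lemma $\delta_f = 0$; the entire task therefore reduces to controlling the multiaccuracy error $|\mathbb{E}[g(\hat{y}_{p_j}-y)]|$ and feeding it in as $\epsilon_g$.

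First I would fix a single feature index $l \in S_{p_{i..j}}$ and let $A_{p_m}$ (for some $i \le m \le j$) be an agent on the subsequence that directly observes it, so $l \in S_{p_m}$. By the per-step multiaccuracy guarantee (Lemma~\ref{lem:per_step_multiaccuracy_dag}) we have $\mathbb{E}[x_l(\hat{y}_{p_m} - y)] = 0$, so I can rewrite the quantity of interest purely as a disagreement term:
\[ \mathbb{E}[x_l(\hat{y}_{p_j} - y)] = \mathbb{E}[x_l(\hat{y}_{p_j} - \hat{y}_{p_m})]. \]
Cauchy--Schwarz together with the second-moment bound $\mathbb{E}[x_l^2] \le M_X^2$ then gives $|\mathbb{E}[x_l(\hat{y}_{p_j}-y)]| \le M_X \, \|\hat{y}_{p_j} - \hat{y}_{p_m}\|_{L_2}$.

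The crux --- and the step I expect to be the main obstacle --- is bounding the prediction drift $\|\hat{y}_{p_j} - \hat{y}_{p_m}\|_{L_2}$ uniformly over all choices of $m$ in the subsequence. Here I would chain the consecutive prediction differences with the $L_2$ triangle inequality and then substitute Lemma~\ref{lem:mse_improvement_closeness_dag}, which identifies each per-step drift with a per-step MSE decrement $\delta_t := \MSE{\hat{y}_{p_{t-1}}} - \MSE{\hat{y}_{p_t}} \ge 0$:
\[ \|\hat{y}_{p_j} - \hat{y}_{p_m}\|_{L_2} \le \sum_{t=m+1}^{j} \|\hat{y}_{p_t} - \hat{y}_{p_{t-1}}\|_{L_2} = \sum_{t=m+1}^{j} \sqrt{\delta_t} \le \sum_{t=i}^{j} \sqrt{\delta_t}. \]
The nonnegative increments telescope, $\sum_{t=i}^{j} \delta_t = \epsilon_{path}$, so applying Cauchy--Schwarz to the sum of square roots over its $N_{path} = j-i+1$ terms yields $\sum_{t=i}^{j}\sqrt{\delta_t} \le \sqrt{N_{path}\,\epsilon_{path}}$. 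This is where the $\sqrt{N_{path}}$ factor in $C$ originates, and it is essentially the only genuinely quantitative step; everything else is bookkeeping. The uniformity in $m$ matters because different features live on different agents, but since we always bound the partial sum by the full sum $\sum_{t=i}^{j}\sqrt{\delta_t}$ the bound is the same for every $l$.

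Finally I would assemble the multiaccuracy bound by expanding $g$ and using the triangle inequality over features: each feature contributes at most $M_X\sqrt{N_{path}\,\epsilon_{path}}$, so weighting by $|\alpha_l|$ and summing gives
\[ |\mathbb{E}[g(\hat{y}_{p_j}-y)]| \le \Big(\sum_{l \in S_{p_{i..j}}} |\alpha_l|\Big) M_X \sqrt{N_{path}\,\epsilon_{path}} \le A_g M_X \sqrt{N_{path}\,\epsilon_{path}}. \]
Plugging $\delta_f = 0$ and $\epsilon_g := A_g M_X \sqrt{N_{path}\,\epsilon_{path}}$ into Lemma~\ref{lem:multiaccurate_error_bound} (with $\mathcal{G} = \{g\}$, whose minimizer is $g$ itself) yields $\MSE{\hat{y}_{p_j}} \le \MSE{g} + 2\epsilon_g = \MSE{g} + 2 A_g M_X \sqrt{N_{path}}\,\sqrt{\epsilon_{path}}$, which is exactly the claimed bound with $C = 2 A_g M_X \sqrt{N_{path}}$.
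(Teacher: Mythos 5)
Your proposal is correct and follows essentially the same route as the paper's proof: per-step multiaccuracy to shift each feature's correlation onto a prediction-drift term, Cauchy--Schwarz with the second-moment bound, the telescoping/triangle-inequality chain through Lemma~\ref{lem:mse_improvement_closeness_dag}, a second Cauchy--Schwarz giving the $\sqrt{N_{path}\,\epsilon_{path}}$ factor, and finally the MSE decomposition (you route it through Lemma~\ref{lem:multiaccurate_error_bound}, the paper applies Lemma~\ref{lem:mse_decomposition} directly, which is the same thing). The only cosmetic difference is that you bound the partial sum $\sum_{t=m+1}^{j}\sqrt{\delta_t}$ by the full sum over the subsequence, which makes the uniformity over $m$ slightly more explicit.
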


\begin{proof}
    By Lemma~\ref{lem:mse_decomposition}, the MSE of $\hat{y}_{p_j}$ can be bounded relative to any other predictor $g$:
    \[ \MSE{\hat{y}_{p_j}} \le \MSE{g} + 2|\mathbb{E}[g(x)(\hat{y}_{p_j} - y)]| + 2|\mathbb{E}[\hat{y}_{p_j}(\hat{y}_{p_j} - y)]|. \]
    Let $\epsilon_{g} = \mathbb{E}[g(x)(\hat{y}_{p_j} - y)]$ and $\delta_{p_j} = \mathbb{E}[\hat{y}_{p_j}(\hat{y}_{p_j} - y)]$. By Lemma \ref{lem:per_step_multiaccuracy_dag} we have that $\delta_{p_j} = 0$. Hence it remains to bound $\epsilon_{g}$.

    We expand $\epsilon_{g}$ using the definition $g(x) = \sum_{l \in S_{p_{i..j}}} \alpha_l x_l$:
    \[ |\epsilon_{g}| \le \sum_{l \in S_{p_{i..j}}} |\alpha_l| |\mathbb{E}[x_l (\hat{y}_{p_j} - y)]|. \]
    For any feature $x_l$ with $l \in S_{p_m}$ for some $m \in \{i, \ldots, j\}$, Lemma~\ref{lem:per_step_multiaccuracy_dag} gives $\mathbb{E}[x_l (\hat{y}_{p_m} - y)] = 0$. Thus, we can write:
    \[ \mathbb{E}[x_l (\hat{y}_{p_j} - y)] = \mathbb{E}[x_l (\hat{y}_{p_j} - \hat{y}_{p_m})]. \]
    By Cauchy-Schwarz, $|\mathbb{E}[x_l (\hat{y}_{p_j} - \hat{y}_{p_m})]| \le \sqrt{\mathbb{E}[x_l^2]} \cdot \sqrt{\mathbb{E}[(\hat{y}_{p_j} - \hat{y}_{p_m})^2]}$.
    To bound $\sqrt{\mathbb{E}[(\hat{y}_{p_j} - \hat{y}_{p_m})^2]}$, we use a telescoping sum and the triangle inequality, finally applying  Lemma~\ref{lem:mse_improvement_closeness_dag}:
    \begin{align*}
    \left\| \hat{y}_{p_j} - \hat{y}_{p_m} \right\|_{L_2} &= \left\| \sum_{t=m+1}^{j} (\hat{y}_{p_t} - \hat{y}_{p_{t-1}}) \right\|_{L_2} \le \sum_{t=m+1}^{j} \left\| \hat{y}_{p_t} - \hat{y}_{p_{t-1}} \right\|_{L_2} \\
    &= \sum_{t=m+1}^{j} \sqrt{\Delta_{p_t}} 
    \end{align*}
where $\Delta_{p_t} = \MSE{\hat y_{p_{t}}}-\MSE{\hat y_{p_{t-1}}}$. 
    Applying Cauchy-Schwarz to the sum of square roots:
    \[ \left(\sum_{t=m+1}^{j} \sqrt{\Delta_{p_t}}\right)^2 \le (j-m) \sum_{t=m+1}^{j} \Delta_{p_t} \le N_{path} \epsilon_{path}. \]
    So, $\sqrt{\mathbb{E}[(\hat{y}_{p_j} - \hat{y}_{p_m})^2]} \le \sqrt{N_{path} \epsilon_{path}}$.
    Using the bounds $\sqrt{\mathbb{E}[x_l^2]} \le M_X$ and $\sum |\alpha_l| \le A_g$:
    \[ |\epsilon_{g}| \le \sum_{l \in S_{p_{i..j}}} |\alpha_l| \cdot (M_X \sqrt{N_{path} \epsilon_{path}}) = A_g M_X \sqrt{N_{path}} \sqrt{\epsilon_{path}}. \]

    This completes the proof.
    \end{proof}

Theorem \ref{thm:small_improvement_path} tells us that whenever we have a subsequence in which the cumulative MSE decrease is ``small'', then agents in the path have predictors that are competitive with the best (bounded norm) predictor that could have been defined on the union of the features available to the agents along the path, where ``competitive'' depends on the total MSE increase as well as the \emph{square root} of the length of the subsequence. How should we interpret this guarantee? The following corollary gives one way of viewing it: suppose features are distributed in the graph so that for every contiguous subsequence of length $M$, each of the $d$ features is available to \emph{some} agent in the subsequence. In this case Theorem \ref{thm:small_improvement_path} implies that if the DAG has a path of length $D$, then the agent at the end of this path must have excess error (compared to the optimal bounded norm predictor on \emph{all} features) bounded by $O\left(\frac{M}{\sqrt{D}} \right)$. 

\begin{corollary}[Overall Guarantee in Terms of Depth and Coverage]
\label{cor:best_linear_error}
Let $g^*(x) = (\beta^*)^T x$ be the optimal linear predictor over all $d$ features with bounded L1 norm, $\|\beta^*\|_1 \le A_{g^*}$.

Let $G$ be any DAG containing a path of length $D$. Consider any assignment of features to agents such that every contiguous interval of $M$ agents along this path cover all $d$ features. Then the final agent in the length-$D$ path learns a model that is competitive with $g^*$, up to error $$\eta = 2A_{g^*} M_X \sqrt{M} \sqrt{\frac{2M \cdot \MSE{\hat{y}_0}}{D}} = O\left(\frac{M}{\sqrt{D}} \right).$$
\end{corollary}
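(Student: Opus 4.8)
The plan is to reduce the corollary to a single application of Theorem~\ref{thm:small_improvement_path} by locating, via a pigeonhole/averaging argument, one window of $M$ consecutive agents along the path on which the cumulative MSE decrease is small, and then to push the resulting guarantee forward to the last agent using the monotonicity of MSE along a path (Lemma~\ref{lem:mse_non_increasing_dag}). The coverage hypothesis is precisely what makes $g^*$ an admissible comparator for such a window, since the $M$ agents in it jointly observe all $d$ features.

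First I would partition the length-$D$ path into $\lfloor D/M \rfloor \ge D/(2M)$ disjoint consecutive windows of $M$ agents each, and write $\epsilon_b \ge 0$ for the cumulative MSE improvement across window $b$ (the MSE of the predictor entering the window minus that of its final agent). Because consecutive windows share endpoints, these quantities telescope, and since MSE is non-increasing along the path and bounded below by $0$, their sum is at most $\MSE{\hat{y}_0}$, the MSE of the first predictor on the path. Averaging then yields a window $b^*$ with
\[
\epsilon_{b^*} \le \frac{\MSE{\hat{y}_0}}{\lfloor D/M \rfloor} \le \frac{2M \cdot \MSE{\hat{y}_0}}{D}.
\]

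Next I would instantiate Theorem~\ref{thm:small_improvement_path} on window $b^*$ with comparator $g = g^*$ (whose $d$ features are all present in this window by the coverage assumption, and whose coefficients satisfy $\|\beta^*\|_1 \le A_{g^*}$), taking $N_{path} = M$ and $\epsilon_{path} = \epsilon_{b^*}$. This bounds the MSE of the window's final agent by $\MSE{g^*} + 2 A_{g^*} M_X \sqrt{M}\,\sqrt{\epsilon_{b^*}} \le \MSE{g^*} + \eta$, with $\eta$ exactly the claimed quantity after substituting the bound on $\epsilon_{b^*}$. Finally, because the last agent of the whole path lies weakly after this window's endpoint, Lemma~\ref{lem:mse_non_increasing_dag} transfers the bound to it, giving $\MSE{\hat{y}_D} \le \MSE{g^*} + \eta$; treating $A_{g^*}, M_X, \MSE{\hat{y}_0}$ as constants gives $\eta = O(M/\sqrt{D})$.

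The argument is largely mechanical given Theorem~\ref{thm:small_improvement_path}, and I do not anticipate a genuine obstacle. The two points requiring care are (i) the averaging step, where accounting for the floor $\lfloor D/M \rfloor \ge D/(2M)$ (valid once $D \ge 2M$) is exactly what produces the factor of $2$ inside the square root, and (ii) the observation that the low-improvement window need not sit at the end of the path, so monotonicity of MSE is essential to carry the conclusion to the designated final agent rather than merely to the window's endpoint.
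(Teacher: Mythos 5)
Your proposal is correct and follows essentially the same argument as the paper: partition the path into $\lfloor D/M \rfloor$ disjoint length-$M$ windows, pigeonhole to find one with cumulative improvement at most $\MSE{\hat{y}_0}/\lfloor D/M \rfloor \le 2M\cdot\MSE{\hat{y}_0}/D$, apply Theorem~\ref{thm:small_improvement_path} with comparator $g^*$ (admissible by the coverage hypothesis), and extend to the final agent via Lemma~\ref{lem:mse_non_increasing_dag}. Your explicit attention to the floor bound and to why monotonicity is needed matches the paper's reasoning exactly.
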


\begin{proof}
The proof uses a pigeonhole argument on MSE improvement along a path with the guarantee of feature coverage on its subsequences.

Consider any path $P$ of length $D$. We partition this path into $K = \lfloor D/M \rfloor$ disjoint subsequences, each of length $M$. Let $\epsilon_k$ be the total MSE improvement across subsequence $P_k$.

The sum of improvements along the path is bounded by the initial MSE: $\sum_{k=1}^{K} \epsilon_k \le \MSE{\hat{y}_0}$. By the pigeonhole principle, there must exist at least one subsequence, say $P_{k^*}$, with an improvement $\epsilon_{k^*} \le \MSE{\hat{y}_0}/K = \MSE{\hat{y}_0}/\lfloor D/M \rfloor$. Let this subsequence end at agent $A_{j^*}$.

We apply Theorem~\ref{thm:small_improvement_path} to this subsequence $P_{k^*}$. The theorem bounds the MSE of its final predictor, $\hat{y}_{j^*}$, relative to any linear predictor $g$ on that subsequence's features. In particular, this includes the global optimal linear predictor $g^*$, so the error is bounded by $C \sqrt{\epsilon_{path}}$, where $C = 2A_{g^*} M_X \sqrt{M}$ and $\epsilon_{path} = \epsilon_{k^*}$. That is,
\begin{align*}
    \MSE{\hat{y}_{j^*}} &\le \MSE{g^*} + 2A_{g^*} M_X \sqrt{M} \sqrt{\frac{\MSE{\hat{y}_0}}{\lfloor D/M \rfloor}} \\
    &\le \MSE{g^*} + 2A_{g^*} M_X \sqrt{M} \sqrt{\frac{2M \cdot \MSE{\hat{y}_0}}{D}} \\
    &\le \MSE{g^*} + \eta.
\end{align*}

Finally, by Lemma~\ref{lem:mse_non_increasing_dag} (Monotonically Non-Increasing Error), this guarantee extends to all subsequent agents on the same path $P$.
\end{proof}





\subsection{Performance Guarantee under a Random Allocation Model}
\label{sec:random-linear}
So far we have proven guarantees for arbitrary DAGs in terms of their depth (longest path length), but the guarantees were necessarily abstract because we made no assumption about how variables were distributed amongst agents in the DAG. 
To make the guarantee of Theorem \ref{thm:small_improvement_path} and Corollary \ref{cor:best_linear_error} more concrete, we now analyze error under a random feature allocation model, which allows us to give theorems on the error of agent predictors as a function of the depth of the graph and parameters of the random feature allocation model. The random allocation model is also used in the experimental results of Section~\ref{sec:experiments}.

\begin{definition}[Random Feature Allocation Model]
Let the total set of features be $[d] = \{1, 2, \ldots, d\}$. We now assume that for each agent $A_i$ in the DAG, its feature set $S_i$ is generated randomly. Specifically, for each feature $k \in [d]$, we have $\mathbb{P}[k \in S_i] = p$ independently for all $i$ and $k$.
\end{definition}

The following proposition establishes that for any path in the DAG, all of its sufficiently long subsequences are likely to have full feature coverage.

\begin{proposition}[Full Feature Coverage on Subsequences]
\label{prop:feature_coverage}
Consider the random feature allocation model. Let $P$ be any path in the DAG of length $N_{path}$. For any desired failure probability $\delta \in (0, 1)$, if the subsequence length $M$ satisfies
\[ M \ge \frac{\ln(N_{path} d / M) + \ln(1/\delta)}{-\ln(1-p)}, \]
then with probability at least $1-\delta$, every consecutive subsequence of $P$ of length $M$ observes the full set of features.
\end{proposition}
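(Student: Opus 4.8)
The plan is to reduce the statement to a single union bound whose only probabilistic ingredient is the independence of the feature assignments. First I would fix one length-$M$ segment $W$ of the path and one feature $k \in [d]$ and bound the probability that $k$ is observed by \emph{no} agent in $W$. Because the events $\{k \in S_i\}$ are independent across the $M$ agents comprising $W$, each occurring with probability $p$, this probability is exactly $(1-p)^M$.

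Next I would count the relevant (segment, feature) pairs. Partitioning the length-$N_{path}$ path into disjoint consecutive segments of length $M$ produces at most $N_{path}/M$ segments; pairing these with the $d$ features and applying a union bound over all at most $(N_{path}/M)\,d$ bad events gives
\[ \Pr[\text{some segment fails to see some feature}] \;\le\; \frac{N_{path}\,d}{M}\,(1-p)^M. \]
All that remains is to convert the requirement that the right-hand side be at most $\delta$ into the stated lower bound on $M$: setting $\frac{N_{path} d}{M}(1-p)^M \le \delta$ and taking logarithms yields $\ln(N_{path} d/M) + M\ln(1-p) \le \ln\delta$. The only point requiring care is that $\ln(1-p) < 0$, so dividing through by it reverses the inequality and produces exactly
\[ M \;\ge\; \frac{\ln(N_{path} d/M) + \ln(1/\delta)}{-\ln(1-p)}, \]
with no slack, since each step is an equivalence.

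I expect the only genuine subtlety to be the bookkeeping behind the $N_{path}/M$ counting factor, i.e.\ the precise reading of ``consecutive subsequence.'' The clean bound above comes from covering the $\lfloor N_{path}/M\rfloor$ \emph{disjoint} length-$M$ blocks, which is exactly the notion used by the pigeonhole partition in the proof of Corollary~\ref{cor:best_linear_error}, so this is the operationally relevant version. If one instead wants literally every one of the $\approx N_{path}$ overlapping length-$M$ windows, the identical union bound applies with $N_{path}$ replacing $N_{path}/M$ (so $\ln(N_{path}d)$ replaces $\ln(N_{path}d/M)$); alternatively, covering disjoint blocks of length $M/2$ and noting that every length-$M$ window contains such a block recovers the stated form up to constants. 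Apart from this counting choice, the argument is a routine union bound with no further obstacle.
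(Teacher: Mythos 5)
Your proof is correct and follows essentially the same route as the paper's: a per-feature, per-window failure probability of $(1-p)^M$ by independence, a union bound over $d$ features and $\lfloor N_{path}/M\rfloor$ windows, and a logarithmic rearrangement to obtain the stated threshold on $M$. You have also correctly identified the one caveat that the paper's own proof shares — the $N_{path}/M$ counting factor covers only the \emph{disjoint} length-$M$ blocks rather than all overlapping windows, which is exactly the version consumed by the pigeonhole argument in Corollary~\ref{cor:best_linear_error} — so no further changes are needed.
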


\begin{proof}
First, we calculate the probability that a single feature is missed by a single subsequence. Then, we bound the probability that any feature is missed by any subsequence along the main path $P$ by a union bound.

Let's fix a single subsequence of length $M$ along $P$. Let's also fix a single feature $k \in [d]$. The probability that a single agent does \emph{not} observe feature $k$ is $1-p$. Since feature allocations are independent across agents, the probability that \emph{none} of the $M$ agents in the subsequence observe feature $k$ is $(1-p)^M$.

By a union bound over all $d$ features, the probability that this single subsequence misses at least one feature is at most $d(1-p)^M$.

Now, we take a union bound over all consecutive subsequences of length $L$ along the main path $P$. There are at most $N_{path}/M$ such disjoint subsequences. Let $E$ be the event that at least one of these subsequences fails to cover all features.
\[ \mathbb{P}[E] \le \sum_{i=1}^{\lfloor N_{path}/M \rfloor} \mathbb{P}[\text{subsequence } i \text{ fails}] \le \frac{N_{path}}{M} d(1-p)^M. \]

We want this failure probability to be at most $\delta$:
\[ \frac{N_{path} d}{L}(1-p)^M \le \delta. \]
Taking the natural logarithm and solving for $M$ (noting that $\ln(1-p)$ is negative) gives the desired bound:
\[ \ln(N_{path}d/M) + M\ln(1-p) \le \ln(\delta) \]
\[ M \ge \frac{\ln(N_{path}d/(M\delta))}{-\ln(1-p)}. \]
If $M$ satisfies this condition, the probability that all subsequences of length $M$ along path $P$ have full feature coverage is at least $1-\delta$.
\end{proof}

Finally we can combine the high probability ``Full Coverage'' bound that we just proved in Proposition \ref{prop:feature_coverage} with Corollary \ref{cor:best_linear_error} to obtain a concrete theorem in the random allocation model.

\begin{theorem}[Overall Guarantee in Random Model for DAGs]
\label{thm:overall_guarantee_random_model}
Consider the random feature allocation model with feature probability $p$. Let $g^*(x) = (\beta^*)^T x$ be the optimal linear predictor over all $d$ features with bounded L1 norm, $\|\beta^*\|_1 \le A_{g^*}$.
For any target error $\eta > 0$ and confidence level $\delta_L \in (0,1)$, there exists a required path length $N_0$ such that for any path $P$ in the DAG with length at least $N_0$, the following holds with probability at least $1-\delta_L$:
There exists an agent $A_{j^*}$ on the path $P$ (with $j^*$ being an index along the path, $j^* \le N_0$) such that for all subsequent agents $A_k$ on that same path,
\[ \MSE{\hat{y}_k} \le \MSE{g^*} + \eta. \]
The required path length $N_0$ is given by:
\[ N_0 \geq \left\lceil \frac{8M^2 \cdot \MSE{\hat{y}_0} (A_{g^*} M_X)^2}{\eta^2} \right\rceil, \quad \text{where} \quad M = \left\lceil \frac{\ln(N_0 d/M) + \ln(1/\delta_L)}{-\ln(1-p)} \right\rceil. \]
Solving this gives:
$$N_0 \geq O \left( \frac{(\log d + \log(1/p\eta))^2}{p^2 \eta^2}\right)$$
\end{theorem}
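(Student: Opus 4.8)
The plan is to obtain the theorem by stitching together the two results we have already proved: the high-probability full-coverage guarantee of Proposition~\ref{prop:feature_coverage}, which controls how the random feature allocation spreads variables along a path, and the deterministic depth/coverage bound of Corollary~\ref{cor:best_linear_error}, which turns full coverage on length-$M$ subsequences into an excess-error bound. The only genuinely new work is reconciling the circular dependence between the depth requirement $N_0$ and the coverage window $M$.

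First I would apply Proposition~\ref{prop:feature_coverage} to the length-$N_0$ prefix of the path $P$, taking $N_{path} = N_0$ and failure probability $\delta_L$. Choosing $M$ to be the smallest integer with $M \geq (\ln(N_0 d/M) + \ln(1/\delta_L))/(-\ln(1-p))$ guarantees, with probability at least $1-\delta_L$, that every consecutive length-$M$ subsequence of this prefix observes all $d$ features. I would condition on this full-coverage event for the rest of the argument, so that all subsequent statements are deterministic.

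Second, on the full-coverage event I would invoke Corollary~\ref{cor:best_linear_error} with depth $D = N_0$ and window $M$. The pigeonhole step inside that corollary produces an agent $A_{j^*}$ lying within the length-$N_0$ prefix (hence $j^* \leq N_0$, as the theorem requires) whose excess error over $\MSE{g^*}$ is at most $2A_{g^*}M_X\sqrt{M}\sqrt{2M\,\MSE{\hat y_0}/N_0}$. Demanding that this be at most $\eta$ and squaring rearranges exactly to the stated requirement $N_0 \geq 8M^2\,\MSE{\hat y_0}(A_{g^*}M_X)^2/\eta^2$. Lemma~\ref{lem:mse_non_increasing_dag} then propagates the guarantee forward: every agent $A_k$ with $k \geq j^*$ inherits $\MSE{\hat y_k} \leq \MSE{\hat y_{j^*}} \leq \MSE{g^*} + \eta$, which is the claimed conclusion.

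The main obstacle is resolving the implicit system, since $M$ is pinned down in terms of $\ln N_0$ while the lower bound on $N_0$ is quadratic in $M$. I would handle this with a fixed-point argument: using the elementary inequality $-\ln(1-p) \geq p$ gives $M = O((\ln(N_0 d) + \ln(1/\delta_L))/p)$, and substituting into $N_0 \geq 8M^2\,\MSE{\hat y_0}(A_{g^*}M_X)^2/\eta^2$ yields a self-referential inequality $N_0 \geq c\,(\ln(N_0 d) + \ln(1/\delta_L))^2$ with $c = \Theta(1/(p^2\eta^2))$ after absorbing the bounded constants $\MSE{\hat y_0}, A_{g^*}, M_X$. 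Because $\ln N_0$ grows only logarithmically, this admits a consistent solution whose magnitude I would verify through the ansatz $N_0 = \Theta(c\,(\ln(cd) + \ln(1/\delta_L))^2)$; substituting back shows $\ln N_0 = O(\ln c + \ln d + \ln\ln(\cdot))$, and since $\ln c = \Theta(\ln(1/(p\eta)))$ the right-hand side collapses (the doubly-logarithmic term is dominated) to $O((\log d + \log(1/(p\eta)))^2/(p^2\eta^2))$, matching the stated bound after treating $\delta_L$ as constant. Making the fixed-point slack explicit is the only delicate bookkeeping; no new idea beyond this is needed.
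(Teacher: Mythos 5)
Your proposal is correct and follows essentially the same route as the paper, which proves this theorem by simply citing Proposition~\ref{prop:feature_coverage} for the high-probability full-coverage event and Corollary~\ref{cor:best_linear_error} (plus Lemma~\ref{lem:mse_non_increasing_dag}) for the resulting excess-error guarantee. Your explicit resolution of the circular dependence between $N_0$ and $M$ via the fixed-point argument is a detail the paper's two-sentence proof leaves implicit, but it is the intended calculation and your bookkeeping checks out.
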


\begin{proof}
This follows from Corollary \ref{cor:best_linear_error} together with our high probability bound from Proposition \ref{prop:feature_coverage} establishing the necessary ``full feature coverage'' length.
\end{proof}

The theorem states that if the DAG is ``deep'' enough (i.e., contains a path of length at least $N_0$), then the agents at the end of that path will be nearly optimal.

\section{Extension to General Function Classes}
\label{sec:general_classes}

In this section, we extend our analysis beyond linear predictors. We show that the core results---monotonic error reduction and the ability to compete with a strong benchmark---can be preserved under this more flexible model. This allows the framework to be applied to a wider range of learning algorithms, such as decision trees or neural networks, operating on local feature subsets. Much of the analysis proceeds identically to Section \ref{sec:linear}, and where this is the case, we provide explicit references back to the linear analogues and defer full proofs to Appendix \ref{sec:nonlinear_proofs}. As in Section~\ref{sec:linear}, we work in the distributional regime; see Appendix~\ref{sec:nonlinear_generalization} for finite-sample guarantees.

\subsection{Model Definition with General Predictors}

We modify the learning process to allow for general function classes. Each agent $A_k$ in the DAG is associated with a local hypothesis class $\mathcal{H}_k$, which contains functions that map the agent's local features $x_{S_k}$ to a real value.
\begin{definition}[Hypothesis Class for Agent $k$]
    The \textbf{Hypothesis Class $\mathcal{H}_k$ for Agent $k$} is a set of functions $h: \mathbb{R}^{|S_k|} \to \mathbb{R}$. For example, $\mathcal{H}_k$ could be the set of all decision trees of a certain depth on the features in $S_k$, or a class of neural networks.
\end{definition}

The prediction for any agent $A_k$ is constructed based on the information available to it: its local functions from $\mathcal{H}_k$ and the predictions $\{\hat{y}_p\}$ from its parent agents $p \in \text{Pa}(k)$. The core idea is to build a \textbf{pool of features} $\mathcal{F}_k$ for each agent, composed of functions from its own class $\mathcal{H}_k$ and the predictors from its parents. The agent's final predictor, $\hat{y}_k$, is then defined as the orthogonal projection of the outcome $Y$ onto the linear span of this feature pool, $\text{span}(\mathcal{F}_k)$. The pool of features is built up in a greedy manner to guarantee that the solution that each agent $k$ finds satisfies approximate multiaccuracy with respect to $\mathcal{H}_k$.

\begin{remark}[Implementation via Least Squares]
Algorithmically, finding the orthogonal projection of $Y$ onto the span of a set of features $\mathcal{F}_k$ is equivalent to solving a standard linear least squares regression problem. The goal is to find coefficients $\beta$ that minimize the squared error $\|Y - \sum_{h \in \mathcal{F}_k} \beta_h h\|_2^2$. The resulting predictor, $\hat{y}_k = \sum_{h \in \mathcal{F}_k} \beta_h h$, is the orthogonal projection.
\end{remark}

The fact that each agent continues to solve a least squares optimization problem over some class of features guarantees that each agent's predictor satisfies self-orthogonality, and lets us apply the same analytic structure as in Section \ref{sec:linear}.

\subsection{The Greedy Orthogonal Regression Algorithm}
\label{sec:greedy_orthogonal_regression}

We here define the algorithm that each agent will use, which we call the ``Greedy Orthogonal Regression Algorithm''. Each agent $A_k$ will greedily build up a feature set $\mathcal{F}_k$, initially consisting of only the predictions of their parents. At each iteration, the tentative hypothesis maintained by $A_k$ will be the least-squares regression predictor using $\mathcal{F}_k$ as features. We assume that each agent has the ability to optimize over their own hypothesis class $\mathcal{H}_k$. In particular, at each iteration, they find the $h \in \mathcal{H}_k$ that is maximally correlated with the \emph{residuals} of their tentative hypothesis (this can be implemented using a squared error regression algorithm). If this correlation is smaller than a given threshold $\Delta$, they use their current hypothesis; otherwise they add $h$ to their feature representation $\mathcal{F}_k$ and resolve for a new hypothesis, by solving least squares regression over this larger feature space. Our algorithm resembles existing algorithms for finding multi-accurate predictors \cite{multicalibration,multiaccuracy}, but distinguishes itself by resolving for the optimal linear regression function over the span of its feature representation $\mathcal{F}_k$ at every round (rather than simply accumulating additive updates), which will be important for our application. 

As we show, this algorithm has several important properties. First, it is guaranteed to quickly find a hypothesis that is $\Delta$-approximately multi-accurate with respect to $\mathcal{H}_k$, in a number of rounds (and therefore over a number of features) that scales as $O(1/\Delta^2)$. Moreover, because our predictor is always the optimal linear predictor over its feature set $\mathcal{F}_k$, it is always exactly self-orthogonal and exactly multi-accurate with respect to the predictors of its parents. These properties will drive our analysis. 

\begin{definition}[Greedy Orthogonal Regression Algorithm]
\label{prop:greedy_orthogonal_regression}
The predictor $\hat{y}_k$ for agent $k$ is constructed by the following iterative procedure:
\begin{enumerate}
    \item \textbf{Initialization:} Start with the feature pool containing only parent predictors, $\mathcal{F}_k = \{\hat{y}_p\}_{p \in \text{Pa}(k)}$. The initial predictor is the projection onto this pool: $\hat{y}_k = \text{proj}_{\text{span}(\mathcal{F}_k)}(Y)$.
    
    \item \textbf{Greedy Selection Loop:} The algorithm repeatedly enhances the feature pool $\mathcal{F}_k$.
    \begin{enumerate}
        \item Let the current residual be $R = Y - \hat{y}_k$.
        \item Find the function $h_{\text{next}} \in \mathcal{H}_k$ that is most correlated with the current residual:
        \[ h_{\text{next}} = \arg\max_{h \in \mathcal{H}_k} |\mathbb{E}[h(x_{S_k}) \cdot R]| \]
        \item \textbf{Termination Condition:} If $|\mathbb{E}[h_{\text{next}} \cdot R]| < \Delta$, the loop terminates and the current predictor $\hat{y}_k$ is final.
        \item \textbf{Update:} Otherwise, add the selected function to the pool, $\mathcal{F}_k \leftarrow \mathcal{F}_k \cup \{h_{\text{next}}\}$, and update the predictor by re-projecting onto the expanded pool: $\hat{y}_k \leftarrow \text{proj}_{\text{span}(\mathcal{F}_k)}(Y)$. Then, repeat the loop.
    \end{enumerate}

\end{enumerate}
By construction, the final residual $Y - \hat{y}_k$ is orthogonal to every function in the final feature pool $\mathcal{F}_k$. This orthogonality is key to the analysis that follows.
\end{definition}

First, we establish that the algorithm terminates quickly --- and in particular does not build up a very large feature set $\mathcal{F}_k$.
\begin{proposition}[Convergence Bound]
\label{prop:convergence_bound}
Assume all functions $h \in \mathcal{H}_k$ are normalized such that $\mathbb{E}[h(x_{S_k})^2] \le 1$. The number of iterations, $T_k$, performed by agent $k$'s instantiation of the Greedy Orthogonal Regression Algorithm before termination is bounded by:
\[ T_k \le \frac{\mathbb{E}[(Y - \hat{y}_{k,0})^2]}{\Delta^2}, \]
where $\hat{y}_{k,0}$ is the initial predictor based only on parent predictions.
\end{proposition}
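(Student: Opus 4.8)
The plan is to treat the squared residual norm $\Phi_t = \mathbb{E}[(Y - \hat{y}_k^{(t)})^2]$ at the start of iteration $t$ as a potential function, and to show that every iteration that does \emph{not} trigger the termination condition decreases this potential by at least $\Delta^2$. Since $\Phi_t \ge 0$ for all $t$ and $\Phi_0 = \mathbb{E}[(Y - \hat{y}_{k,0})^2]$, a nonnegative potential that drops by a fixed amount $\Delta^2$ on each executed step can survive at most $\Phi_0/\Delta^2$ steps, which is exactly the claimed bound on $T_k$.

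The crux is the per-step progress bound. At iteration $t$, let $R_t = Y - \hat{y}_k^{(t)}$ denote the current residual and let $h_{\text{next}}$ be the selected function; because the loop did not terminate we have $|\mathbb{E}[h_{\text{next}} R_t]| \ge \Delta$. After the update, $\hat{y}_k^{(t+1)}$ is the projection of $Y$ onto $\text{span}(\mathcal{F}_k \cup \{h_{\text{next}}\})$, so its MSE is no larger than that of any particular element of this span. I would compare against the one-dimensional correction $\hat{y}_k^{(t)} + c\, h_{\text{next}}$, which lies in the enlarged span because $\hat{y}_k^{(t)} \in \text{span}(\mathcal{F}_k)$ and $h_{\text{next}}$ has just been adjoined, optimize over the scalar $c$, and thereby extract the standard rank-one improvement.

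Concretely, expanding $\mathbb{E}[(R_t - c\, h_{\text{next}})^2] = \Phi_t - 2c\,\mathbb{E}[h_{\text{next}} R_t] + c^2 \mathbb{E}[h_{\text{next}}^2]$ and minimizing over $c$ yields the value $\Phi_t - (\mathbb{E}[h_{\text{next}} R_t])^2 / \mathbb{E}[h_{\text{next}}^2]$. Optimality of the projection then gives
\[ \Phi_{t+1} \le \Phi_t - \frac{(\mathbb{E}[h_{\text{next}} R_t])^2}{\mathbb{E}[h_{\text{next}}^2]}. \]
Invoking the normalization $\mathbb{E}[h_{\text{next}}^2] \le 1$ and the non-termination bound $|\mathbb{E}[h_{\text{next}} R_t]| \ge \Delta$ lower-bounds the improvement by $\Delta^2$, establishing the per-step claim. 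I do not anticipate a genuine obstacle here; the only point requiring care is the justification that re-projecting onto the larger span performs at least as well as the explicit rank-one update, which holds because $\hat{y}_k^{(t)} + c\, h_{\text{next}}$ ranges over a subset of $\text{span}(\mathcal{F}_k \cup \{h_{\text{next}}\})$ while the projection minimizes MSE over that entire subspace. Telescoping the per-step decrease over all executed iterations and using $\Phi_{T_k} \ge 0$ then gives $T_k \Delta^2 \le \Phi_0 - \Phi_{T_k} \le \Phi_0$, i.e. the stated bound $T_k \le \mathbb{E}[(Y - \hat{y}_{k,0})^2]/\Delta^2$.
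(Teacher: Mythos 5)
Your proof is correct and follows essentially the same route as the paper's: both establish that each non-terminating iteration reduces the MSE by at least $\Delta^2$ and then telescope against the nonnegative residual energy, starting from $\mathbb{E}[(Y-\hat{y}_{k,0})^2]$. The only cosmetic difference is that the paper computes the exact per-step improvement via the Pythagorean theorem and the component of $h_s$ orthogonal to the current span, whereas you lower-bound the improvement by comparing the new projection to the feasible rank-one update $\hat{y}_k^{(t)} + c\,h_{\text{next}}$ optimized over $c$ --- both arguments use the normalization $\mathbb{E}[h^2]\le 1$ and the non-termination condition $|\mathbb{E}[h_{\text{next}}R_t]|\ge\Delta$ to arrive at the same $\Delta^2$ bound.
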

\begin{proof}
Let $\hat{y}_{s-1}$ be the predictor at the start of an iteration and $\hat{y}_s$ be the predictor after adding a new function $h_s$. The new predictor $\hat{y}_s$ is the projection of $Y$ onto $\text{span}(\mathcal{F}_{s-1} \cup \{h_s\})$. By the Pythagorean theorem for projections, the reduction in MSE is:
\[ \MSE{\hat{y}_{s-1}} - \MSE{\hat{y}_s} = \mathbb{E}[(\hat{y}_s - \hat{y}_{s-1})^2]. \]
The update term, $\hat{y}_s - \hat{y}_{s-1}$, is the projection of the residual $R_{s-1} = Y - \hat{y}_{s-1}$ onto the component of $h_s$ that is orthogonal to the previous subspace, $\text{span}(\mathcal{F}_{s-1})$. Let this component be $h_s^{\perp}$. The squared magnitude of this projection is $(\mathbb{E}[R_{s-1} \cdot h_s^{\perp}])^2 / \mathbb{E}[(h_s^{\perp})^2]$.
Since $R_{s-1}$ is orthogonal to $\text{span}(\mathcal{F}_{s-1})$, we have $\mathbb{E}[R_{s-1} \cdot h_s] = \mathbb{E}[R_{s-1} \cdot h_s^{\perp}]$. As long as the algorithm has not terminated, $|\mathbb{E}[R_{s-1} \cdot h_s]| \ge \Delta$. With the normalization assumption $\mathbb{E}[(h_s^{\perp})^2] \le \mathbb{E}[h_s^2] \le 1$, the MSE reduction at each step is at least $\Delta^2$.
\[ \MSE{\hat{y}_{s-1}} - \MSE{\hat{y}_s} \ge \Delta^2. \]
The total MSE reduction after $T_k$ iterations is at least $T_k \Delta^2$. Since the total reduction cannot exceed the initial MSE, $\mathbb{E}[(Y - \hat{y}_{k,0})^2]$, we have $T_k \Delta^2 \le \mathbb{E}[(Y - \hat{y}_{k,0})^2]$, which gives the desired bound.
\end{proof}

Morever, the halting condition guarantees that at termination, the final predictor is approximately multiaccurate with respect to $\mathcal{H}_k$ (as $h_{next}$ is the maximally-correlated function with the final residual $Y-\hat{y}_k$, the inequality must hold for all functions $h\in \mathcal{H}_k$):

\begin{restatable}[$\Delta$-Multiaccuracy]{lemma}{deltamultiacc}\label{lem:achieved_multiaccuracy}
    The final predictor $\hat{y}_k$ produced by the Greedy Orthogonal Regression Algorithm (Definition~\ref{prop:greedy_orthogonal_regression}) is $\Delta$-multiaccurate with respect to the function class $\mathcal{H}_k$. That is, for all functions $h \in \mathcal{H}_k$,
\[ |\mathbb{E}[h(x_{S_k}) \cdot (Y - \hat{y}_k)]| \le \Delta. \]
\end{restatable}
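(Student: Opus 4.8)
The plan is to read the claim directly off the termination condition of the greedy loop in Definition~\ref{prop:greedy_orthogonal_regression}. The key observation is that the algorithm exits its loop precisely when the \emph{most} correlated hypothesis in $\mathcal{H}_k$ already has correlation below $\Delta$ with the current residual; since that hypothesis realizes the \emph{maximum} correlation over the class, every other hypothesis must have at least as small a correlation, and the multiaccuracy bound follows uniformly.

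First I would invoke Proposition~\ref{prop:convergence_bound} to guarantee that the algorithm halts after finitely many iterations $T_k \le \mathbb{E}[(Y - \hat{y}_{k,0})^2]/\Delta^2$. This ensures both that the final predictor $\hat{y}_k$ is well-defined and, crucially, that the loop exits via the explicit termination check rather than running indefinitely. Let $R = Y - \hat{y}_k$ denote the final residual.

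Next I would use the precise form of the termination check. On the iteration at which the loop exits, the algorithm has computed $h_{\text{next}} = \arg\max_{h \in \mathcal{H}_k} |\mathbb{E}[h(x_{S_k}) \cdot R]|$ and halted because $|\mathbb{E}[h_{\text{next}}(x_{S_k}) \cdot R]| < \Delta$. (If no update is ever performed, the same check is applied to the residual of the initial parent-only projection, so the argument is unchanged.) Since $h_{\text{next}}$ attains the maximum of $|\mathbb{E}[h(x_{S_k}) \cdot R]|$ over all $h \in \mathcal{H}_k$, it follows immediately that for every $h \in \mathcal{H}_k$,
\[ |\mathbb{E}[h(x_{S_k})(Y - \hat{y}_k)]| \le |\mathbb{E}[h_{\text{next}}(x_{S_k})(Y - \hat{y}_k)]| < \Delta, \]
which is the desired bound (in fact with strict inequality).

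There is essentially no hard step here: the content of the lemma is captured entirely by the definition of $h_{\text{next}}$ as an $\arg\max$ together with the halting threshold $\Delta$. The only point requiring a moment's care is confirming that the algorithm actually \emph{reaches} the termination condition, which is exactly what the convergence bound of Proposition~\ref{prop:convergence_bound} supplies; without that guarantee the final residual $R$ need not satisfy the threshold. Given termination, the maximality of $h_{\text{next}}$ does all of the remaining work.
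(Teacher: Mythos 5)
Your proof is correct and follows essentially the same route as the paper: both read the bound directly off the termination condition together with the maximality of $h_{\text{next}}$ over $\mathcal{H}_k$. Your additional appeal to Proposition~\ref{prop:convergence_bound} to confirm the loop actually terminates is a reasonable bit of extra care but does not change the substance of the argument.
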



We now give a decomposition generalizing Lemma \ref{lem:mse_decomposition} that relates the MSE of agent $k$'s prediction to the MSE of an arbitrary benchmark function, using the fact that the residual $(y-\hat{y}_k)$ is orthogonal to the subspace $\text{span}(\mathcal{F}_k)$.

\begin{restatable}[General MSE Decomposition]{lemma}{genmsedecomp} \label{lem:gen_mse_decomposition}
    Let $\hat{y}_k$ be the predictor for agent $k$, which is the orthogonal projection of $Y$ onto a subspace $\text{span}(\mathcal{F}_k)$. Let $g$ be any benchmark predictor, and decompose it as $g = g_{\text{pool}} + g_{\text{rest}}$, where $g_{\text{pool}} = \text{proj}_{\text{span}(\mathcal{F}_k)}(g)$ is the projection of $g$ onto the feature subspace, and $g_{\text{rest}}$ is the orthogonal component. Then the MSE of $\hat{y}_k$ is related to the MSE of $g$ by:
\[ \MSE{\hat{y}_k} = \MSE{g} - \mathbb{E}[(\hat{y}_k - g)^2] + 2\mathbb{E}[(Y - \hat{y}_k)g_{\text{rest}}]. \]
\end{restatable}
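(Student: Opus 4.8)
The plan is to reduce this identity to the general MSE decomposition already established in Lemma~\ref{lem:mse_decomposition}, exploiting the two orthogonality facts that follow from $\hat{y}_k$ being the orthogonal projection of $Y$ onto $\text{span}(\mathcal{F}_k)$. The defining property of that projection is that the residual $Y-\hat{y}_k$ is orthogonal, in the $L_2$ inner product, to every element of $\text{span}(\mathcal{F}_k)$; that is, $\mathbb{E}[(Y-\hat{y}_k)\,f]=0$ for all $f\in\text{span}(\mathcal{F}_k)$. I will use exactly two instances of this fact, one for $\hat{y}_k$ itself and one for $g_{\text{pool}}$.

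First I would instantiate Lemma~\ref{lem:mse_decomposition} with predictor $f=\hat{y}_k$ and benchmark $g$, obtaining $\MSE{\hat{y}_k}=\MSE{g}-2\mathbb{E}[g(\hat{y}_k-Y)]+2\mathbb{E}[\hat{y}_k(\hat{y}_k-Y)]-\mathbb{E}[(\hat{y}_k-g)^2]$. Since $\hat{y}_k\in\text{span}(\mathcal{F}_k)$, the self-orthogonality term vanishes: $\mathbb{E}[\hat{y}_k(\hat{y}_k-Y)]=-\mathbb{E}[\hat{y}_k(Y-\hat{y}_k)]=0$. This is the same phenomenon recorded in Corollary~\ref{cor:orthogonal} — a least-squares/projection solution is automatically self-orthogonal — now applied to a projection onto the general feature subspace rather than onto a coordinate span.

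The remaining work is the cross term $-2\mathbb{E}[g(\hat{y}_k-Y)]$. Here I would split $g=g_{\text{pool}}+g_{\text{rest}}$ and note that $g_{\text{pool}}\in\text{span}(\mathcal{F}_k)$, so the projection orthogonality gives $\mathbb{E}[g_{\text{pool}}(\hat{y}_k-Y)]=-\mathbb{E}[g_{\text{pool}}(Y-\hat{y}_k)]=0$. Only the orthogonal component survives, so $-2\mathbb{E}[g(\hat{y}_k-Y)]=2\mathbb{E}[g_{\text{rest}}(Y-\hat{y}_k)]$. Substituting both simplifications back into the decomposition yields exactly $\MSE{\hat{y}_k}=\MSE{g}-\mathbb{E}[(\hat{y}_k-g)^2]+2\mathbb{E}[(Y-\hat{y}_k)g_{\text{rest}}]$, as claimed.

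There is no real obstacle here: the statement is a bookkeeping identity, and the only things to watch are the signs (I would fix residuals consistently as $Y-\hat{y}_k$ throughout) and applying the projection orthogonality only to objects that genuinely lie in $\text{span}(\mathcal{F}_k)$, namely $\hat{y}_k$ and $g_{\text{pool}}$, and never to $g_{\text{rest}}$ or to $g$ itself. As an alternative route that bypasses Lemma~\ref{lem:mse_decomposition} (and serves as a coefficient check), I could expand $\MSE{g}=\mathbb{E}[\big((\hat{y}_k-Y)-(\hat{y}_k-g)\big)^2]$ directly and solve for $\MSE{\hat{y}_k}$; the resulting cross term $\mathbb{E}[(\hat{y}_k-Y)(\hat{y}_k-g)]$ collapses to $\mathbb{E}[(Y-\hat{y}_k)\,g_{\text{rest}}]$ because $\hat{y}_k-g_{\text{pool}}\in\text{span}(\mathcal{F}_k)$, reproducing the same identity.
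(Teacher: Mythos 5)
Your proof is correct and is essentially the same argument as the paper's: both reduce to the expansion of $\MSE{g}$ around $\hat{y}_k$ (you via Lemma~\ref{lem:mse_decomposition}, the paper by expanding directly) and then kill the terms involving $\hat{y}_k$ and $g_{\text{pool}}$ using orthogonality of the projection residual to $\text{span}(\mathcal{F}_k)$. Your ``alternative route'' at the end is in fact verbatim the paper's own proof.
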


We can again relate the expected squared difference in agent $k$'s predictions with respect to any of its parents by the decrease in MSE that agent $k$ realizes on top of that of its parent, as in Lemma \ref{lem:mse_improvement_closeness_dag}. This result also directly shows that the MSE is guaranteed to be non-increasing, as in Lemma \ref{lem:mse_non_increasing_dag}.  

\begin{restatable}[MSE Improvement in Greedy Orthogonal Regression]{lemma}{genmseimprovement} \label{thm:gen_small_improvement_path_approx}
    Let $A_k$ be an agent in the DAG, and let $\hat{y}_k$ be its predictor. Let $p \in \text{Pa}(k)$ be any parent of $k$. The MSE of $\hat{y}_k$ is exactly related to the MSE of its parent's predictor $\hat{y}_p$ by:
\[ \MSE{\hat{y}_k} = \MSE{\hat{y}_p} - \mathbb{E}[(\hat{y}_k - \hat{y}_p)^2]. \]
\end{restatable}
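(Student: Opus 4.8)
The plan is to apply the General MSE Decomposition (Lemma~\ref{lem:gen_mse_decomposition}) with the benchmark predictor taken to be the parent's prediction, $g = \hat{y}_p$. The entire argument hinges on a single structural observation: by the initialization step of the Greedy Orthogonal Regression Algorithm (Definition~\ref{prop:greedy_orthogonal_regression}), the feature pool $\mathcal{F}_k$ always contains the predictions of all of agent $k$'s parents, so in particular $\hat{y}_p \in \mathcal{F}_k$ and hence $\hat{y}_p \in \text{span}(\mathcal{F}_k)$.

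First I would invoke Lemma~\ref{lem:gen_mse_decomposition} with $\hat{y}_k$ as the projected predictor and $g = \hat{y}_p$ as the benchmark, decomposing $g = g_{\text{pool}} + g_{\text{rest}}$ into its projection onto $\text{span}(\mathcal{F}_k)$ and the orthogonal remainder. Since $\hat{y}_p$ already lies in $\text{span}(\mathcal{F}_k)$, its projection onto that subspace is itself, so $g_{\text{pool}} = \hat{y}_p$ and $g_{\text{rest}} = 0$. The decomposition then reads $\MSE{\hat{y}_k} = \MSE{\hat{y}_p} - \mathbb{E}[(\hat{y}_k - \hat{y}_p)^2] + 2\mathbb{E}[(Y - \hat{y}_k)g_{\text{rest}}]$, and the final cross term vanishes identically because $g_{\text{rest}} = 0$. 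This yields the claimed identity immediately.

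There is essentially no analytic obstacle here: the content of the lemma is carried entirely by the decomposition of Lemma~\ref{lem:gen_mse_decomposition} together with the book-keeping fact that the parent prediction is a genuine feature in the pool. As a cross-check I would note an alternative route that parallels the linear case exactly: because $Y - \hat{y}_k$ is orthogonal to every element of $\text{span}(\mathcal{F}_k)$ (the defining property of the orthogonal projection), we obtain for free both self-orthogonality $\mathbb{E}[\hat{y}_k(\hat{y}_k - Y)] = 0$ and parent-multiaccuracy $\mathbb{E}[\hat{y}_p(\hat{y}_k - Y)] = 0$; applying Lemma~\ref{lem:stability} with $f = \hat{y}_k$ and $g = \hat{y}_p$ and rearranging gives the same identity. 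Finally, the stated monotonicity claim (MSE non-increasing across edges, the analogue of Lemma~\ref{lem:mse_non_increasing_dag}) follows as an immediate corollary, since $\mathbb{E}[(\hat{y}_k - \hat{y}_p)^2] \ge 0$ forces $\MSE{\hat{y}_k} \le \MSE{\hat{y}_p}$.
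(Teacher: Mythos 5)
Your proposal is correct and is essentially identical to the paper's own proof: both apply Lemma~\ref{lem:gen_mse_decomposition} with $g = \hat{y}_p$, observe that the parent prediction lies in $\mathcal{F}_k$ by the algorithm's initialization so that $g_{\text{rest}} = 0$, and read off the identity. The alternative route via Lemma~\ref{lem:stability} that you mention as a cross-check is also sound but is not needed.
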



\subsection{Overall Guarantee for Non-Linear Models}

In this section, we establish overall guarantees for the accuracy of agents in our distributed learning process, analogous to the ones we proved in Section \ref{sec:random-linear} for the case of linear functions. Just as in Sections \ref{sec:linear-path} and \ref{sec:random-linear}, the core of the argument is to show that within any DAG, a sufficiently long path of agents, by combining their diverse hypothesis classes, can obtain diminishing excess error with respect to some benchmark class. What differs is what the benchmark class is. We now assume a distribution over hypothesis classes $\mathcal{H}_k$, akin to the random feature model we assumed in Section \ref{sec:random-linear}.  For example, each $\mathcal{H}_k$ might be defined as the set of predictors in some common model class (decision trees, neural networks) defined over different subsets of the variables, and $\mathcal{D}_H$ might be implicitly defined by a distribution over subsets of variables as in the linear regression section --- but this is not necessary for our analysis. Our benchmark will now be parameterized by $M$, and will correspond to the expected MSE for the best predictor in the span of $M$ randomly sampled hypothesis classes from this hypothesis class distribution. This benchmark consists of functions that are additively separable across agents, but may exhibit arbitrary non-linear relationships (parameterized by functions in $\mathcal{H}_k$) between any subset of variables that are all observed by a single agent $k$. Note that this is a much stronger benchmark than simply comparing to linear combinations of the models \emph{actually deployed} by a set of agents --- it allows arbitrary linear combinations of models \emph{available} to sets of agents, even those that they choose not to use. 

\subsubsection{A Norm-Constrained Benchmark}

To formalize this, we first define the performance of an arbitrary collection of hypothesis classes subject to norm constraints, and then establish a benchmark based on the expected performance of a randomly drawn committee.

\begin{definition}[Norm-Constrained MSE of a Collection of Hypothesis Classes]
For any collection of hypothesis classes $\mathcal{S}_H$ and any norm bounds $L_{1,g} > 0$ and $B_g > 0$, we define the norm-constrained MSE as the minimum MSE achievable by a predictor in the span of $\mathcal{S}_H$ that respects these bounds:
\begin{align*}
    \text{MSE}(\mathcal{S}_H, L_{1,g}, B_g) = \min_{g} \quad & \mathbb{E}[(g(X) - Y)^2] \\
    \text{s.t.} \quad & g \in \text{span}(\bigcup_{\mathcal{H} \in \mathcal{S}_H} \mathcal{H}) \\
    & g(x) = \sum_i \beta_i h_i(x) \text{ with } \|\beta\|_1 \le L_{1,g} \text{ and } \mathbb{E}[h_i(x)^2] \le B_g^2.
\end{align*}
Let $g_{\text{opt}}(\mathcal{S}_H, L_{1,g}, B_g)$ denote a predictor achieving this minimum.
\end{definition}

\begin{definition}[M-Class Norm-Constrained Benchmark MSE]
For any integer $M > 0$ and norm bounds $L_{1,g}, B_g > 0$, the benchmark MSE is the expected minimum MSE achievable by a norm-constrained predictor from the span of $M$ i.i.d. hypothesis classes drawn from a distribution over hypothesis classes $\mathcal{D}_H$:
\[ \text{MSE}_{\text{bench}}(M, L_{1,g}, B_g) = \mathbb{E}_{\mathcal{S}_M \sim (\mathcal{D}_H)^M} \left[ \text{MSE}(\mathcal{S}_M, L_{1,g}, B_g) \right]. \]
\end{definition}

We will now assume that each agent $A_k$, embedded in an arbitrary DAG, has a hypothesis class $\mathcal{H}_k$ sampled independently from some some arbitrary distribution $\mathcal{D}_H$ over hypothesis classes, over which the benchmark is defined. The following proposition shows that the union of the hypothesis classes along a sufficiently long path of agents is expressive enough, with high probability, to achieve performance close to this benchmark. Note that this does not yet imply that any agent in this path actually makes predictions that are competitive with our benchmark --- just that the optimal predictor in the span of the union of their hypothesis classes would. 

We assume there is a universal upper bound $C_{\text{mse}}$ on the MSE of the best predictor in each hypothesis class (e.g., $C_{\text{mse}} = \text{Var}(Y)$ if constant functions are always in the span of any hypothesis class). The argument is similar to the full feature coverage argument for subsequences in Proposition \ref{prop:feature_coverage}, so here we provide a proof sketch and defer full calculations to Appendix \ref{sec:nonlinear_proofs}.

\begin{restatable}{proposition}{pathbenchmark} \label{prop:path_achieves_benchmark}
    For any benchmark size $M > 0$, norm bounds $L_{1,g}, B_g > 0$, approximation error $\gamma > 0$, and probability $\delta > 0$, if we consider a path $P$ of length $|P| = L \ge \frac{M \ln(1/\delta)}{\ln(1 + \gamma / C_{\text{mse}})}$ whose agents are assigned hypothesis classes i.i.d. from $\mathcal{D}_H$, then with probability at least $1-\delta$:
\[ \text{MSE}(\mathcal{H}_P, L_{1,g}, B_g) \le \text{MSE}_{\text{bench}}(M, L_{1,g}, B_g) + \gamma, \]
where $\mathcal{H}_P = \{\mathcal{H}_k\}_{k \in P}$ and $C_{\text{mse}}$ is a universal upper bound on the MSE of any single hypothesis class.
\end{restatable}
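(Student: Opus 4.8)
The plan is to reduce the statement to an elementary concentration argument about the minimum of i.i.d.\ MSE values, by partitioning the path into disjoint committees. Since the $L$ agents on $P$ are each assigned a hypothesis class drawn i.i.d.\ from $\mathcal{D}_H$, I would first carve the path into $K = \lfloor L/M \rfloor$ disjoint blocks $\mathcal{S}_1, \ldots, \mathcal{S}_K$, each consisting of $M$ (say consecutive) agents. By construction each block $\mathcal{S}_j$ is a fresh i.i.d.\ draw of $M$ hypothesis classes from $\mathcal{D}_H$ --- precisely the object over which the benchmark $\text{MSE}_{\text{bench}}(M, L_{1,g}, B_g)$ is defined --- and distinct blocks are mutually independent.

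The key reduction is a monotonicity observation. Because $\text{span}(\bigcup_{\mathcal{H} \in \mathcal{S}_j} \mathcal{H}) \subseteq \text{span}(\bigcup_{k \in P} \mathcal{H}_k)$ and the norm constraints $\|\beta\|_1 \le L_{1,g}$ and $\mathbb{E}[h_i^2] \le B_g^2$ are identical for the block and for the full path, any predictor feasible for block $j$ remains feasible for the whole path (pad with zero coefficients on the classes outside $\mathcal{S}_j$, which changes neither the $\ell_1$ norm nor the per-function second moments). Hence $\text{MSE}(\mathcal{H}_P, L_{1,g}, B_g) \le \text{MSE}(\mathcal{S}_j, L_{1,g}, B_g)$ for every $j$, and therefore $\text{MSE}(\mathcal{H}_P, L_{1,g}, B_g) \le \min_{1 \le j \le K} \text{MSE}(\mathcal{S}_j, L_{1,g}, B_g)$. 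It thus suffices to show that with probability at least $1-\delta$ at least one block achieves MSE at most $\text{MSE}_{\text{bench}}(M, L_{1,g}, B_g) + \gamma$.

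For a single block, write $Z_j := \text{MSE}(\mathcal{S}_j, L_{1,g}, B_g)$. These are i.i.d., lie in $[0, C_{\text{mse}}]$ (each block contains at least one class, and enlarging the span only lowers MSE, so $Z_j$ is bounded by a single-class MSE, which is at most $C_{\text{mse}}$), and share the common mean $\mu := \text{MSE}_{\text{bench}}(M, L_{1,g}, B_g)$. Markov's inequality gives $\mathbb{P}[Z_j > \mu + \gamma] \le \mu/(\mu+\gamma)$; since $x \mapsto x/(x+\gamma)$ is increasing and $\mu \le C_{\text{mse}}$, this is at most $q := C_{\text{mse}}/(C_{\text{mse}}+\gamma) < 1$. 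By independence, $\mathbb{P}[\min_j Z_j > \mu + \gamma] \le q^K$. Demanding $q^K \le \delta$ is equivalent to $K \ge \ln(1/\delta)/\ln(1/q) = \ln(1/\delta)/\ln(1 + \gamma/C_{\text{mse}})$, and the hypothesis $L \ge M\ln(1/\delta)/\ln(1+\gamma/C_{\text{mse}})$ makes $K = \lfloor L/M \rfloor$ at least this large.

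The argument involves no telescoping or Cauchy--Schwarz bookkeeping, unlike the earlier path results; the two places that require care are the feasibility step (verifying that restricting the support of $\beta$ to one block preserves both norm constraints, so that block MSE genuinely upper-bounds path MSE) and the conversion of the mean-based Markov bound into the clean constant $q = C_{\text{mse}}/(C_{\text{mse}}+\gamma)$ via $\mu \le C_{\text{mse}}$ --- this last step is exactly what produces the $\ln(1+\gamma/C_{\text{mse}})$ factor in the stated length. I expect the main (minor) obstacle to be tracking the floor in $K = \lfloor L/M \rfloor$ so that the path-length bound matches the claim exactly; this is a harmless rounding issue that I would absorb into the constants in the full calculation deferred to the appendix.
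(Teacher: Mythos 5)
Your proposal is correct and follows essentially the same blocking argument as the paper: partition the path into $\lfloor L/M \rfloor$ disjoint i.i.d.\ blocks of $M$ agents, apply Markov's inequality with the bound $\mu \le C_{\text{mse}}$ to get failure probability $\left(\frac{C_{\text{mse}}}{C_{\text{mse}}+\gamma}\right)^K$ per-path, and solve for the required length. Your explicit verification that a block-feasible predictor remains path-feasible (padding with zero coefficients preserves both norm constraints) is a point the paper leaves implicit, but it is the same proof.
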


\begin{proof}[Proof sketch]
The proof proceeds via a blocking argument. Consider a ``block'' of $M$ contiguous agents along any path with i.i.d. hypothesis classes $\mathcal{S}_M$. Let $X = \text{MSE}(\mathcal{S}_M, L_{1,g}, B_g)$ be the random variable for the block's MSE, with expectation $\mathbb{E}[X] = \text{MSE}_{\text{bench}}(M, L_{1,g}, B_g)$.
A block is ``bad'' if $X > \mathbb{E}[X] + \gamma$, which occurs with probability 
\[ p_{\text{bad}} = \mathbb{P}(X \ge \mathbb{E}[X] + \gamma) \le \frac{\mathbb{E}[X]}{\mathbb{E}[X] + \gamma} = \frac{\text{MSE}_{\text{bench}}(M)}{\text{MSE}_{\text{bench}}(M) + \gamma} \le \frac{C_{\text{mse}}}{C_{\text{mse}} + \gamma}. \]

We partition a path of length $L$ into $k = \lfloor L/M \rfloor$ disjoint blocks. The proposition fails only if all $k$ blocks are ``bad'', which occurs with probability $(p_{\text{bad}})^k \le \left( \frac{C_{\text{mse}}}{C_{\text{mse}} + \gamma} \right)^k. $
We want this failure probability to be at most $\delta$. Setting
$\left( \frac{C_{\text{mse}}}{C_{\text{mse}} + \gamma} \right)^k \le \delta$ and solving for $k$ gives:
\[ k \ge \frac{\ln(1/\delta)}{\ln\left(1 + \frac{\gamma}{C_{\text{mse}}}\right)} \implies L \ge \frac{M \ln(1/\delta)}{\ln\left(1 + \frac{\gamma}{C_{\text{mse}}}\right)}.  \]
\end{proof}

\subsubsection{Final Guarantee}

Combining these pieces, we arrive at the main theorem, bounding the error of an agent in an arbitrary DAG containing a long path. The final error decomposes into three components: the benchmark's intrinsic error, a propagation error from information loss, and an approximation error. The approximation error itself has two sources: the statistical error from sampling a path of classes, and the algorithmic error from the greedy selection at each agent. We can control these by setting the termination threshold $\Delta$ of the local greedy algorithm to be proportional to the desired statistical precision $\gamma$.

\begin{theorem}[Overall Guarantee for Greedy Orthogonal Regression]
\label{thm:gen_overall_guarantee_strong_benchmark}
Fix an arbitrary DAG in which agent's hypothesis classes are sampled independently from $\mathcal{D}_H$. Fix any desired benchmark size $M > 0$, norm bounds $L_{1,g}, B_g > 0$, approximation error terms $\gamma, \eta > 0$, and probability $\delta > 0$. Let the required path length be $L = M \cdot \max\left(1, \frac{\ln(1/\delta)}{\ln(1 + \gamma / C_{\text{mse}})}\right)$. Let each agent run the greedy orthogonal regression algorithm with termination threshold $\Delta = \gamma$.

If the DAG contains a path of length at least $N_p \ge \frac{L \cdot C_{\text{mse}}}{\eta}$, then with probability at least $1-\delta$, there exists a subsequence $P^*$ of length $L$ ending at agent $p_m$ such that its predictor $\hat{y}_{p_m}$ satisfies:
\[ \text{MSE}(\hat{y}_{p_m}) \le \text{MSE}_{\text{bench}}(M, L_{1,g}, B_g) + \underbrace{\gamma(1 + 2L_{1,g} B_g)}_{\text{Approximation Error}} + \underbrace{2\sqrt{L\eta} L_{1,g} B_g}_{\text{Propagation Error}}. \]
\end{theorem}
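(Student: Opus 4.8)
The plan is to mirror the linear-case argument (Corollary \ref{cor:best_linear_error}), replacing each linear structural fact with its general-class analogue. First I would use the exact MSE-improvement identity of Lemma \ref{thm:gen_small_improvement_path_approx} to run a pigeonhole argument: along the path of length $N_p$, the per-step MSE improvements are nonnegative and sum to at most the initial MSE, which is bounded by $C_{\text{mse}}$. Partitioning the path into $K = \lfloor N_p / L \rfloor$ disjoint blocks of length $L$ and using $N_p \ge L\,C_{\text{mse}}/\eta$, some block $P^*$ (ending at an agent $p_m$) has total MSE improvement $\epsilon_{\text{path}} \le C_{\text{mse}}/K \le \eta$.

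Second, I would convert this MSE stability into closeness of the predictions themselves. Exactly as in the proof of Theorem \ref{thm:small_improvement_path}, Lemma \ref{thm:gen_small_improvement_path_approx} gives $\|\hat{y}_{p_t} - \hat{y}_{p_{t-1}}\|_{L_2}^2 = \Delta_{p_t}$ (the per-step improvement), so a telescoping sum plus Cauchy--Schwarz yields $\|\hat{y}_{p_m} - \hat{y}_{p_s}\|_{L_2} \le \sqrt{L\,\epsilon_{\text{path}}} \le \sqrt{L\eta}$ for every agent $p_s \in P^*$. I then transfer multiaccuracy across the block: each $\hat{y}_{p_s}$ is $\gamma$-multiaccurate with respect to its own class $\mathcal{H}_{p_s}$ by Lemma \ref{lem:achieved_multiaccuracy} (with $\Delta = \gamma$), and for any $h \in \mathcal{H}_{p_s}$ with $\mathbb{E}[h^2] \le B_g^2$ I write $\mathbb{E}[h(Y - \hat{y}_{p_m})] = \mathbb{E}[h(Y - \hat{y}_{p_s})] + \mathbb{E}[h(\hat{y}_{p_s} - \hat{y}_{p_m})]$. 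The first term is at most $B_g\gamma$ by homogeneity of the multiaccuracy guarantee in $\|h\|_{L_2}$, and the second is at most $B_g\sqrt{L\eta}$ by Cauchy--Schwarz. Thus $\hat{y}_{p_m}$ is approximately multiaccurate with respect to every class appearing in $P^*$, with error $B_g(\gamma + \sqrt{L\eta})$.

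Third, I would invoke Proposition \ref{prop:path_achieves_benchmark} on the length-$L$ block $P^*$ to obtain a norm-constrained benchmark $g = \sum_i \beta_i h_i \in \text{span}(\mathcal{H}_{P^*})$ with $\|\beta\|_1 \le L_{1,g}$, $\mathbb{E}[h_i^2] \le B_g^2$, and $\MSE{g} \le \text{MSE}_{\text{bench}}(M, L_{1,g}, B_g) + \gamma$. Applying the general decomposition of Lemma \ref{lem:gen_mse_decomposition} with $f = \hat{y}_{p_m}$ and this $g$, I drop the nonpositive term $-\mathbb{E}[(\hat{y}_{p_m} - g)^2]$, leaving $\MSE{\hat{y}_{p_m}} \le \MSE{g} + 2\mathbb{E}[(Y - \hat{y}_{p_m})g_{\text{rest}}]$. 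Since $\hat{y}_{p_m}$ is the projection onto $\text{span}(\mathcal{F}_{p_m})$, the residual is orthogonal to $g_{\text{pool}}$, so $\mathbb{E}[(Y - \hat{y}_{p_m})g_{\text{rest}}] = \sum_i \beta_i \mathbb{E}[(Y - \hat{y}_{p_m})h_i]$; each $h_i$ lies in some $\mathcal{H}_{p_s}$ with $p_s \in P^*$, so the per-term bound from step two together with $\|\beta\|_1 \le L_{1,g}$ gives $|\mathbb{E}[(Y - \hat{y}_{p_m})g_{\text{rest}}]| \le L_{1,g} B_g(\gamma + \sqrt{L\eta})$. Combining with the benchmark bound yields exactly $\text{MSE}_{\text{bench}} + \gamma(1 + 2L_{1,g}B_g) + 2L_{1,g}B_g\sqrt{L\eta}$.

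The step I expect to be the main obstacle is the probabilistic coupling in the third step: the block $P^*$ is selected by the data-dependent MSE improvements, whereas Proposition \ref{prop:path_achieves_benchmark} guarantees expressiveness only for a fixed block and only over the randomness of the i.i.d.\ hypothesis-class assignment. Because stability and expressiveness of a single block could in principle be adversarially anti-correlated, I cannot simply assert that the pigeonhole-selected block is expressive. The clean fix is a union bound ensuring that all $K = \lfloor N_p/L \rfloor$ disjoint blocks are simultaneously expressive (applying Proposition \ref{prop:path_achieves_benchmark} with per-block failure probability $\delta/K$), so that whichever block the MSE pigeonhole returns inherits the benchmark guarantee; this may cost a $\ln K$ factor in the required path length that is absorbed into the stated bound. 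Beyond this, the only care needed is checking that the chosen $L$ meets the length hypothesis of Proposition \ref{prop:path_achieves_benchmark}, and tracking the $B_g$ normalization factor through the multiaccuracy transfer so that the approximation error comes out as $\gamma(1 + 2L_{1,g}B_g)$ rather than $\gamma(1+2L_{1,g})$.
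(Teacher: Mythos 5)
Your proposal is correct and follows essentially the same route as the paper's proof: pigeonhole over disjoint length-$L$ blocks to find a stable subsequence, telescoping plus Cauchy--Schwarz via Lemma \ref{thm:gen_small_improvement_path_approx} to convert MSE stability into predictor closeness, transfer of $\gamma$-multiaccuracy across the block, and the decomposition of Lemma \ref{lem:gen_mse_decomposition} against the benchmark supplied by Proposition \ref{prop:path_achieves_benchmark}, with identical bookkeeping of the $\gamma(1+2L_{1,g}B_g)$ and $2\sqrt{L\eta}\,L_{1,g}B_g$ terms. The one substantive difference is the selection-bias issue you flag in your final paragraph: the paper applies Proposition \ref{prop:path_achieves_benchmark} directly to the pigeonhole-selected block $P^*$, even though that block is chosen as a function of the same hypothesis-class randomness the proposition conditions on; your union bound over all $\lfloor N_p/L\rfloor$ disjoint blocks (at the cost of a logarithmic increase in $L$) is a legitimate and arguably necessary repair rather than an optional refinement, and it is the only point at which your argument is more careful than the paper's.
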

\begin{proof}
    The proof combines the results from the preceding lemmas.

    Let $P_{long}$ be a path of length $N_p \ge \frac{L \cdot C_{\text{mse}}}{\eta}$. By the pigeonhole principle, we can find a subsequence $P^*$ of length $L$ with a total MSE improvement of at most $\eta$. Let $p_0$ be the first agent on this path and $p_m$ be the final agent. The total MSE reduction along this path is $\mathbb{E}[(\hat{y}_{p_0} - \hat{y}_{p_m})^2] \le \eta$.

    By Proposition~\ref{prop:path_achieves_benchmark}, with probability at least $1-\delta$, the hypothesis classes along path $P^*$ are such that the optimal benchmark predictor for this set, $g_{\text{bench}}$, satisfies $\text{MSE}(g_{\text{bench}}) \le \text{MSE}_{\text{bench}}(M, L_{1,g}, B_g) + \gamma$.
    Applying Lemma~\ref{lem:gen_mse_decomposition} to the final agent $p_m$ with benchmark $g = g_{\text{bench}}$ gives:
    \[ \text{MSE}(\hat{y}_{p_m}) \le \text{MSE}(g_{\text{bench}}) + 2|\mathbb{E}[(Y - \hat{y}_{p_m})g_{\text{rest}}]|, \]
    where $g_{\text{rest}} = g_{\text{bench}} - \text{proj}_{\text{span}(\mathcal{F}_{p_m})}(g_{\text{bench}})$.

    Since the residual $(Y - \hat{y}_{p_m})$ is orthogonal to the subspace $\text{span}(\mathcal{F}_{p_m})$, the cross-term simplifies to $\mathbb{E}[(Y - \hat{y}_{p_m})g_{\text{bench}}]$. The benchmark predictor $g_{\text{bench}}$ is a linear combination of functions from the classes in $\mathcal{H}_{P^*}$, i.e., $g_{\text{bench}} = \sum_i \beta_i h_i$ where each $h_i \in \mathcal{H}_k$ for some $k \in P^*$, with $\|\beta\|_1 \le L_{1,g}$ and $\mathbb{E}[h_i^2] \le B_g^2$.
    
    We bound the absolute value of the cross-term:
    \[ |\mathbb{E}[(Y - \hat{y}_{p_m})g_{\text{bench}}]| = \left|\sum_i \beta_i \mathbb{E}[(Y - \hat{y}_{p_m})h_i]\right| \le \sum_i |\beta_i| \cdot |\mathbb{E}[(Y - \hat{y}_{p_m})h_i]|. \]
    For each term $|\mathbb{E}[(Y - \hat{y}_{p_m})h_i]|$, where $h_i \in \mathcal{H}_k$ for some agent $k \in P^*$, we decompose the residual:
    \[ Y - \hat{y}_{p_m} = (Y - \hat{y}_k) + (\hat{y}_k - \hat{y}_{p_m}). \]
    This gives two components to bound:
    \begin{itemize}
        \item \textbf{Algorithmic Error at agent $k$:} By Lemma~\ref{lem:achieved_multiaccuracy}, the predictor $\hat{y}_k$ is $\Delta$-multiaccurate with respect to $\mathcal{H}_k$. Since $h_i \in \mathcal{H}_k$, we have $|\mathbb{E}[(Y - \hat{y}_k)h_i]| \le \Delta \cdot \sqrt{\mathbb{E}[h_i^2]} \le \Delta \cdot B_g$.
        \item \textbf{Propagation Error from $k$ to $m$:} We first bound the predictor difference term $\mathbb{E}[(\hat{y}_k - \hat{y}_{p_m})^2]$. Let the path from $k$ to $p_m$ be denoted by agents $j=k, \dots, p_m$. The difference can be written as a telescoping sum: $\hat{y}_{p_m} - \hat{y}_k = \sum_{j=k}^{p_m-1} (\hat{y}_{j+1} - \hat{y}_j)$. By the triangle inequality, $\sqrt{\mathbb{E}[(\hat{y}_{p_m} - \hat{y}_k)^2]} \le \sum_{j=k}^{p_m-1} \sqrt{\mathbb{E}[(\hat{y}_{j+1} - \hat{y}_j)^2]}$. By Lemma~\ref{thm:gen_small_improvement_path_approx}, for any adjacent pair $(j, j+1)$ on the path, the MSE reduction is given by $\mathbb{E}[(\hat{y}_{j+1} - \hat{y}_j)^2] = \text{MSE}(\hat{y}_j) - \text{MSE}(\hat{y}_{j+1})$. Let $\Delta_j = \text{MSE}(\hat{y}_j) - \text{MSE}(\hat{y}_{j+1})$. Our sum is $\sum_{j=k}^{p_m-1} \sqrt{\Delta_j}$. By Cauchy-Schwarz, $(\sum \sqrt{\Delta_j})^2 \le (\sum 1) (\sum \Delta_j) = (p_m-k) (\text{MSE}(\hat{y}_k) - \text{MSE}(\hat{y}_{p_m}))$. The path length $p_m-k$ is at most $L$, and the total MSE drop is at most $\eta$. Thus, $\sqrt{\mathbb{E}[(\hat{y}_{p_m} - \hat{y}_k)^2]} \le \sqrt{L\eta}$. Finally, applying Cauchy-Schwarz to the original term, $|\mathbb{E}[(\hat{y}_k - \hat{y}_{p_m})h_i]| \le \sqrt{\mathbb{E}[(\hat{y}_k - \hat{y}_{p_m})^2]} \cdot \sqrt{\mathbb{E}[h_i^2]} \le \sqrt{L\eta} \cdot B_g$.
    \end{itemize}
    Combining these, $|\mathbb{E}[(Y - \hat{y}_{p_m})h_i]| \le (\Delta + \sqrt{L\eta})B_g$.
    
    Substituting this back into the sum:
    \[ |\mathbb{E}[(Y - \hat{y}_{p_m})g_{\text{bench}}]| \le \sum_i |\beta_i| (\Delta + \sqrt{L\eta})B_g = (\|\beta\|_1) (\Delta + \sqrt{L\eta})B_g \le L_{1,g} B_g (\Delta + \sqrt{L\eta}). \]
    Plugging this into the MSE decomposition, and using the theorem's condition that we set the termination threshold $\Delta = \gamma$:
    \begin{align*}
    \text{MSE}(\hat{y}_{p_m}) &\le \text{MSE}(g_{\text{bench}}) + 2 L_{1,g} B_g (\gamma + \sqrt{L\eta}) \\
    &\le (\text{MSE}_{\text{bench}}(M, L_{1,g}, B_g) + \gamma) + 2\gamma L_{1,g} B_g + 2\sqrt{L\eta} L_{1,g} B_g \\
    &= \text{MSE}_{\text{bench}}(M, L_{1,g}, B_g) + \gamma(1 + 2L_{1,g} B_g) + 2\sqrt{L\eta} L_{1,g} B_g.
    \end{align*}
    This completes the proof.
    \end{proof}

\section{Lower Bounds: Depth is Necessary}
\label{sec:lower-bounds}
In this section, we prove two lower bounds, both based on the following distribution:

\begin{definition}[Lower Bound Distribution]
\label{def:lb}
Let $z_1, z_2, \ldots, z_k$ be independent standard Gaussian random variables with  $\mathbb{E}[z_i] = 0$ and $\mathbb{E}[z_i^2] = 1$. The target variable is $Y = z_k$.

The instance has $d = k$ features, defined as follows:
\begin{align*}
x_1 &= z_1 \\
x_i &= z_i - z_{i-1} \quad \text{for } i=2, \ldots, k.
\end{align*}
All features have mean zero. Their variances are $\mathbb{E}[x_1^2] = 1$ and $\mathbb{E}[x_i^2] = 2$ for $i \ge 2$.
\end{definition}

This construction has the properties that:
\begin{enumerate}
\item It is consistent with a perfect linear predictor on the whole feature space:
$$Y = \sum_{i=1}^k x_i$$
and so there is a linear predictor $f$ that achieves $\MSE{f}= 0$, but also
\item Variables $x_1,\ldots,x_{k-1}$ are all independent of $Y = z_k$, and moreover for every $j$, $x_1,\ldots,x_j$ are independent of $x_{j+2},\ldots,x_k$.
\end{enumerate}
This second condition intuitively creates a chain of dependencies.  In order for an Agent with access to just $x_i$ to add useful information, they must already have information (through the predictions of their parents) about $\{x_{i+1},\ldots,x_k\}$. This induces a sequential dependency that we rely on to prove our lower bounds.

We prove two lower bounds: First, in Section \ref{sec:lower_bound_path}, we construct a lower bound instance corresponding to a single long path, in which the variables from our lower bound instance appear in \emph{backwards} order over and over again. For efficient information propagation, we would want the first Agent to observe $x_k$, the second to observe $x_{k-1}$, etc --- but instead in our construction the first agent observes $x_1$, the second observes $x_2$, etc. The result is that information is propogated only ``one step'' every time the \emph{entire sequence} of variables is observed, and the result is a lower bound showing that in order to obtain excess error $\eta$ compared to the best linear predictor, we might need a long enough path so that we see \emph{every variable} $1/\eta$ many times. 

Then, in Section \ref{sec:lower_bound_general} we use the same lower bound distribution to prove a very general lower bound for DAGs of arbitrary topology. We show that for every DAG of depth $D$, there is a distribution that admits a perfect linear predictor, such that even for the \emph{best case} assignment of single variables to Agents in the DAG, no agent is able to make predictions with MSE better than $\frac{1}{D+1}$. The distribution is once again simply the distribution from Definition \ref{def:lb}, where we choose $k \geq D+1$. This shows  that our upper bounds, which give excess error guarantees that diminish with the depth of the graph, have no analogue in which depth is replaced by any other property of the graph (that can be scaled independently of depth) --- depth is the fundamental parameter that controls worst case excess error. 

\subsection{Learning Requires Seeing Each Variable Many Times in Sequence}
\label{sec:lower_bound_path}

Our first lower bound constructs a specific graph (a simple path) and assignment of variables to the Agents in order to show that for accurate learning it is necessary not just to observe each variable in sequence, but to observe each variable in sequence many times.

 Let $k \ge 2$ be an integer. We construct a lower bound instance as a path of length $N$.

\begin{definition}[Information Propagation Path Construction]  Let $x$ and $Y$ be jointly distributed as in Definition \ref{def:lb}. 
We define a path graph $A_1 \to A_2 \to \dots \to A_N$ for some large $N$. The features are assigned cyclically to the agents along this path. Specifically, agent $A_j$ is assigned the single feature $x_i$, where $i = ((j-1) \pmod k) + 1$. This structure models a process where information must be passed repeatedly through the same sequence of feature types. We refer to a full cycle of $k$ agents (e.g., $A_1, \dots, A_k$ or $A_{k+1}, \dots, A_{2k}$) as a ``pass''. 
\end{definition}

Using this lower bound instance we prove the following theorem in this section:
\begin{theorem}[Lower Bound on Convergence Rate]
\label{thm:main_lower_bound_propagation}
For any pass $p \in \{1, \ldots, k-1\}$, the mean squared error of the predictor from the agent observing feature $x_k$ at the end of pass $p$, denoted $\hat{y}_{p,k}$, is bounded below by:
$$\mathbb{E}[(Y - \hat{y}_{p,k})^2] \ge \frac{1}{p+1}.$$
\end{theorem}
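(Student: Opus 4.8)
The plan is to use that, under Definition~\ref{def:lb}, all variables are jointly Gaussian, so every agent's predictor is an exact linear combination of $z_1,\dots,z_k$ and we may work in the Hilbert space $\mathcal{L}$ of mean-zero random variables with inner product $\langle U,V\rangle=\mathbb{E}[UV]$. Each agent's prediction is then precisely the orthogonal projection of $Y=z_k$ onto the (at most two-dimensional) subspace spanned by its own feature and its parent's prediction, and the entire argument reduces to tracking which subspace of $\mathcal{L}$ the running prediction can occupy as it travels down the path.

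I would define the benchmark subspace $W_p=\mathrm{span}(x_{k-p+1},\dots,x_k)=\mathrm{span}(z_i-z_{i-1}: k-p+1\le i\le k)$ and first show that $\min_{w\in W_p}\mathbb{E}[(z_k-w)^2]=\tfrac{1}{p+1}$: the set $W_p$ is a $p$-dimensional subspace of $\mathrm{span}(z_{k-p},\dots,z_k)$ whose orthogonal complement inside that $(p+1)$-dimensional space is spanned by $s=\sum_{i=k-p}^{k}z_i$, with $\|s\|^2=p+1$ and $\langle z_k,s\rangle=1$, so the residual of projecting $z_k$ onto $W_p$ equals $\tfrac{1}{p+1}s$, of squared norm $\tfrac{1}{p+1}$. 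The crux is then the invariant that the prediction of every agent in pass $p$ lies in $W_p$; granting this, $\MSE{\hat{y}_{p,k}}=\mathbb{E}[(z_k-\hat{y}_{p,k})^2]\ge\min_{w\in W_p}\mathbb{E}[(z_k-w)^2]=\tfrac{1}{p+1}$, which is exactly the theorem.

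I would prove this invariant by induction on $p$, driven by a pass-through sub-lemma: if the parent prediction $s$ is a least-squares projection of $z_k$ (hence self-orthogonal by Corollary~\ref{cor:orthogonal}, giving $\langle z_k,s\rangle=\|s\|^2$) and the agent's own feature $x$ satisfies $x\perp z_k$ and $x\perp s$, then $\mathrm{proj}_{\mathrm{span}(x,s)}(z_k)=\tfrac{\langle z_k,x\rangle}{\|x\|^2}x+\tfrac{\langle z_k,s\rangle}{\|s\|^2}s=s$, so an irrelevant feature leaves the running prediction unchanged. Using the independence structure of Definition~\ref{def:lb}, the features $x_1,\dots,x_{k-p}$ involve only $z_1,\dots,z_{k-p}$ and are thus orthogonal both to $z_k$ and to all of $W_{p-1}$; so as pass $p$ begins with its prediction in $W_{p-1}\subseteq W_p$ (the inductive hypothesis), the sub-lemma holds the prediction fixed through $x_1,\dots,x_{k-p}$. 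At the agent observing $x_{k-p+1}$ the view $\mathrm{span}(x_{k-p+1},s)\subseteq W_{p-1}+\mathrm{span}(x_{k-p+1})=W_p$, and every later feature $x_{k-p+2},\dots,x_k$ already lies in $W_p$, so the prediction stays in $W_p$ for the remainder of the pass. The base case $p=1$ holds since each agent before the one seeing $x_k$ has a feature orthogonal to $z_k$ and predicts $0\in W_1$, while the agent seeing $x_k$ predicts a multiple of $x_k\in W_1$.

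The main obstacle is conceptual rather than computational: the channel between consecutive agents is a single real number, not a whole subspace, so one must rule out an agent reconstructing useful signal by combining its lone feature with the one-dimensional summary it receives. The pass-through sub-lemma, powered by self-orthogonality of least-squares predictors, is exactly what forecloses this, and verifying the disjointness of $z$-indices that makes the required orthogonalities hold is the one place where the precise backwards cyclic feature assignment of the construction is used.
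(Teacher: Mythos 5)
Your proof is correct and follows the same overall strategy as the paper's: an induction over passes that confines the running predictor to a suffix subspace, followed by an exact computation of the best achievable MSE over that subspace. The differences are in how the two supporting steps are executed, and in both places your version is the more direct one. The paper's structural lemma (Lemma~\ref{lem:predictor_structure_propagation}) only tracks which latent variables $z_j$ the predictor is supported on --- the $(p+1)$-dimensional space $\mathrm{span}(z_{k-p},\dots,z_k)$, which contains $z_k$ itself and so cannot alone yield a nonzero lower bound --- and then passes to the $p$-dimensional feature span via an independence argument that is slightly imprecise (it asserts $x_{k-p}$ is built only from $\{z_1,\dots,z_{k-p-1}\}$, though $x_{k-p}=z_{k-p}-z_{k-p-1}$ touches $z_{k-p}\in S_{p,k}$). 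Your invariant asserts membership in $W_p=\mathrm{span}(x_{k-p+1},\dots,x_k)$ directly, which is exactly what the final step needs, and your pass-through sub-lemma, driven by self-orthogonality (Corollary~\ref{cor:orthogonal}), makes explicit the step the paper's induction labels a ``trivial update.'' Likewise, the paper evaluates $\min_{w\in W_p}\mathbb{E}[(z_k-w)^2]$ via the Gaussian conditional-variance formula and the explicit inverse of a tridiagonal covariance matrix (Lemma~\ref{lem:mse_lower_bound_suffix}), whereas your identification of the orthogonal complement of $W_p$ inside $\mathrm{span}(z_{k-p},\dots,z_k)$ as the line through $s=\sum_{i=k-p}^{k}z_i$ yields the same value $1/(p+1)$ by a one-line projection computation, with no matrix inversion. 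Both routes prove the theorem; yours is tighter in its bookkeeping and more elementary in its linear algebra.
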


\begin{remark}
Recall that Corollary \ref{cor:best_linear_error} showed that in any graph containing a path of length $D$ for which each variable appeared in every contiguous sequence of length $M$, the excess error $\eta$ of the last Agent in the path was at most $O\left( \frac{M}{\sqrt{D}}\right)$. The lower bound in Theorem \ref{thm:main_lower_bound_propagation} above uses an instance in which $M = k$ and $D = p\cdot k$ and so gives a lower bound of $\Omega\left(\frac{M}{D}\right)$, showing that the dependence on $D$ in our upper bound cannot be improved by more than a quadratic factor.
\end{remark}

\subsubsection{Analysis of the Predictor Structure}

The key to analyzing this construction is to understand the structure of the sequential predictor $\hat{y}_t$. Because the features $x_i$ are constructed from the latent variables $z_j$, the predictor $\hat{y}_t$ will always be a linear combination of the $z_j$. We can characterize exactly which $z_j$ contribute to the predictor after each pass.

\begin{lemma}[Predictor Structure and Information Propagation]
\label{lem:predictor_structure_propagation}
Let $\hat{y}_{p,i}$ denote the predictor from the agent that observes feature $x_i$ during pass $p$. Let $S_{p,i} = \{ j \mid \mathbb{E}[\hat{y}_{p,i} z_j] \neq 0 \}$ be the set of latent variables correlated with the predictor. Let $\hat{y}_{p,0} = \hat{y}_{p-1,k}$ be the prediction from the end of the previous pass (with $\hat{y}_{0,k}=0$).

For any pass $p \ge 1$, the following hold:
\begin{enumerate}
    \item The predictor does not change during the initial phase of the pass. For $i=1, \ldots, k-p$, we have $\hat{y}_{p,i} = \hat{y}_{p,i-1}$.
    \item The first non-trivial update in pass $p$ occurs at the agent observing feature $x_{k-p+1}$.
    \item At the end of pass $p$, the predictor $\hat{y}_{p,k}$ is a linear combination of the variables in the set $S_{p,k} = \{z_k, z_{k-1}, \ldots, z_{k-p}\}$.
\end{enumerate}
\end{lemma}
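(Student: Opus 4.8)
The plan is to prove all three parts simultaneously by induction on the pass index $p$, strengthening the statement so as to also control the \emph{signs} of the coefficients. Two facts will do most of the work. First, since $z_1,\dots,z_k$ are independent and standardized they form an orthonormal system in $L_2$, so if $w=\sum_j c_j z_j$ then $\mathbb{E}[wz_j]=c_j$; thus $S_{p,i}$ is literally the set of nonzero coefficients, and every correlation we need can be read off a coefficient vector. Second, every agent on the path has a single parent, so its predictor is the least-squares projection of $Y=z_k$ onto $\mathrm{span}(x_i,\hat{y}_{p,i-1})$; by Corollary~\ref{cor:orthogonal} this predictor is self-orthogonal, and by Lemma~\ref{lem:per_step_multiaccuracy_dag} it is multiaccurate with respect to $\hat{y}_{p,i-1}$. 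The strengthened inductive hypothesis I will carry is that $\hat{y}_{p-1,k}=\sum_j c_j z_j$ has support exactly $\{k-p+1,\dots,k\}$ with $c_k>0$ and $c_j<0$ for $k-p+1\le j\le k-1$; the base case $p=1$ follows from the direct computation $\hat{y}_{1,k}=\tfrac12 z_k-\tfrac12 z_{k-1}$.

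For part (1) I would note that for $i\le k-p$ the feature $x_i$ only involves latent variables of index at most $k-p<k-p+1$, so by the independence structure of Definition~\ref{def:lb} it is uncorrelated with $z_k$ and (by the inductive support bound) with $\hat{y}_{p,i-1}$. Together with self-orthogonality of $\hat{y}_{p,i-1}$, this makes the residual $z_k-\hat{y}_{p,i-1}$ orthogonal to both $x_i$ and $\hat{y}_{p,i-1}$, so by uniqueness of the $L_2$ projection the agent simply reproduces its parent: $\hat{y}_{p,i}=\hat{y}_{p,i-1}$, and hence $\hat{y}_{p,k-p}=\hat{y}_{p-1,k}$. For part (2), at $i=k-p+1$ the correlation $\mathbb{E}[x_{k-p+1}(z_k-\hat{y}_{p,k-p})]$ equals $\mathbb{E}[x_{k-p+1}z_k]$ minus the $z_{k-p+1}$-coefficient of $\hat{y}_{p,k-p}=\hat{y}_{p-1,k}$, which the inductive hypothesis guarantees is nonzero; so the residual is not orthogonal to $x_{k-p+1}$ and the predictor must change, strictly decreasing MSE by Lemma~\ref{lem:mse_improvement_closeness_dag}. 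A one-line self-orthogonality argument shows that any genuine move off $\hat{y}_{p,k-p}$ forces the coefficient $\alpha$ on $x_{k-p+1}$ to be nonzero (if $\alpha=0$ the projection would collapse back onto $\hat{y}_{p,k-p}$ with weight exactly $1$), so $z_{k-p}$ — reachable only through $x_{k-p+1}$ — enters the support with coefficient $-\alpha\neq 0$.

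To finish part (3) and close the induction I would propagate the coefficient vector through the remaining agents $i=k-p+1,\dots,k$ by solving, at each step, the $2\times 2$ normal equations for regressing $z_k$ on $\{x_i,\hat{y}_{p,i-1}\}$. The crucial simplification is that self-orthogonality identifies the $z_k$-coefficient of $\hat{y}_{p,i-1}$ with $\sigma^2:=\mathbb{E}[\hat{y}_{p,i-1}^2]$, after which Cramer's rule gives the signs for free: for every interior agent ($i\le k-1$, where $\mathbb{E}[x_iz_k]=0$) the parent weight $\beta$ is positive, the $z_k$-coefficient stays $\mathbb{E}[\hat{y}_{p,i}^2]>0$, and the two touched coordinates $z_{i-1},z_i$ both collapse to a common negative value proportional to $c_{i-1}+c_i$ (the corresponding coefficients of $\hat{y}_{p,i-1}$). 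Thus the sign pattern — one positive coefficient at $z_k$, the rest strictly negative — is preserved, which yields the exact support $S_{p,k}=\{k-p,\dots,k\}$. The containment $\hat{y}_{p,k}\in\mathrm{span}(z_{k-p},\dots,z_k)$ used here is immediate, since every feature appearing through pass $p$ lies in $\mathrm{span}(x_{k-p+1},\dots,x_k)$.

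The step I expect to be the main obstacle is the final agent of each pass, the one observing $x_k$, because $\mathbb{E}[x_kz_k]=1\neq 0$ destroys the clean orthogonality that trivializes the signs for interior agents. Here I would again lean on the self-orthogonality identity $c_k=\sigma^2=\sum_j c_j^2$ for $\hat{y}_{p,k-1}$ together with coefficient-sum-zero membership in $\mathrm{span}(x_{k-p+1},\dots,x_k)$; these reduce positivity of $\beta$ to $c_k+c_{k-1}=\sum_{j\le k-2}|c_j|>0$ and reduce the negativity of the updated $z_{k-1}$-coefficient to $(c_{k-1})^2<\sigma^2(1-\sigma^2)$, which holds because $\sigma^2(1-\sigma^2)-(c_{k-1})^2=\sum_{j\le k-2}c_j^2>0$ whenever $p\ge 2$. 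Getting these inequalities and the $p=1$ edge case (where the parent prediction is $0$) to line up is the delicate bookkeeping; everything else is routine projection algebra.
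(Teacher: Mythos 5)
Your proposal follows the same skeleton as the paper's proof: induction on the pass index $p$, using the orthonormality of the $z_j$ to read supports off coefficient vectors, the independence structure to show that agents observing $x_i$ with $i\le k-p$ make trivial updates, and a direct computation of $\hat{y}_{1,k}=\tfrac12(z_k-z_{k-1})$ as the base case. Where you genuinely diverge is in how the exact-support and non-triviality claims are handled. The paper's inductive step asserts that ``the support of the new predictor will be the union of the supports'' of the parent prediction and the local feature --- which, as written, only establishes the containment $S_{p,k}\subseteq\{z_k,\dots,z_{k-p}\}$, since coefficients could in principle cancel; and its justification of part (2) (``this allows a non-trivial update'') leans on $z_{k-p+1}$ having a genuinely nonzero coefficient in the parent, which is exactly the claim the union argument does not secure. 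Your strengthened induction --- tracking that $c_k>0$ and all other coefficients are strictly negative, verified via the $2\times 2$ normal equations, the self-orthogonality identity $c_k=\mathbb{E}[\hat{y}^2]$, and the coefficient-sum-zero constraint --- closes this gap and rules out cancellation at every step. I checked your key identities (the two touched coordinates collapsing to $\sigma^2(c_{i-1}+c_i)/\det$ for interior agents, positivity of $\beta$ at the $x_k$ step reducing to $c_k+c_{k-1}>0$, and $\sigma^2(1-\sigma^2)-c_{k-1}^2=\sum_{j\le k-2}c_j^2$) and they are correct. The trade-off: your argument is longer and requires the delicate bookkeeping at the $x_k$ agent that you flag, whereas the paper's looser argument suffices for the downstream application (Theorem~\ref{thm:main_lower_bound_propagation} only uses the containment direction of part (3)), but yours is the one that actually proves the lemma as literally stated, with $S_{p,i}$ defined by nonzero correlations.
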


\begin{proof}
The proof is by induction on the pass number $p$.

\textbf{Base Case (p=1):} The initial prediction is $\hat{y}_{1,0} = \hat{y}_{0,k} = 0$. For agents observing features $x_i$ with $i=1, \ldots, k-1$, the local feature $x_i$ is uncorrelated with the target $Y=z_k$. Since the incoming prediction $\hat{y}_{1,i-1}$ is also 0, the optimal new prediction is $\hat{y}_{1,i}=0$. This proves (1) and (2) for the first pass. 

At the agent observing feature $x_k$, the learner sees feature $x_k = z_k - z_{k-1}$ and receives the prediction $\hat{y}_{1,k-1}=0$. The optimal predictor is $\hat{y}_{1,k} = \frac{\mathbb{E}[Y x_k]}{\mathbb{E}[x_k^2]} x_k = \frac{1}{2}(z_k - z_{k-1})$. The predictor is a linear combination of $z_k$ and $z_{k-1}$, so its support is $S_{1,k} = \{z_k, z_{k-1}\}$. This matches the set described in (3) for $p=1$ (since $z_{k-p} = z_{k-1}$).

\textbf{Inductive Step:} Assume the claims hold for pass $p-1$. Thus, the predictor at the end of pass $p-1$, $\hat{y}_{p-1,k}$, is a linear combination of variables in $S_{p-1,k} = \{z_k, \ldots, z_{k-(p-1)}\}$.

Now consider pass $p$. The initial prediction for this pass is $\hat{y}_{p,0} = \hat{y}_{p-1,k}$.
For an agent observing feature $x_i$ with $i \le k-p$, the local feature is $x_i = z_i - z_{i-1}$. The indices $i$ and $i-1$ are both smaller than $k-p+1$. Therefore, $x_i$ is uncorrelated with every $z_j \in S_{p-1,k}$. The incoming predictor $\hat{y}_{p,i-1}$ (which equals $\hat{y}_{p-1,k}$ for $i=1$) has support $S_{p-1,k}$, so it is also uncorrelated with $x_i$. The update is trivial: $\hat{y}_{p,i} = \hat{y}_{p,i-1}$. This proves (1) for pass $p$.

The first agent for which $x_i$ is correlated with the existing predictor is the one observing feature $x_{k-p+1}$. Its feature $x_{k-p+1} = z_{k-p+1} - z_{k-p}$ contains the variable $z_{k-p+1}$, which is in the support $S_{p-1,k}$ of the incoming predictor $\hat{y}_{p,k-p} = \hat{y}_{p-1,k}$. This allows a non-trivial update, proving (2).

When the agent observing feature $x_{k-p+1}$ is activated, its new predictor $\hat{y}_{p, k-p+1}$ will be a linear combination of the incoming predictor $\hat{y}_{p-1,k}$ and its local feature $x_{k-p+1}$. The support of the new predictor will be the union of the supports of these two terms. The support of $\hat{y}_{p-1,k}$ is $S_{p-1,k} = \{z_k, \ldots, z_{k-p+1}\}$ and the support of $x_{k-p+1}$ is $\{z_{k-p+1}, z_{k-p}\}$.

Thus, the support of the new predictor is $S_{p, k-p+1} = S_{p-1,k} \cup \{z_{k-p+1}, z_{k-p}\} = \{z_k, \ldots, z_{k-p}\}$. Subsequent updates within pass $p$ (for agents observing features $x_i$ with $i > k-p+1$) will only involve features $x_i$ constructed from latent variables already in this new support set. Therefore, no new variables are introduced, and the support set remains unchanged for the rest of the pass. At the end of the pass, the support is $S_{p,k} = \{z_k, \ldots, z_{k-p}\}$. This proves (3) for pass $p$.
\end{proof}

\subsubsection{Lower Bound on Mean Squared Error}

Lemma~\ref{lem:predictor_structure_propagation} establishes which features contribute to the predictor after a given number of passes. We now calculate the minimum possible error for a predictor that is restricted to a suffix of the features.

\begin{lemma}[MSE for Suffix Predictor]
\label{lem:mse_lower_bound_suffix}
For any $j \in \{2, \ldots, k\}$, the minimum mean squared error for predicting $Y=z_k$ using a linear combination of the features $\{x_j, x_{j+1}, \ldots, x_k\}$ is given by:
$$\min_{f \in \text{span}(x_j, \ldots, x_k)} \mathbb{E}[(Y - f(x))^2] = \frac{1}{k-j+2}$$
\end{lemma}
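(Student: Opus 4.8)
The plan is to reduce the minimization to an elementary quadratic optimization over coefficients, exploiting the fact that the $z_i$ are independent standard Gaussians. First I would write an arbitrary predictor in the span as $f = \sum_{i=j}^{k} \alpha_i x_i$ and substitute the definitions $x_i = z_i - z_{i-1}$ to re-express $f$ as a linear combination of the latent variables $z_{j-1}, z_j, \ldots, z_k$. Collecting terms, the coefficient on $z_{j-1}$ is $-\alpha_j$, the coefficient on $z_m$ for $j \le m \le k-1$ is $\alpha_m - \alpha_{m+1}$, and the coefficient on $z_k$ is $\alpha_k$. Since the $z_i$ are independent with unit variance, the residual $Y - f = z_k - f$ has squared $L_2$ norm equal to the sum of the squares of its coefficients, so
\[
\mathbb{E}[(Y - f)^2] = \alpha_j^2 + \sum_{m=j}^{k-1}(\alpha_m - \alpha_{m+1})^2 + (1 - \alpha_k)^2.
\]

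The next step is to recognize this objective as a sum of squared consecutive differences with fixed boundary values. Introducing the auxiliary endpoints $\alpha_{j-1} := 0$ and $\alpha_{k+1} := 1$, the three groups of terms above merge into a single telescoping form $\sum_{m=j-1}^{k} (\alpha_{m+1} - \alpha_m)^2$, where we minimize freely over $\alpha_j, \ldots, \alpha_k$ subject to the pinned endpoints. The number of consecutive gaps in this chain is exactly $G = k - j + 2$, matching the $k-j+2$ latent variables $z_{j-1}, \ldots, z_k$ that the features $\{x_j, \ldots, x_k\}$ touch.

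Finally I would invoke the standard discrete Dirichlet estimate: for any sequence with $a_0 = 0$ and $a_G = 1$, Cauchy--Schwarz gives $1 = (a_G - a_0)^2 = \left(\sum_{t=1}^{G}(a_t - a_{t-1})\right)^2 \le G \sum_{t=1}^{G}(a_t - a_{t-1})^2$, hence $\sum_{t=1}^{G}(a_t - a_{t-1})^2 \ge 1/G$, with equality attained by the uniform interpolation $a_t = t/G$ (i.e. $\alpha_i = (i - j + 1)/(k-j+2)$ for $i = j, \ldots, k$). Substituting $G = k-j+2$ yields the claimed minimum value $\frac{1}{k-j+2}$, completing the proof. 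I do not anticipate a genuine obstacle here; the only point requiring care is the bookkeeping in the change of variables — correctly identifying that $z_{j-1}$ enters with the free coefficient $-\alpha_j$ (contributing the $\alpha_j^2$ boundary term) and that the target contributes the $(1-\alpha_k)^2$ term — so that the gap count $k-j+2$ comes out right, which I verified is consistent with the base cases $j=k$ (value $1/2$) and $j=k-1$ (value $1/3$).
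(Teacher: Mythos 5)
Your proof is correct, and it takes a genuinely different route from the paper's. The paper computes the optimal error as a conditional variance: it writes $Y = \sum_{i=j}^k x_i + z_{j-1}$, reduces the problem to $\mathrm{Var}(z_{j-1} \mid x_j,\ldots,x_k)$, and then evaluates the Gaussian conditional variance formula, which requires knowing the $(1,1)$ entry of the inverse of the $(k-j+1)\times(k-j+1)$ tridiagonal matrix with $2$ on the diagonal and $-1$ off it. Your argument instead parametrizes an arbitrary element of the span by coefficients $\alpha_j,\ldots,\alpha_k$, changes variables to the latent $z$'s, and recognizes the objective
\[
\alpha_j^2 + \sum_{m=j}^{k-1}(\alpha_{m+1}-\alpha_m)^2 + (1-\alpha_k)^2
\]
as a discrete Dirichlet energy over $G = k-j+2$ gaps with pinned endpoints $0$ and $1$, which Cauchy--Schwarz minimizes at $1/G$ with equality at the uniform interpolation. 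Your bookkeeping is right (the residual $z_k - f$ carries coefficient $\alpha_j$ on $z_{j-1}$ and $1-\alpha_k$ on $z_k$, giving exactly the two boundary terms), and the base cases check out. What your approach buys: it is more elementary and self-contained (no conditional-variance formula, no tridiagonal inverse), it exhibits the optimal coefficients explicitly, and it only uses that the $z_i$ are uncorrelated with unit variance rather than jointly Gaussian. What the paper's approach buys: because it goes through the conditional expectation, it shows the same lower bound holds against \emph{arbitrary measurable} functions of the suffix features, not just linear combinations --- though the lemma as stated, and its downstream uses, only require the linear-span version, so nothing is lost by your argument.
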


\begin{proof}
The optimal predictor is the conditional expectation $\hat{Y}_j = \mathbb{E}[Y | x_j, \ldots, x_k]$. The minimum MSE is the variance of the residual, $\mathbb{E}[(Y - \hat{Y}_j)^2] = \text{Var}(Y | x_j, \ldots, x_k)$.

We can express the target variable $Y$ in terms of the features and a latent variable:
$$ Y = z_k = (z_k - z_{k-1}) + (z_{k-1} - z_{k-2}) + \dots + (z_j - z_{j-1}) + z_{j-1} = \sum_{i=j}^k x_i + z_{j-1} $$
Since the features $x_j, \ldots, x_k$ are given, the conditional variance is:
$$ \text{Var}(Y | x_j, \ldots, x_k) = \text{Var}\left(\sum_{i=j}^k x_i + z_{j-1} \Big| x_j, \ldots, x_k\right) = \text{Var}(z_{j-1} | x_j, \ldots, x_k) $$
Since all variables are jointly Gaussian, we can use the formula for conditional variance:
$$ \text{Var}(z_{j-1} | x_j, \ldots, x_k) = \text{Var}(z_{j-1}) - \text{Cov}(z_{j-1}, X_{j:k}) \text{Var}(X_{j:k})^{-1} \text{Cov}(X_{j:k}, z_{j-1}) $$
where $X_{j:k} = (x_j, \ldots, x_k)^T$ is the vector of features. We have $\text{Var}(z_{j-1}) = 1$. The covariance vector is $\text{Cov}(z_{j-1}, X_{j:k}) = (\mathbb{E}[z_{j-1}x_j], \ldots, \mathbb{E}[z_{j-1}x_k])$.
For $i>j$, $\mathbb{E}[z_{j-1}x_i] = \mathbb{E}[z_{j-1}(z_i - z_{i-1})] = 0$. For $i=j$, $\mathbb{E}[z_{j-1}x_j] = \mathbb{E}[z_{j-1}(z_j - z_{j-1})] = -1$. So, $\text{Cov}(z_{j-1}, X_{j:k}) = (-1, 0, \ldots, 0)$.

Let $C = \text{Var}(X_{j:k})$. The MSE is $1 - (-1, 0, \ldots, 0) C^{-1} (-1, 0, \ldots, 0)^T = 1 - (C^{-1})_{11}$.

The covariance matrix $C$ is a $(k-j+1) \times (k-j+1)$ matrix. For $i, l \ge j > 1$, the entries are $\mathbb{E}[x_i x_l]$. The diagonal entries are $\mathbb{E}[x_i^2]=2$. The off-diagonal entries are $\mathbb{E}[x_i x_{i+1}] = -1$. Thus, $C$ is the symmetric tridiagonal matrix with 2s on the diagonal and -1s on the first off-diagonal.

Let $m=k-j+1$. The inverse of this $m \times m$ matrix has  entries $(C^{-1})_{ab} = \frac{1}{m+1}\min(a,b)(m+1 - \max(a,b))$.
We need the $(1,1)$ entry:
$$ (C^{-1})_{11} = \frac{1}{m+1} \cdot 1 \cdot (m+1-1) = \frac{m}{m+1} $$
Substituting back, the MSE is $1 - \frac{m}{m+1} = \frac{1}{m+1} = \frac{1}{k-j+2}$.
\end{proof}

Putting these two lemmas together lets us prove the lower bound:

\begin{proof}[Proof of Theorem \ref{thm:main_lower_bound_propagation}]
From Lemma~\ref{lem:predictor_structure_propagation}, we know that the predictor from the agent observing feature $x_k$ at the end of pass $p$, $\hat{y}_{p,k}$, is a linear combination of the latent variables in the set $S_{p,k} = \{z_k, z_{k-1}, \ldots, z_{k-p}\}$.

Crucially, this means that $\hat{y}_{p,k}$ is independent of the latent variables $\{z_1, \ldots, z_{k-p-1}\}$.

The features $\{x_1, \ldots, x_{k-p}\}$ are constructed exclusively from these latent variables. Therefore, $\hat{y}_{p,k}$ is independent of the features $\{x_1, \ldots, x_{k-p}\}$.

This implies that the predictor $\hat{y}_{p,k}$ can be expressed as a function of only the features $\{x_{k-p+1}, \ldots, x_k\}$.

The minimum possible MSE for any such predictor is given by Lemma~\ref{lem:mse_lower_bound_suffix}, with $j = k-p+1$. Substituting this into the formula gives:
$$ \mathbb{E}[(Y - \hat{y}_{p,k})^2] \ge \min_{f \in \text{span}(x_{k-p+1}, \ldots, x_k)} \mathbb{E}[(Y - f(x))^2] = \frac{1}{k - (k-p+1) + 2} = \frac{1}{p+1} $$
This completes the proof.
\end{proof}

\subsection{Learning Requires Depth in Arbitrary DAGs}
\label{sec:lower_bound_general}

The previous section proved a lower bound specifically for path graphs (in which features were arranged in a worst-case manner) but we now prove a more general result. We demonstrate a fundamental separation between what a single, global learner can achieve and what is possible for distributed learning in any DAG. For any target depth $D$, we construct a specific problem instance where a global learner achieves zero error, yet the performance of distributed learning on \emph{any} DAG of depth $D$ with \emph{any} allocation of (single) variables to nodes in the DAG is strictly limited by its depth. This shows that \emph{depth} is the fundamental bottleneck to distributed learning in our model, and our upper bounds (which depend only on the depth of the DAG) cannot be modified to depend exclusively on any other graph property that can be scaled independently of depth.

We use the hard problem distribution from Definition~\ref{def:lb}, where $Y=z_k$ and features are defined as $x_j = z_j - z_{j-1}$. As we will show, solving this problem requires sequentially uncovering the latent variables $z_k, z_{k-1}, \ldots, z_1$, a task whose difficulty is dictated by the DAG's structure.

\begin{definition}[Depth of a Node and a DAG]
The \textbf{depth} of a node $v$ in a DAG, denoted $\text{depth}(v)$, is the length of the longest path from any root node to $v$. The \textbf{depth} of a DAG, $D$, is the maximum depth of any node in the graph.
\end{definition}

The core of our argument is to show that information is fundamentally bottlenecked by the DAG's depth. To make this argument as strong as possible, we establish a lower bound on error that holds even under the most favorable conditions for the learner. Specifically, we analyze a \emph{best-case feature allocation}, where features are assigned to nodes in the most helpful way possible to reveal the target $Y$. If we can show that error is high even in this best case, it must also be high for any other (less favorable) feature allocation.

\begin{lemma}[Information Propagation on a General DAG]
\label{lem:info_prop_dag}
Consider a node $v$ in a DAG with $\text{depth}(v) = p$. Even under an optimal feature allocation designed to reveal $Y$ as quickly as possible along a path to $v$, the predictor $\hat{y}_v$ can be a function of at most the latent variables in the set $\{z_k, z_{k-1}, \ldots, z_{k-p}\}$.
\end{lemma}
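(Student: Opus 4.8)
The plan is to prove the statement by strong induction on the depth $p = \text{depth}(v)$, tracking not the predictor itself but its \emph{support} $S_v = \{j : \mathbb{E}[\hat{y}_v z_j] \neq 0\}$, exactly as in the path analysis of Lemma~\ref{lem:predictor_structure_propagation}. The first move is a clean reduction. Since $z_1, \ldots, z_k$ are independent standard Gaussians, they form an orthonormal system in $L_2$, and by construction $\hat{y}_v$ is the orthogonal projection of $Y = z_k$ onto $V_v := \text{span}(\{x_i\} \cup \{\hat{y}_q\}_{q \in \text{Pa}(v)})$, where $x_i$ is the single feature assigned to $v$. Every vector in $V_v$ is a linear combination of the $z_j$ appearing among the inputs, so $S_v$ is contained in the latent variables occurring in $V_v$, namely $\{z_i, z_{i-1}\} \cup \bigcup_{q \in \text{Pa}(v)} S_q$. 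I would stress that this containment \emph{alone} is too weak, since a feature $x_i$ of small index would naively enlarge the support; the real content is to show that such ``disconnected'' features contribute nothing to the projection.

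The inductive hypothesis is that every node $u$ with $\text{depth}(u) \le p-1$ satisfies $S_u \subseteq \{z_k, z_{k-1}, \ldots, z_{k-p+1}\}$. Because each parent $q \in \text{Pa}(v)$ has depth at most $p-1$, the union of parent supports lies in $\{z_k, \ldots, z_{k-p+1}\}$, so every $\hat{y}_q$ is a function of $z_{k-p+1}, \ldots, z_k$ only. The argument then splits on the node's own feature $x_i = z_i - z_{i-1}$. If $i \ge k-p+2$, both $z_i, z_{i-1}$ already lie in $\{z_k, \ldots, z_{k-p+1}\}$ and no new variable enters; if $i = k-p+1$, then $x_i$ introduces exactly the single new variable $z_{k-p}$, giving $S_v \subseteq \{z_k, \ldots, z_{k-p}\}$. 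The main obstacle is the remaining case $i \le k-p$, where I must rule out that $x_i$ (involving $z_i, z_{i-1}$ with indices at most $k-p$) contaminates the support. This is precisely where the chain-independence structure noted after Definition~\ref{def:lb} is essential: since $z_i, z_{i-1}$ with indices at most $k-p$ are independent of $z_{k-p+1}, \ldots, z_k$, the feature $x_i$ is independent both of $Y = z_k$ and of every parent prediction $\hat{y}_q$. Hence $x_i$ is orthogonal to $\text{span}(\{\hat{y}_q\}_q)$, and the coefficient assigned to $x_i$ in the projection is $\mathbb{E}[z_k x_i]/\mathbb{E}[x_i^2] = 0$, so $x_i$ drops out entirely and $S_v \subseteq \{z_k, \ldots, z_{k-p+1}\}$. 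In all three cases $S_v \subseteq \{z_k, \ldots, z_{k-p}\}$, closing the induction.

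For the base case I would treat a root $v$ (no parents), whose predictor is the projection of $z_k$ onto $\text{span}(x_i)$; this is nonzero only when $x_i$ is correlated with $z_k$, pinning the support to the top of the chain and matching the claimed set at the smallest depth (up to the convention fixing whether a root is depth $0$ or $1$, which only shifts the index and is absorbed by aligning the base case with the depth definition). Two structural points are worth flagging. First, the argument never used anything about \emph{which} feature is assigned where beyond the index of $x_i$, so the bound holds for \emph{every} allocation, and in particular for the best-case allocation that maximally advances the frontier by one variable per depth level, which is exactly what the lemma asserts. Second, the support bound is what powers the downstream error floor: a predictor supported on $\{z_k, \ldots, z_{k-p}\}$ is a function of the feature suffix $\{x_{k-p+1}, \ldots, x_k\}$, so Lemma~\ref{lem:mse_lower_bound_suffix} applies and converts depth into a lower bound on MSE. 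The only delicate step is the independence/zero-weight argument of the second paragraph; the rest is bookkeeping on contiguous index blocks.
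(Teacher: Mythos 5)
Your proposal is correct and follows essentially the same route as the paper's proof: induction on depth, tracking the set of latent variables $z_j$ on which the predictor has nonzero coefficient, and using the independence of low-index $z_j$'s from $Y$ and from the parents' predictions to show that a feature $x_i$ with $i \le k-p$ receives zero weight in the projection. Your three-way case split on the index $i$ makes the ``holds for every allocation'' point slightly more explicit than the paper's best-case framing, but the substance is identical.
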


\begin{proof}
The proof is by induction on the depth $p$ of the node $v$. Let $\text{Info}(f)$ denote the set of latent variables $\{z_i\}$ that a function $f$ is linearly dependent on. The claim is that for any node $v$ with $\text{depth}(v)=p$, we have $\text{Info}(\hat{y}_v) \subseteq \{z_k, z_{k-1}, \ldots, z_{k-p}\}$.

\textbf{Base Case ($p=1$):} A node $v$ with $\text{depth}(v)=1$ is a root node and receives no parent predictions. Its predictor $\hat{y}_v$ is the orthogonal projection of $Y$ onto the span of its local features, $\text{span}(\{x_j\}_{j \in S_v})$. Since $\hat{y}_v \in \text{span}(\{x_j\}_{j \in S_v})$, its information content is bounded by the union of the information content of its features: $\text{Info}(\hat{y}_v) \subseteq \bigcup_{j \in S_v} \text{Info}(x_j)$.
To learn as much as possible about $Y=z_k$, the best-case allocation would assign features correlated with $z_k$. The most informative single feature is $x_k = z_k - z_{k-1}$. If $v$ is assigned this feature, then $\text{Info}(\hat{y}_v) \subseteq \text{Info}(x_k) = \{z_k, z_{k-1}\}$. This satisfies the claim for $p=1$.

\textbf{Inductive Step:} Assume the claim holds for all nodes with depth up to $p-1$. Let $v$ be a node with $\text{depth}(v) = p$. All parents $u \in \text{Pa}(v)$ must have $\text{depth}(u) \le p-1$.
By the inductive hypothesis, for any parent $u$, $\text{Info}(\hat{y}_u) \subseteq \{z_k, \ldots, z_{k-(p-1)}\}$.

The predictor $\hat{y}_v$ is the projection of $Y$ onto the subspace $\mathcal{S}_v = \text{span}(\{\hat{y}_u\}_{u \in \text{Pa}(v)} \cup \{x_j\}_{j \in S_v})$. Since $\hat{y}_v \in \mathcal{S}_v$, its information content is bounded by the union of the information from its basis vectors:
\[ \text{Info}(\hat{y}_v) \subseteq \left( \bigcup_{u \in \text{Pa}(v)} \text{Info}(\hat{y}_u) \right) \cup \left( \bigcup_{j \in S_v} \text{Info}(x_j) \right). \]
Substituting the inductive hypothesis, this becomes:
\[ \text{Info}(\hat{y}_v) \subseteq \{z_k, \ldots, z_{k-(p-1)}\} \cup \left( \bigcup_{j \in S_v} \text{Info}(x_j) \right). \]
To introduce a new, "deeper" latent variable $z_{k-p}$ into the predictor, the local features must contain it. The only features constructed from $z_{k-p}$ are $x_{k-p} = z_{k-p} - z_{k-p-1}$ and $x_{k-p+1} = z_{k-p+1} - z_{k-p}$.
Assigning $x_{k-p}$ is not useful, as it is uncorrelated with $Y=z_k$ and with all latent variables in the parents' information set, $\{z_k, \ldots, z_{k-(p-1)}\}$. Thus, it would not receive non-zero weight in the projection.
The only way to productively introduce $z_{k-p}$ is to assign feature $x_{k-p+1}$. This feature links the new variable $z_{k-p}$ to the existing information set via $z_{k-p+1}$.

Under this optimal assignment, the information set of the local features is $\text{Info}(x_{k-p+1}) = \{z_{k-p+1}, z_{k-p}\}$. The total information available to node $v$ is therefore:
\[ \text{Info}(\hat{y}_v) \subseteq \{z_k, \ldots, z_{k-(p-1)}\} \cup \{z_{k-p+1}, z_{k-p}\} = \{z_k, \ldots, z_{k-p}\}. \]
No other feature assignment can introduce information about a deeper latent variable (e.g., $z_{k-p-1}$) because doing so would require prior information about $z_{k-p}$, which is not available at depth $p$. This completes the induction.
\end{proof}

This lemma establishes a fundamental limit on information propagation based on graph depth. We can now combine this with our earlier MSE calculation.

\begin{theorem}[Depth as a Fundamental Barrier]
\label{thm:general_lower_bound}
For any depth $D \ge 1$, there exists a joint distribution over $(X,Y)$  where the optimal linear predictor has an MSE of 0, but for any DAG with depth $D$ and for any allocation of single features to its agents, the MSE of the predictor at any node $v$ is lower bounded by:
\[ \text{MSE}(\hat{y}_v) \ge \frac{1}{D+1} \]
\end{theorem}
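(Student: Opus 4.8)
The plan is to instantiate the lower-bound distribution of Definition~\ref{def:lb} with $k = D+1$ and then combine the two ingredients already established: the information-propagation bound of Lemma~\ref{lem:info_prop_dag} and the suffix-MSE computation of Lemma~\ref{lem:mse_lower_bound_suffix}. The first claim of the theorem is immediate: by construction $Y = \sum_{i=1}^{k} x_i$, so the global least-squares predictor with all unit weights achieves $\MSE{f} = 0$. All the real work is in the lower bound for the distributed learners, and since Lemma~\ref{lem:info_prop_dag} is stated for the \emph{best-case} allocation, any lower bound we derive from it automatically transfers to every feature allocation.

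Fix any DAG of depth $D$, any single-feature allocation, and any node $v$; let $p = \mathrm{depth}(v) \le D$. First I would invoke Lemma~\ref{lem:info_prop_dag} to conclude that $\hat{y}_v$ is confined to the latent variables $\{z_k, \ldots, z_{k-p}\}$. The key step, and the one I expect to require the most care, is to upgrade this ``information content'' statement to the sharper structural claim that $\hat{y}_v \in \mathrm{span}(x_{k-p+1}, \ldots, x_k)$. This upgrade is essential: the containment $\mathrm{Info}(\hat{y}_v) \subseteq \{z_k, \ldots, z_{k-p}\}$ by itself is too weak, since the perfect predictor $\sum_{i=1}^k x_i = z_k$ has information content $\{z_k\}$ and MSE $0$, and hence satisfies the containment for \emph{every} $p$. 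What actually rules it out at depth $p$ is that realizing any free-standing $z_{k-p}$ component would require assigning nonzero projection weight to one of the features $x_1, \ldots, x_{k-p}$; but the proof of Lemma~\ref{lem:info_prop_dag} shows that along the best path to $v$ these features are uncorrelated with $Y$ and with every latent variable in the parents' information sets, and so receive zero weight in the projection. Consequently the only features that can enter $\hat{y}_v$ with nonzero weight are $x_k, x_{k-1}, \ldots, x_{k-p+1}$, which gives $\hat{y}_v \in \mathrm{span}(x_{k-p+1}, \ldots, x_k)$.

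With this structural containment in place, the conclusion follows by applying Lemma~\ref{lem:mse_lower_bound_suffix} with $j = k-p+1$ (valid since $1 \le p \le D = k-1$ forces $2 \le j \le k$), which yields
\[ \MSE{\hat{y}_v} \ge \min_{f \in \mathrm{span}(x_{k-p+1}, \ldots, x_k)} \mathbb{E}[(Y - f)^2] = \frac{1}{k-(k-p+1)+2} = \frac{1}{p+1}. \]
Finally, since $p \mapsto 1/(p+1)$ is decreasing and every node of the DAG satisfies $p \le D$, we obtain the uniform bound $\MSE{\hat{y}_v} \ge 1/(p+1) \ge 1/(D+1)$ simultaneously for all nodes $v$, which is exactly the claim. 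This endgame mirrors the proof of Theorem~\ref{thm:main_lower_bound_propagation}, with the path-specific support analysis of Lemma~\ref{lem:predictor_structure_propagation} replaced by the depth-based analysis of Lemma~\ref{lem:info_prop_dag}.
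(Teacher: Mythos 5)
Your proof is correct and follows essentially the same route as the paper's: instantiate Definition~\ref{def:lb} with $k=D+1$, apply Lemma~\ref{lem:info_prop_dag} to constrain what a depth-$p$ node can depend on, and conclude via Lemma~\ref{lem:mse_lower_bound_suffix} with $j=k-p+1$ together with monotonicity of $1/(p+1)$. If anything, your write-up is more careful than the paper's at one point: the paper passes directly from $\mathrm{Info}(\hat y_v)\subseteq\{z_k,\ldots,z_{k-p}\}$ to invoking Lemma~\ref{lem:mse_lower_bound_suffix}, whereas you correctly note that this containment alone is insufficient (the perfect predictor $z_k$ satisfies it with MSE $0$) and that one must upgrade it to $\hat y_v\in\mathrm{span}(x_{k-p+1},\ldots,x_k)$ using the zero-projection-weight observation already contained in the proof of Lemma~\ref{lem:info_prop_dag}.
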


\begin{proof}
For a given depth $D \ge 1$, we use the label/feature distribution from Definition~\ref{def:lb} with a number of features $k$ such that $k > D$. (For instance, we can set $k=D+1$, the minimal number required for the argument). In this instance, the target variable is $Y = z_k$, and the features are $x_1=z_1$ and $x_i = z_i - z_{i-1}$ for $i>1$. The target can be written as a linear combination of the full feature set: $Y = \sum_{i=1}^k x_i$. Therefore, the optimal linear predictor using all features $\{x_1, \ldots, x_k\}$ is $f(X) = \sum_{i=1}^k x_i$, which achieves an MSE of 0.

Now, consider any DAG with depth $D$ and any allocation of single features to its nodes. Let $v$ be any node in the DAG. Its depth, $\text{depth}(v)$, is at most $D$. Lemma~\ref{lem:info_prop_dag} states that even under the best possible feature allocation for the learner, the predictor $\hat{y}_v$ can be a function of at most the latent variables $\{z_k, \ldots, z_{k-\text{depth}(v)}\}$. 

The minimum possible MSE for any such predictor is given by Lemma~\ref{lem:mse_lower_bound_suffix}, with $j = k-\text{depth}(v)+1$. Substituting this into the formula gives a lower bound on the MSE:
$$ \mathbb{E}[(Y - \hat{y}_v)^2] \ge \min_{f \in \text{span}(x_{k-\text{depth}(v)+1}, \ldots, x_k)} \mathbb{E}[(Y - f(x))^2] = \frac{1}{k - (k-\text{depth}(v)+1) + 2} = \frac{1}{\text{depth}(v)+1} $$
Since this holds for any node $v$, and $\text{depth}(v) \le D$, the error for any node is bounded by:
\[ \text{MSE}(\hat{y}_v) \ge \frac{1}{\text{depth}(v)+1} \ge \frac{1}{D+1} \]
This completes the proof.
\end{proof}

Focusing solely on the depth $p$ of a vertex in our lower bound construction, it is worth noting
that there is a gap between our general upper bounds on excess error, which scale as $1/\sqrt{p}$, and
the lower bound, which scales as $1/p$. It is an interesting theoretical problem to resolve what the
``right'' dependence on $p$ is, but in Figure~\ref{fig:lowerbound} we show simulation results for
the lower bound construction. The x-axis measures the depth of position of a vertex in the chain,
and the y-axis measures error. The red line shows train and test error at the vertex (which are
nearly identical and thus occluded due to the large sample size), and in the early stages of learning
the errors reduce in phases corresponding to the sweeps through subsequent blocks of the lower bound
construction. We also overlay on this simulation curve the best fits of the form
$\alpha/p$ and $\beta/\sqrt{p}$ for parameters $\alpha, \beta$. Numerically, the $\beta/\sqrt{p}$
family is the better fit, and this holds at longer chain lengths for the same lower bound construction,
suggesting empirically that our upper bounds may be close to tight, and that this might be demonstrated by a better analysis of the same lower bound instance.

\begin{figure}[H]
    \centering
    \includegraphics[width=0.5\textwidth,keepaspectratio]{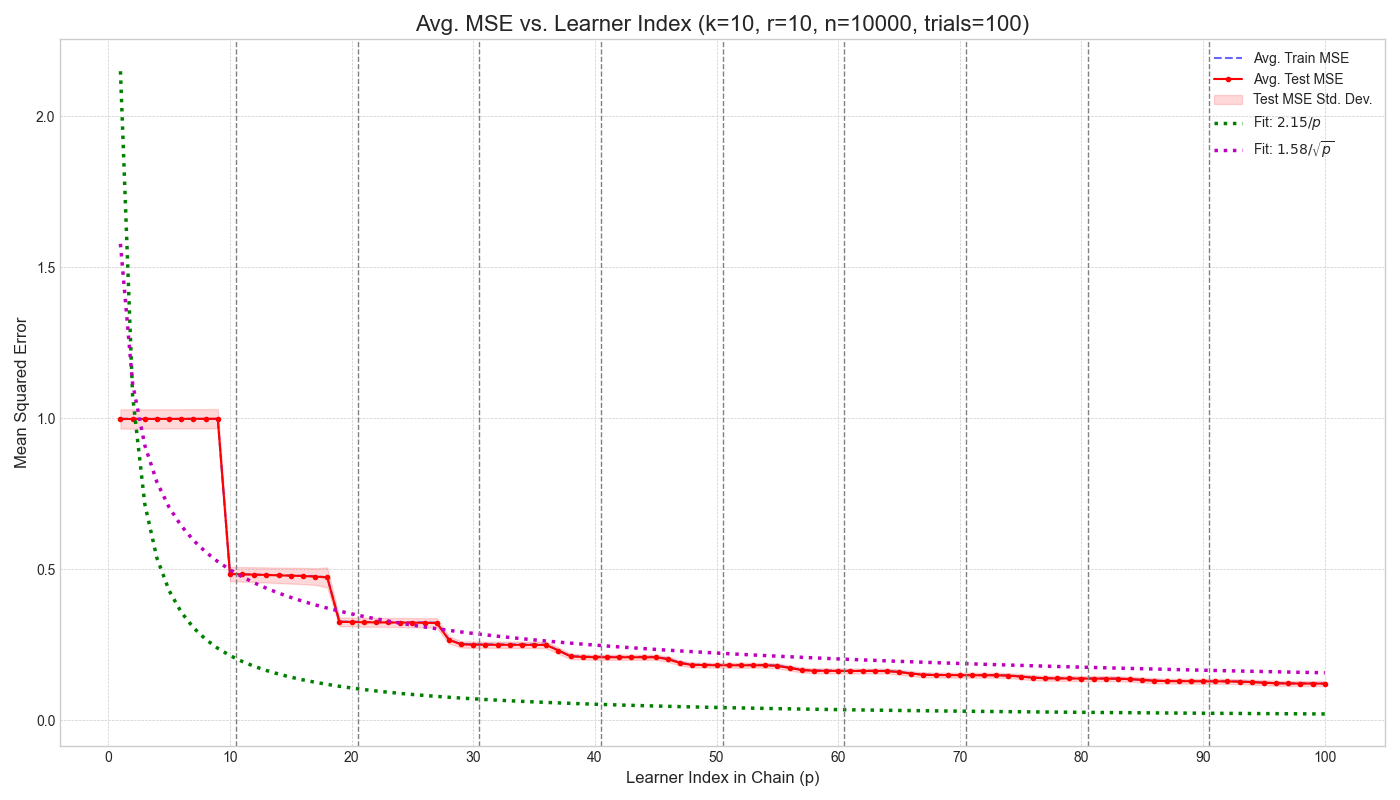}
    \caption{Simulation results for lower bound construction. See text for discussion.}
    \label{fig:lowerbound}
\end{figure}

\section{Experimental Results}
\label{sec:experiments}

We conclude with an experimental investigation of the framework and theoretical
results we have presented. For this we used two tabular datasets for regression, one in which the
goal is to predict wine quality from chemical and sensory properties (11 features,
4898 instances; see \url{https://archive.ics.uci.edu/dataset/186/wine+quality}), and the other in which the goal is to predict household appliance usage
from temperature, humidity, and other environmental variables (26 features, 19,735 instances;
see \url{https://archive.ics.uci.edu/dataset/374/appliances+energy+prediction}). In all
experiments, standard train/test methodology was employed over many trials, as detailed below.

We begin with Figure~\ref{fig:linear}, which shows experimental results for the case in which all
learners use linear models. The left column of the figure corresponds to the wine quality dataset,
while the right corresponds to the appliance energy usage dataset.
The first row of figures shows the basic setting in which 50 learners are arranged
in a left-to-right chain. As per the theoretical framework, each learner sees a random 
subset of the features and performs least squares regression, and then passes its predictions
to the next learner in the chain as an additional feature. We vary the fraction of features seen
by each learner from 0.1 to 0.5 in increments of 0.1, and show both train and test errors as well
as test error error bars; the fraction of features is color coded as described in the plot legends.
The plots average over many runs in which both the specific random subsets of features, and the
train/test splits, are varied.
The x-axis is sorted by position in the chain, with the first learner being leftmost and the last
being rightmost. We also show the natural baseline of the test error of a single linear model trained
using all the features as a dashed red horizontal line; this is the best we could hope to approach as
information is aggregated along the chain of learners.

Beginning with the results for the wine quality dataset on the left, we see results strongly corroborating
the theory. Convergence to the globally optimal linear model is rapid for all values of the feature fraction,
with steady improvement in the convergence rate as we move from learners seeing 0.1 to 0.5 of the feature
space. Results are qualitatively similar for the appliance energy usage dataset on the right, except that
for the smallest feature fraction of 0.1 we have not converged to global optimal after 50 learners, presumably
due to the larger number of features for this dataset.

In the remaining rows of Figure~\ref{fig:linear}, we explore more complex graph topologies than a simple
chain, instead considering tree DAGs of 50 vertices or learners in which the direction of edges is either top-down from the root, or bottom-up from the leaves. Figure~\ref{fig:tree} shows an example of a top-down tree; for
bottom-up topologies, the direction of all edges would be reversed.

The second row of Figure~\ref{fig:linear} is similar to the first row, except for top-down trees rather
than a simple chain. Thus the order of the x-axis is by topological sort --- in other words, the leftmost
point corresponds to the root of the tree, followed by its children in a breadth-first search, followed by
their children, and so on. Performance on both datasets is similar to the chain case of the first row,
with a couple of natural but noteworthy exceptions. First, convergence is no longer monotonic in vertex
ordering, simply because learners at the same depth have aggregated incomparable information along
their different paths from the root. Second, unlike the chain case, even the rightmost learners in
the ordering have not converged to the global optimal performance for several of the smaller values
of the feature fraction. This is because less aggregation occurs in the leaves of a tree than in
a chain of the same size.

The observation about non-monotonicity in top-down ordering suggests that this might not be
the natural ordering to measure information aggregation --- for instance, there is no reason
to believe that the learners appearing, say, 13th in a top-down ordering averaged over many random
trees will have the same depth. To highlight this, in the third row of Figure~\ref{fig:linear} we
instead order the x-axis by learner depth in the tree, again averaged over many runs/trees. For
both datasets, these plots show that depth is the right correlate of performance in a top-down
tree --- convergence is now monotone in depth, and the aforementioned variability of depth at a
given breadth-first order position is now eradicating, resulting in dramatically reduced error bars.

In the fourth row of Figure~\ref{fig:linear}, we again consider trees, but now with bottom-up
information aggregation. Thus the leaves are the initial learners, who in turn pass their predictions
to their parents, and so on up the tree. The ordering of learners on the x-axis is again via
topological sort, averaged over runs --- thus the (say) 13th position corresponds to all
the learners that we processed 13th in a bottom-up sweep over many trees. As the plot clearly
shows, topological sorting is even less correlated with performance than it was for top-down trees,
with the error plots not just non-monotonic but almost random. Again this is because bottom-up
visitation order is simply not the right measure as it does not correlate well with the amount
of information aggregated. In a bottom-up ordering, a more natural quantity would be the size of
the subtree rooted at a vertex/learner, since that is a measure of the total number of learners
feeding into a given learner. In the fifth and final row of Figure~\ref{fig:linear} we reorder
the x-axis by subtree size, again recovering monotonicity and tight error bars.

While our theoretical results and lower bounds for general DAGs emphasize the importance of depth --- the
length of the longest path leading to a given learner --- the experimental results for chains, top-down
trees, and bottom-up trees show that empirically, the right correlate of learner performance may be
more nuanced beyond the worst case. 
Quantities such as network centrality, degree distribution, clustering coefficient and
other topological measures might be deserving of further empirical and theoretical study.

Finally, Figure~\ref{fig:nn} has the exact same form and narrative as Figure~\ref{fig:linear},
except now for the case in which all learners fit neural networks with 3 hidden units at a learning
rate of 0.01 and 100 epochs per trial. Since we no longer have the worst-case optimization guarantees of
convex models, we cannot expect monotonicity even in the case of a chain, and indeed
in the first row we see that for both datasets, error sometimes increase on average as we move
further along the chain. However, convergence remains strong and is now able to beat the 
globally optimal linear model (dashed red line) since learners have greater representational power.
The remaining rows of Figure~\ref{fig:nn} show that the same qualitative phenomena regarding
tree position in the top-down and bottom-up cases from the linear case still hold for neural 
networks, except for occasional non-monotonicities resulting from our inability to guarantee
optimal fits to the training data.

\begin{figure}[h]
    \centering
    \includegraphics[width=0.48\textwidth, height=0.195\textheight]{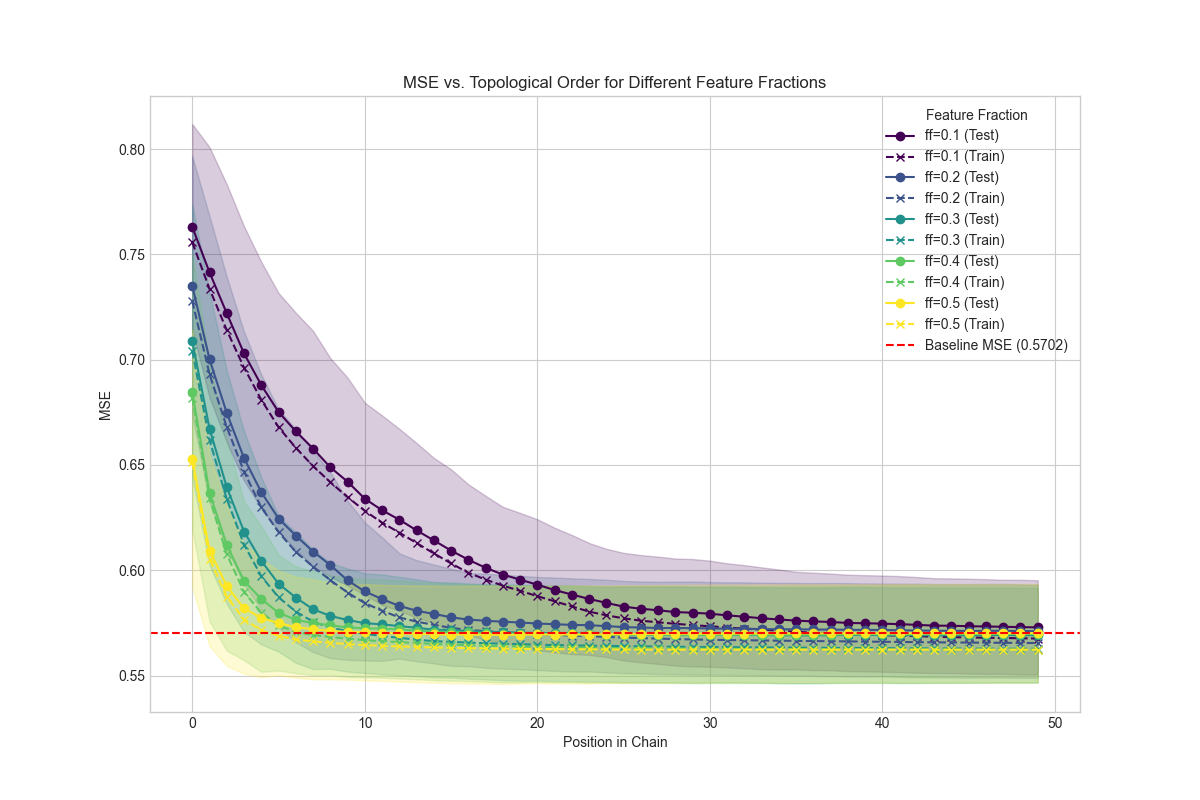}\hfill%
    \includegraphics[width=0.48\textwidth, height=0.195\textheight]{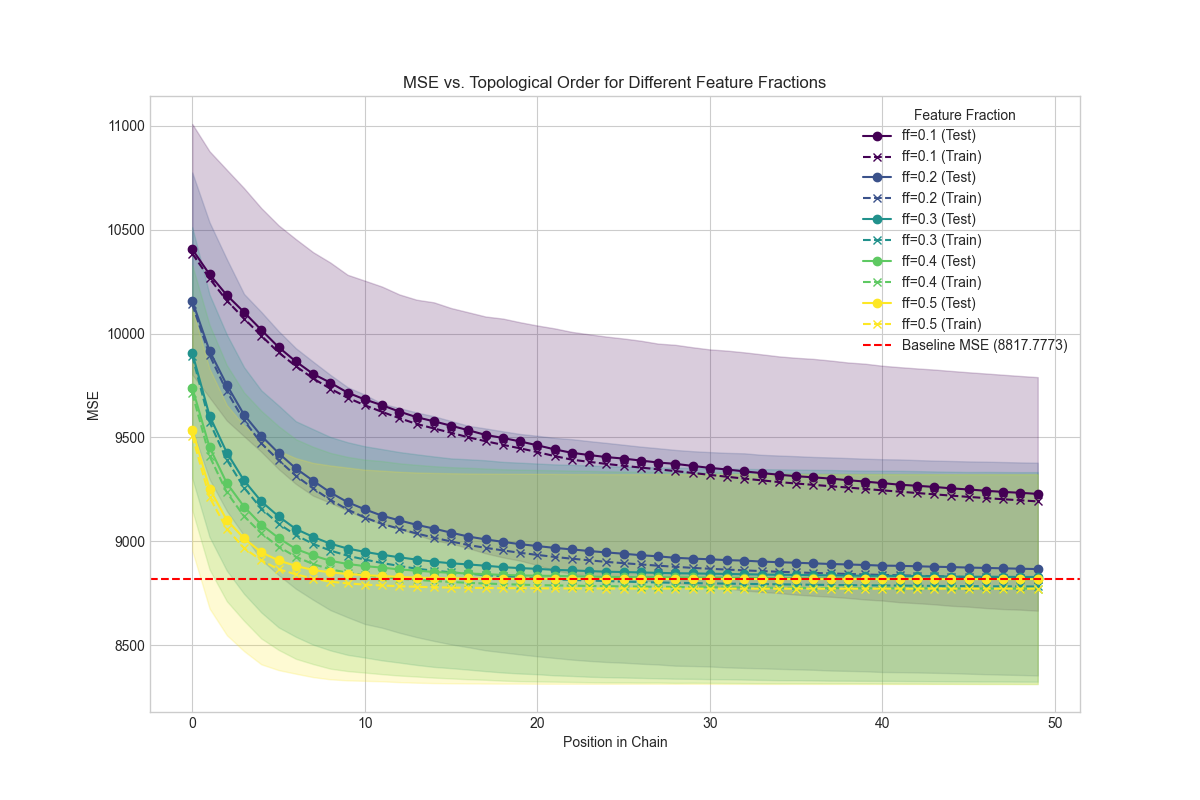}
    \vfill
    \includegraphics[width=0.48\textwidth, height=0.195\textheight]{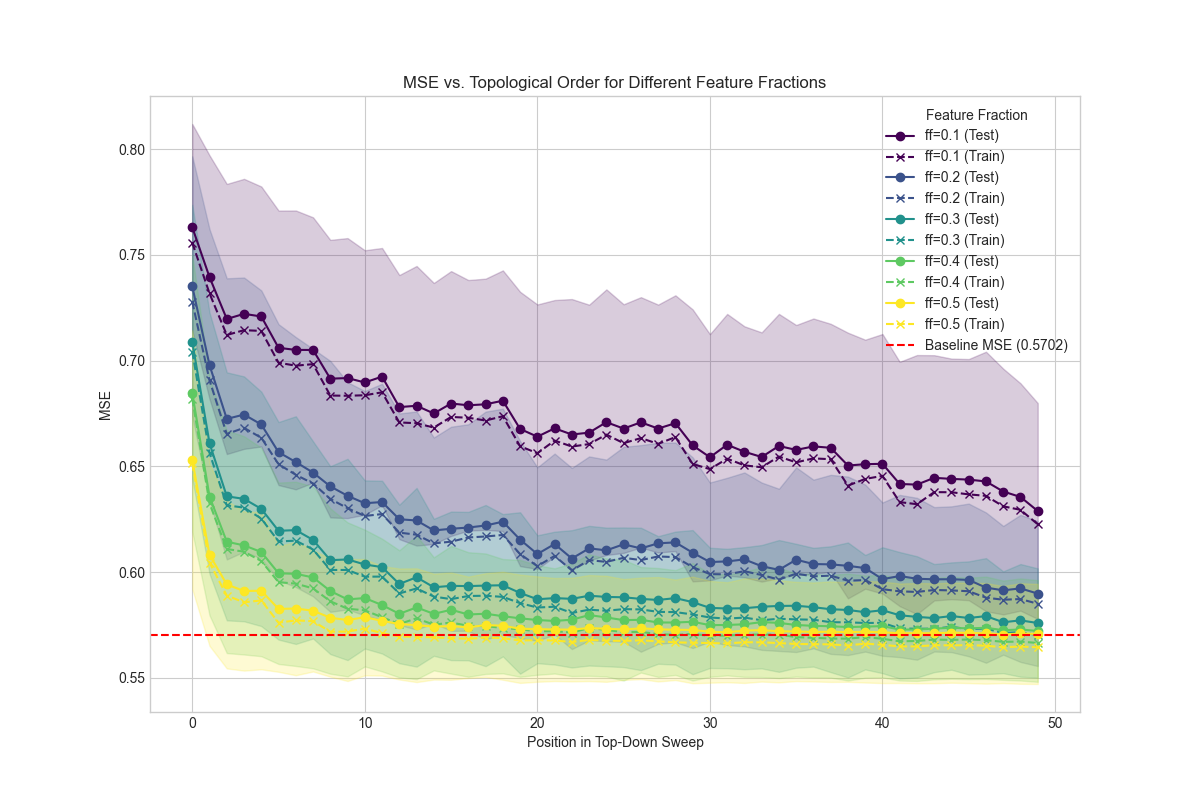}\hfill%
    \includegraphics[width=0.48\textwidth, height=0.195\textheight]{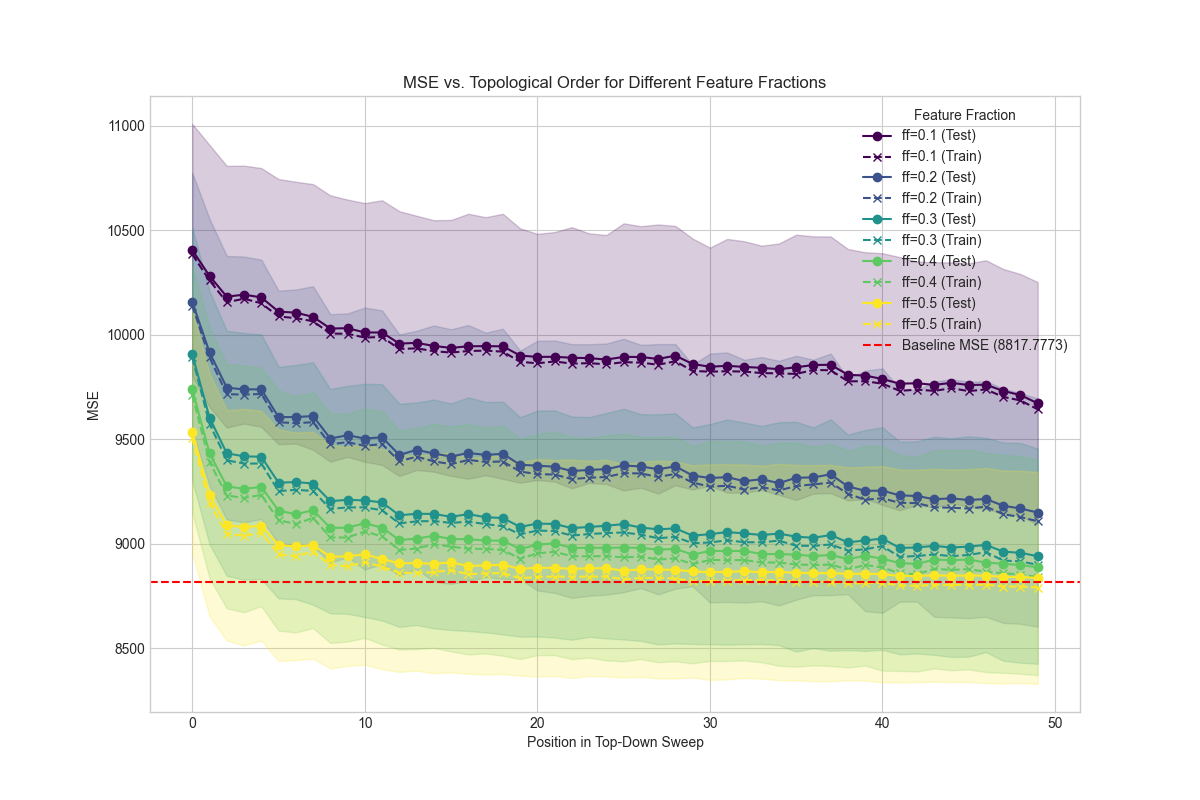}
    \vfill
    \includegraphics[width=0.48\textwidth, height=0.195\textheight]{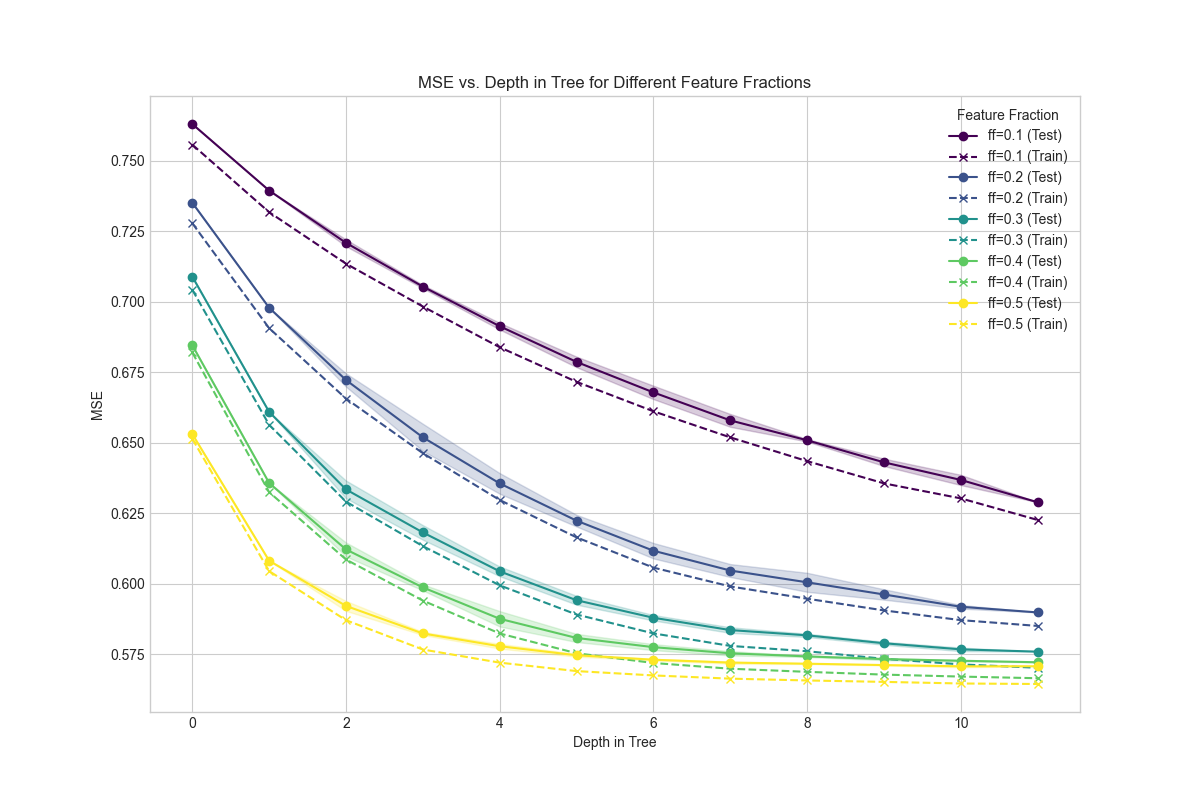}\hfill%
    \includegraphics[width=0.48\textwidth, height=0.195\textheight]{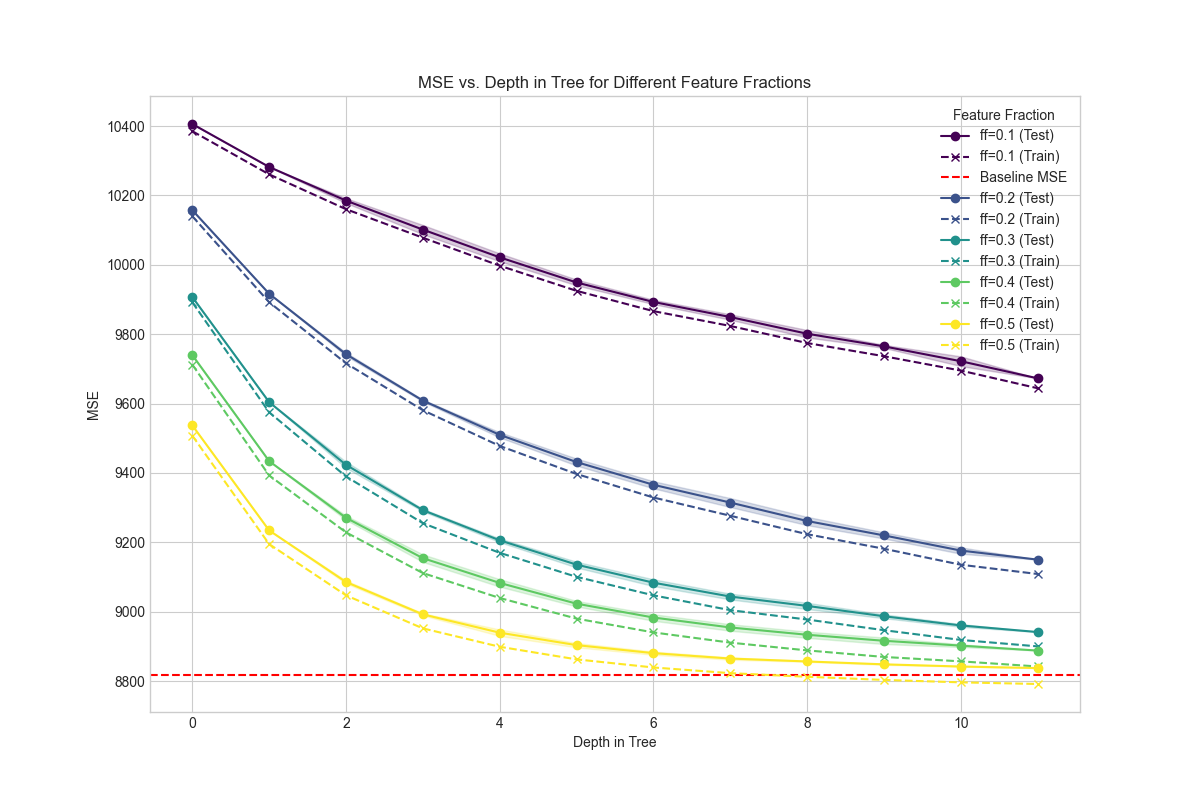}
    \vfill
    \includegraphics[width=0.48\textwidth, height=0.195\textheight]{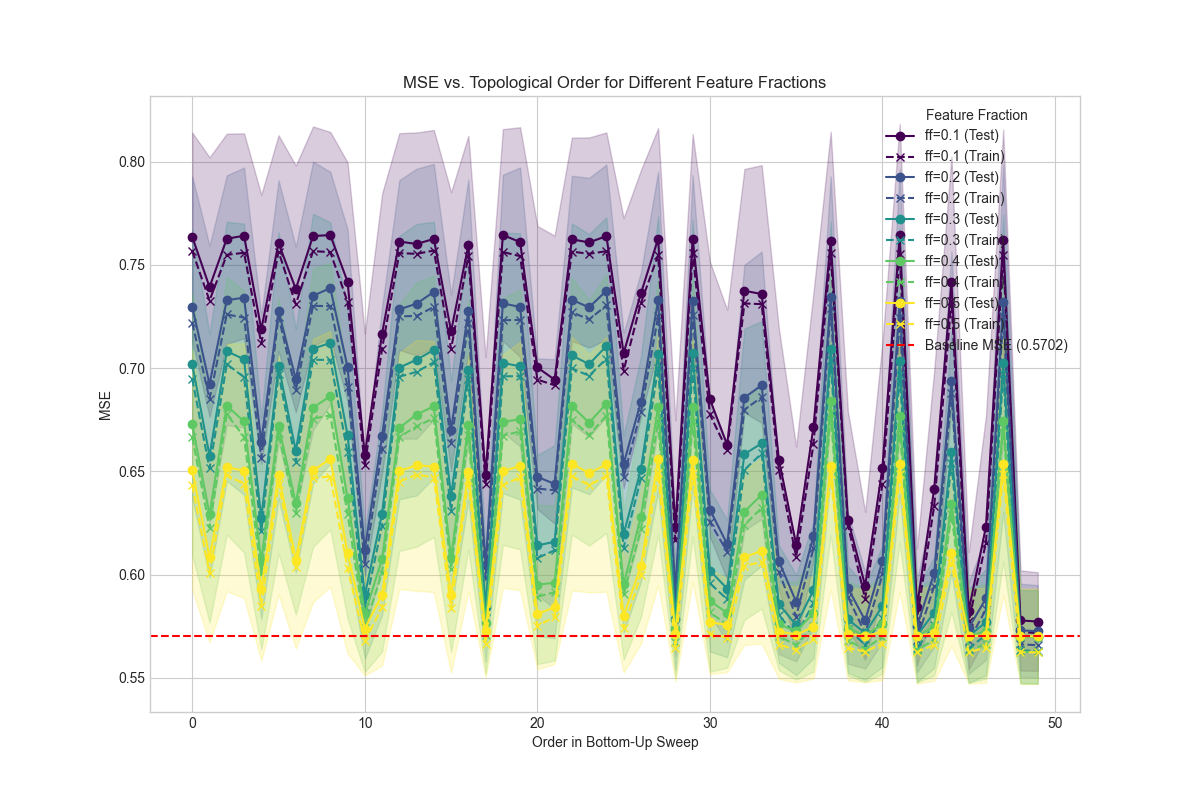}\hfill%
    \includegraphics[width=0.48\textwidth, height=0.195\textheight]{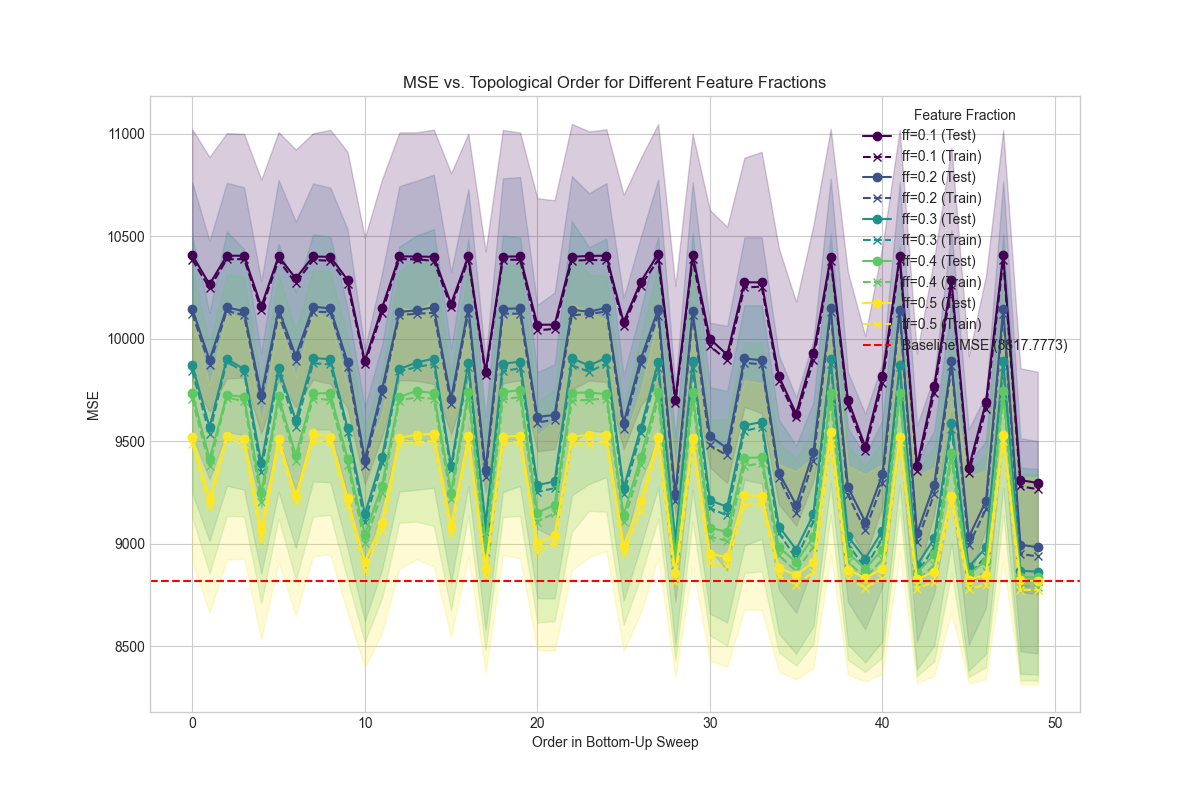}
    \vfill
    \includegraphics[width=0.48\textwidth, height=0.195\textheight]{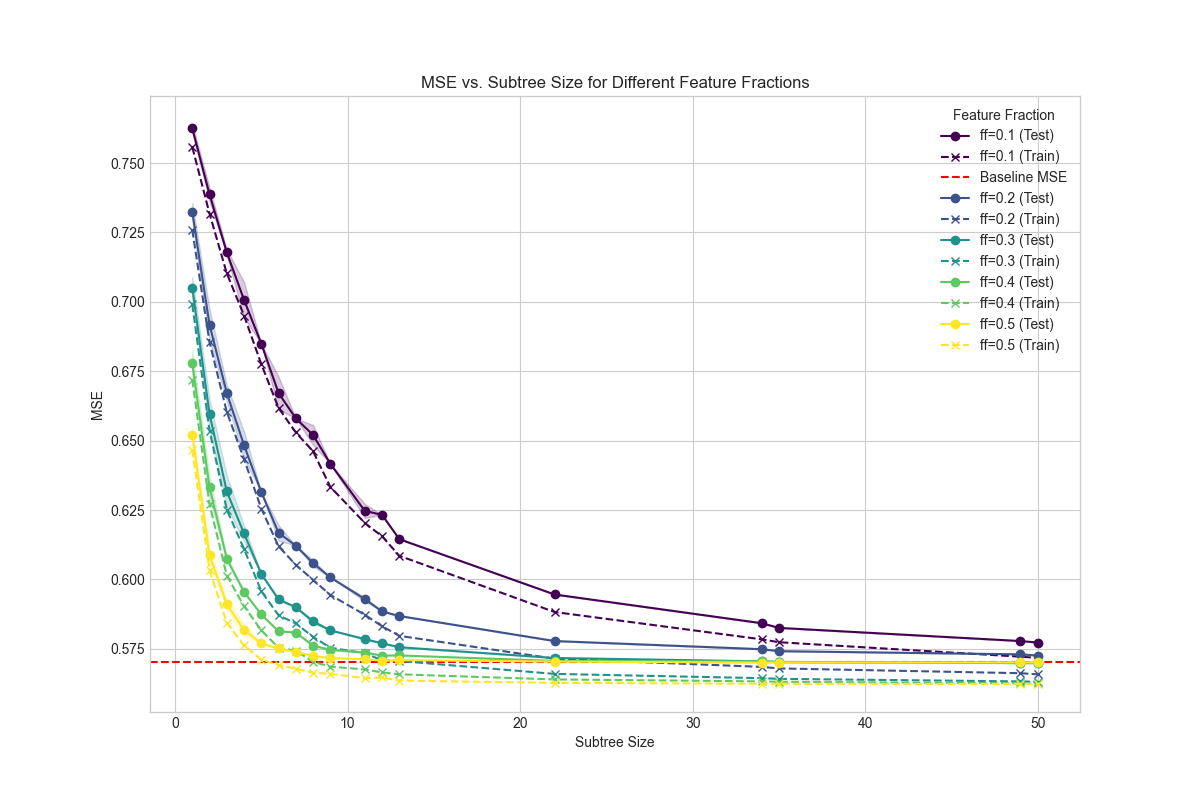}\hfill%
    \includegraphics[width=0.48\textwidth, height=0.195\textheight]{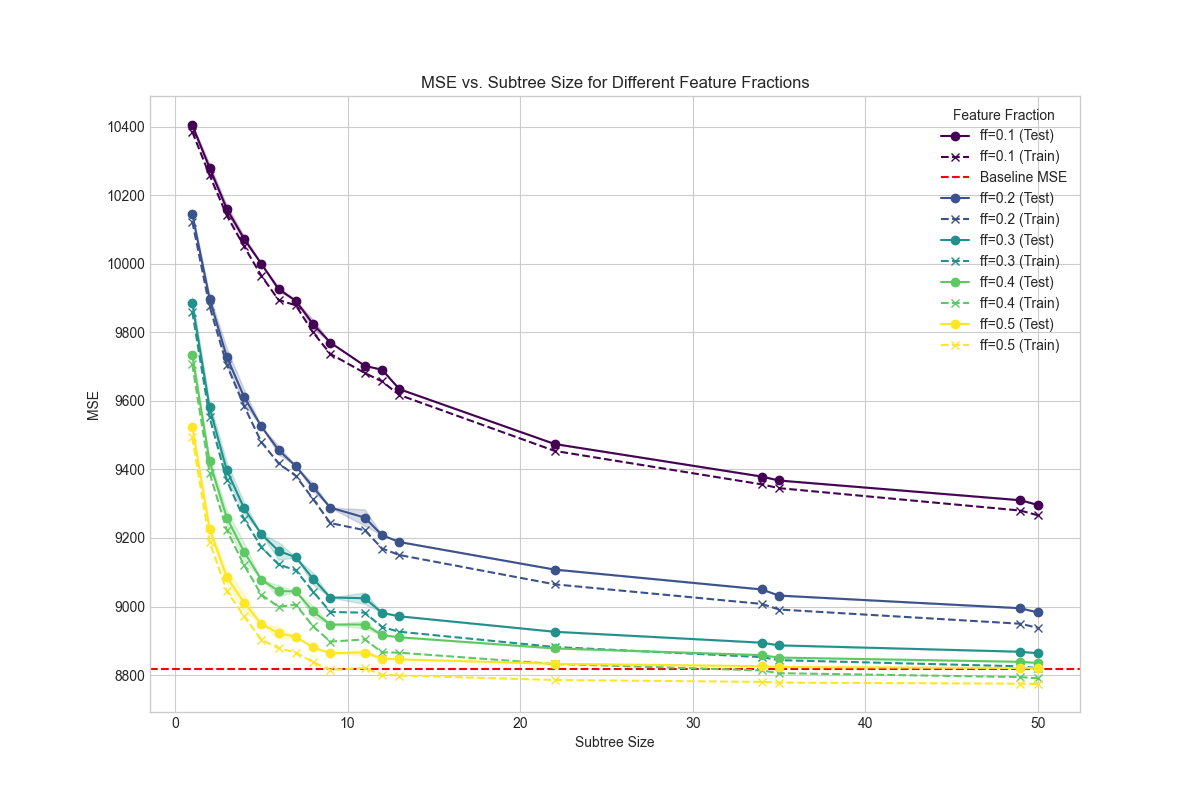}
    \caption{Experimental results for linear models. Left column corresponds to wine quality dataset, right
        column to appliance energy usage dataset. See text for details.}
    \label{fig:linear}
\end{figure}

\begin{figure}[h]
    \centering
    \includegraphics[width=\textwidth,height=\textheight,keepaspectratio]{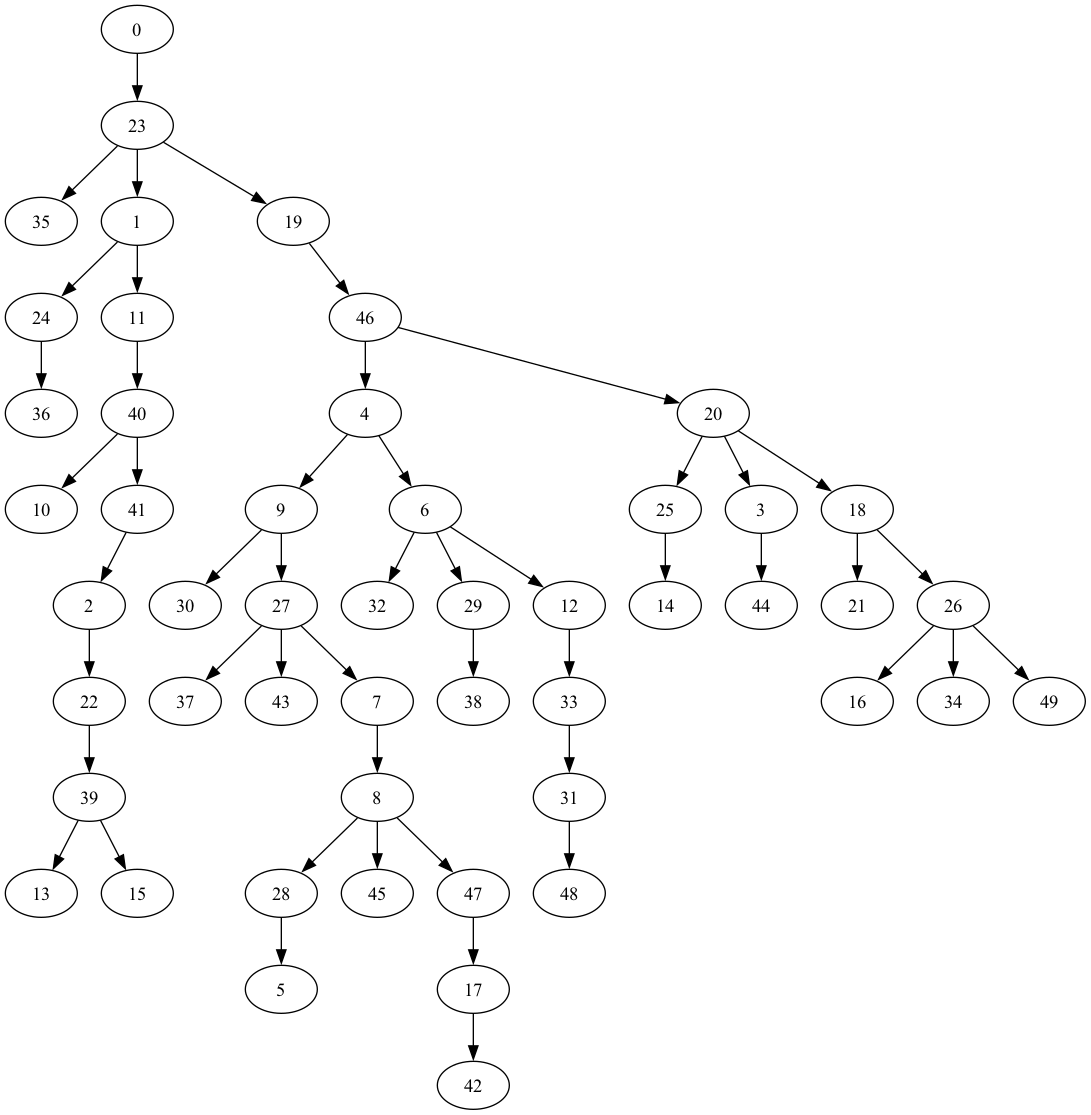}
    \caption{Sample top-down tree of 50 learners.}
    \label{fig:tree}
\end{figure}

\begin{figure}[h]
    \centering
    \includegraphics[height=0.198\textheight]{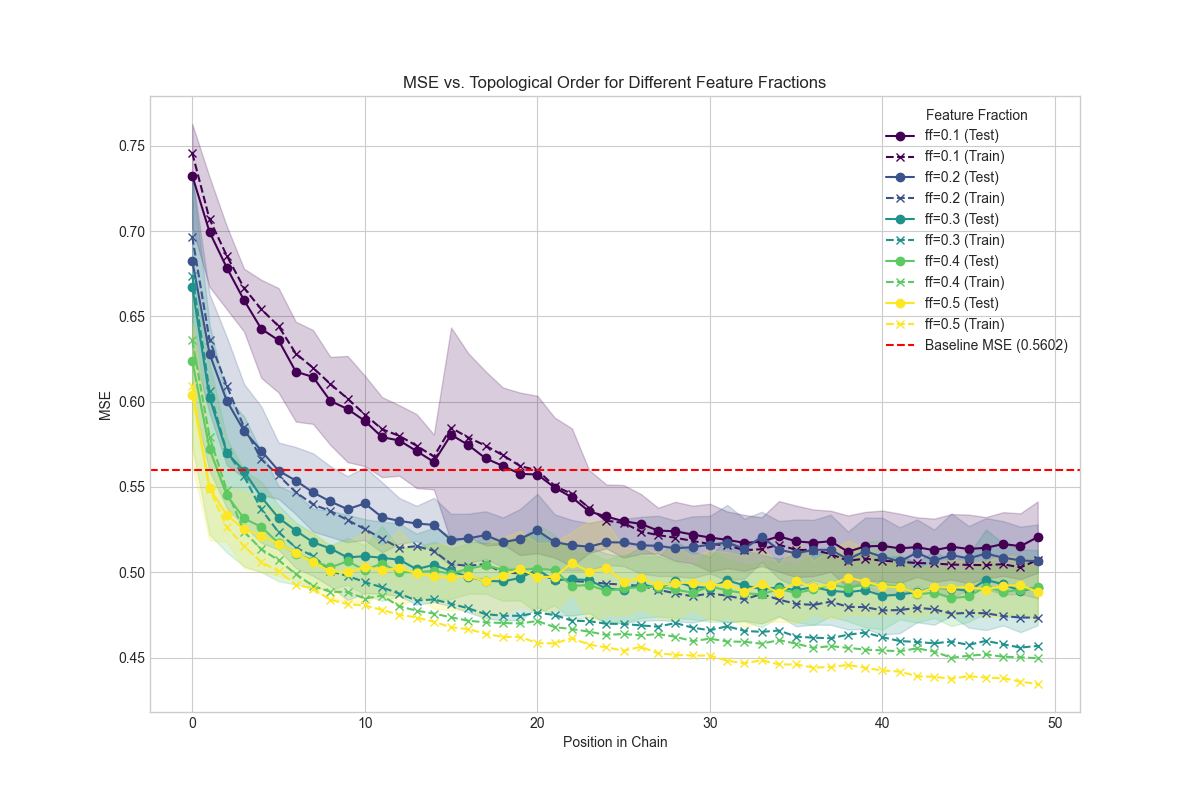}\hfill%
    \includegraphics[height=0.198\textheight]{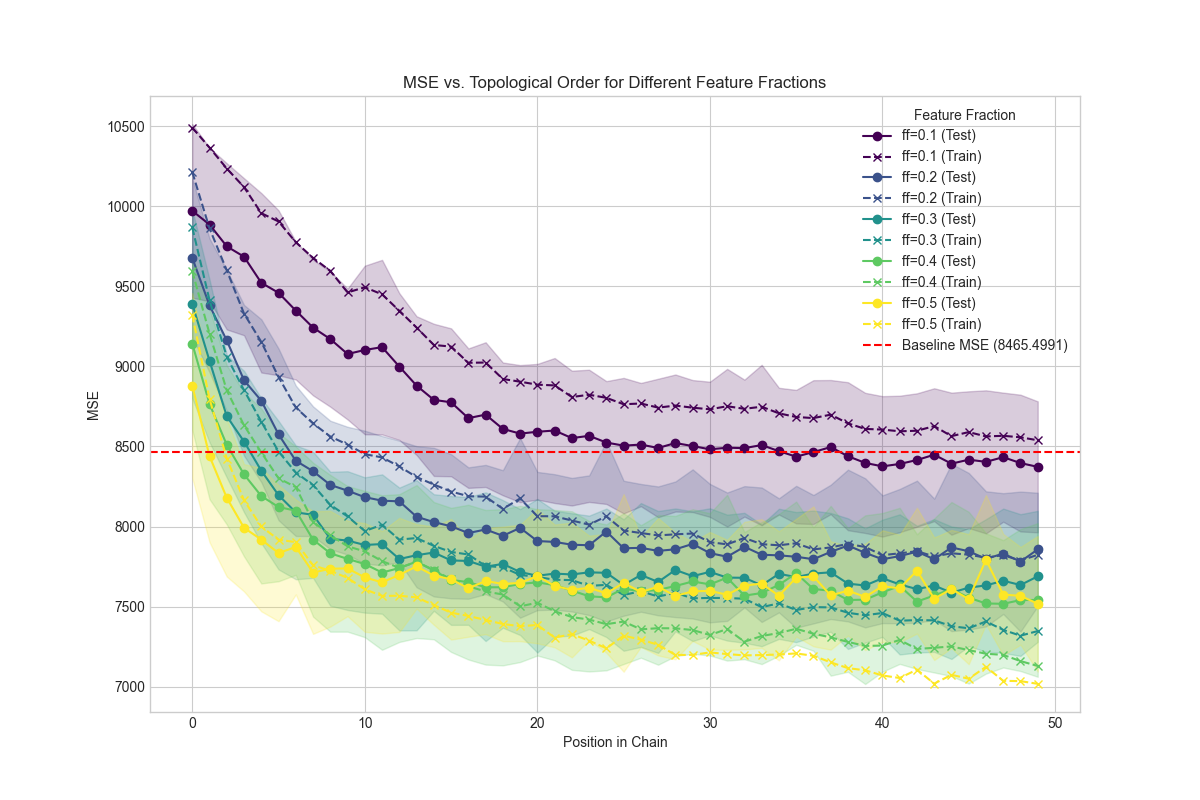}
    \vfill
    \includegraphics[height=0.198\textheight]{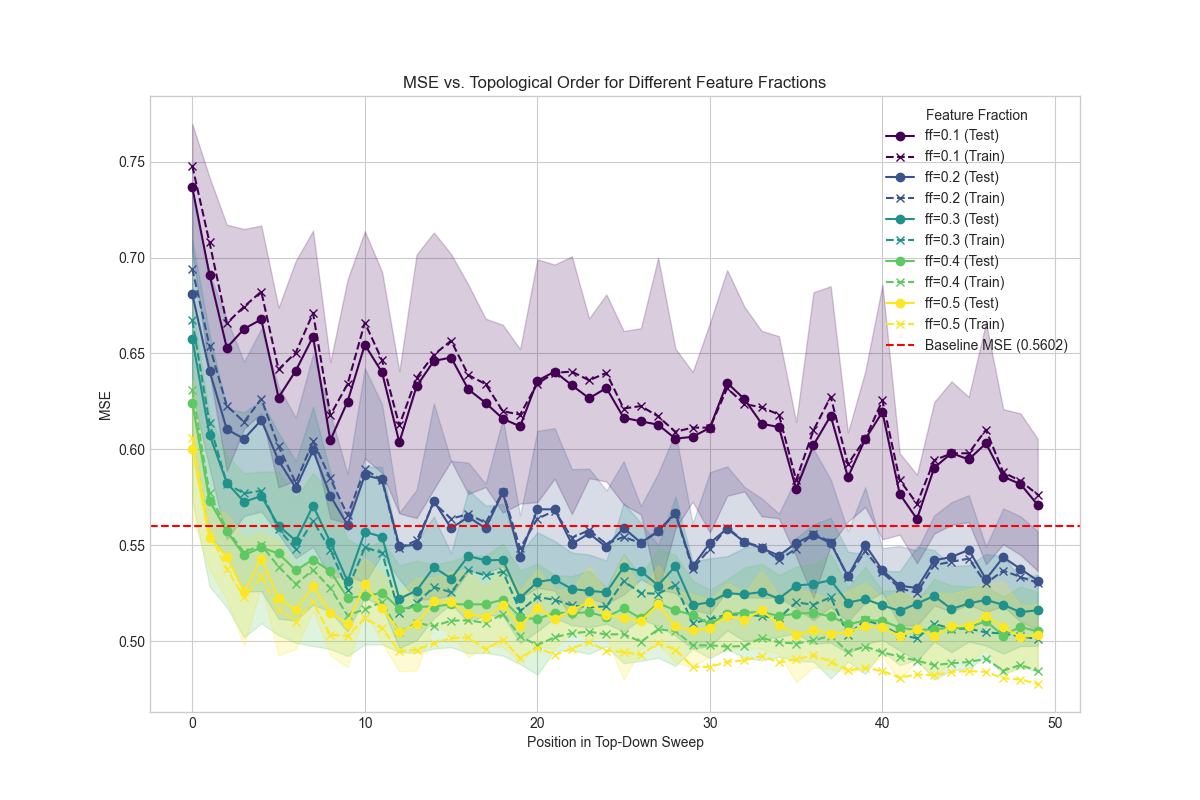}\hfill%
    \includegraphics[height=0.198\textheight]{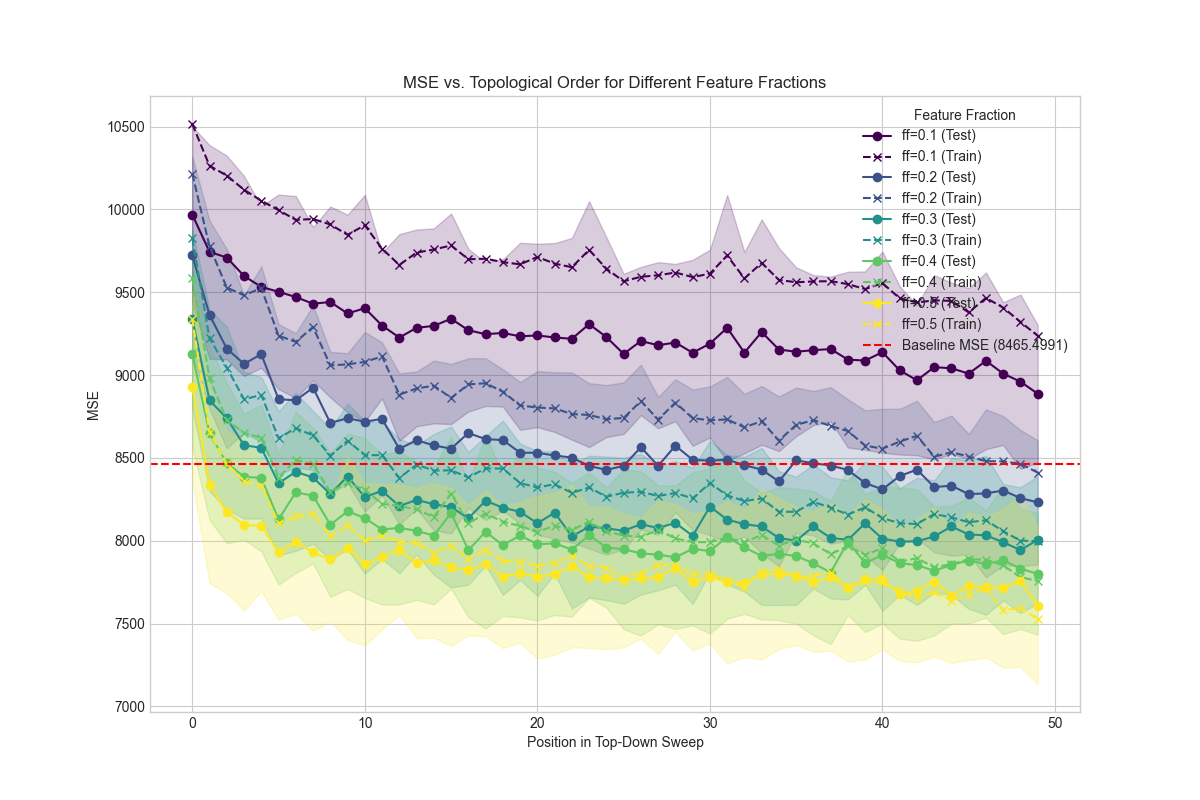}
    \vfill
    \includegraphics[height=0.198\textheight]{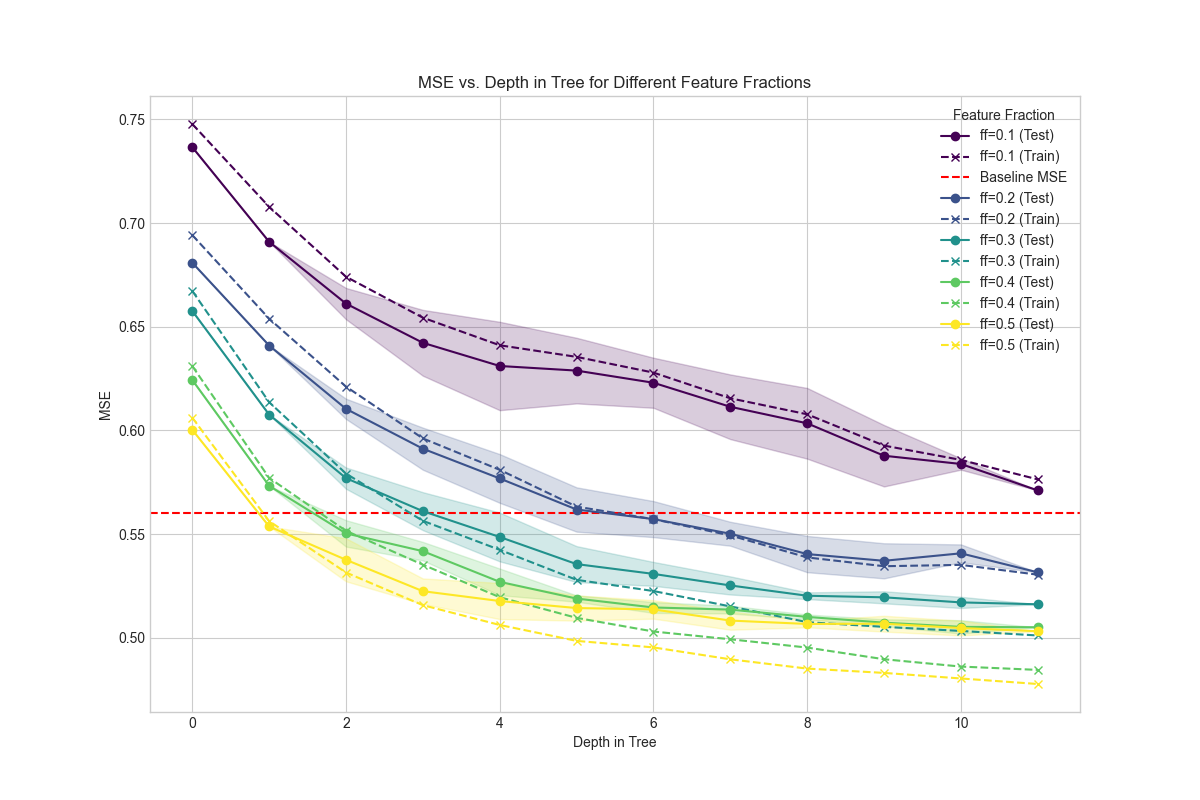}\hfill%
    \includegraphics[height=0.198\textheight]{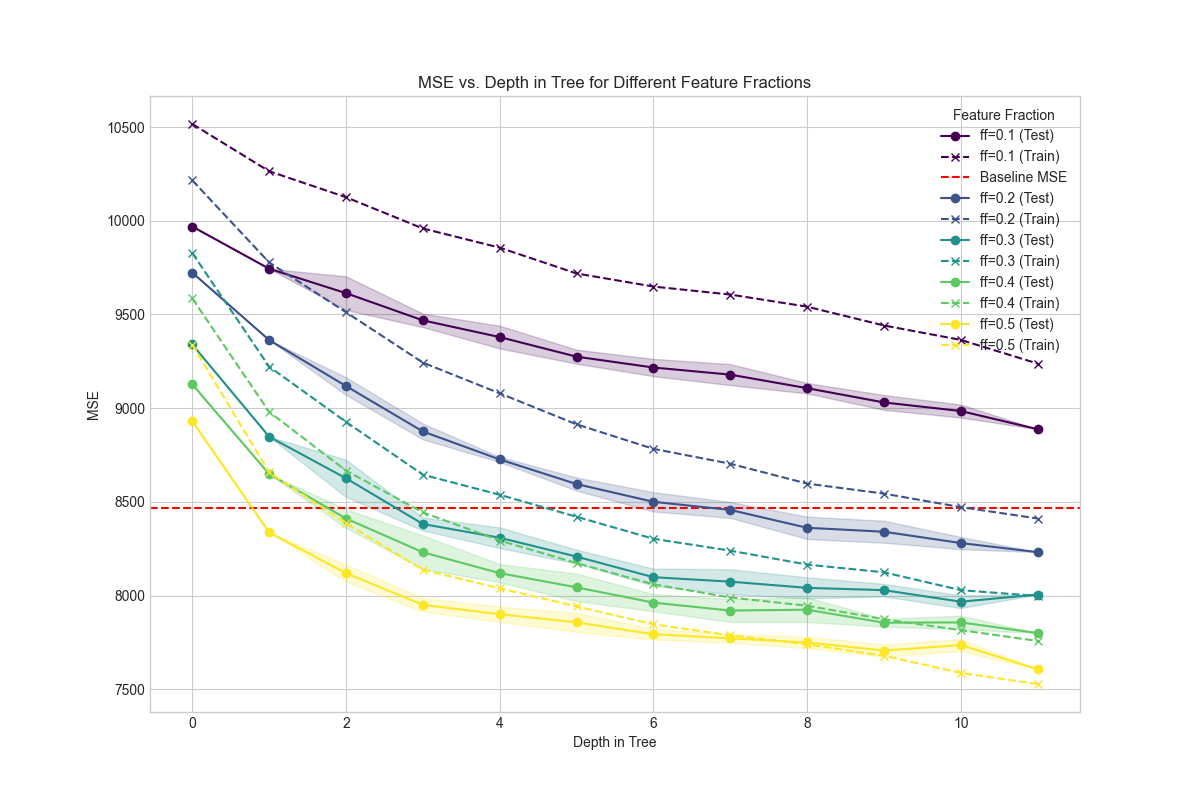}
    \vfill
    \includegraphics[height=0.198\textheight]{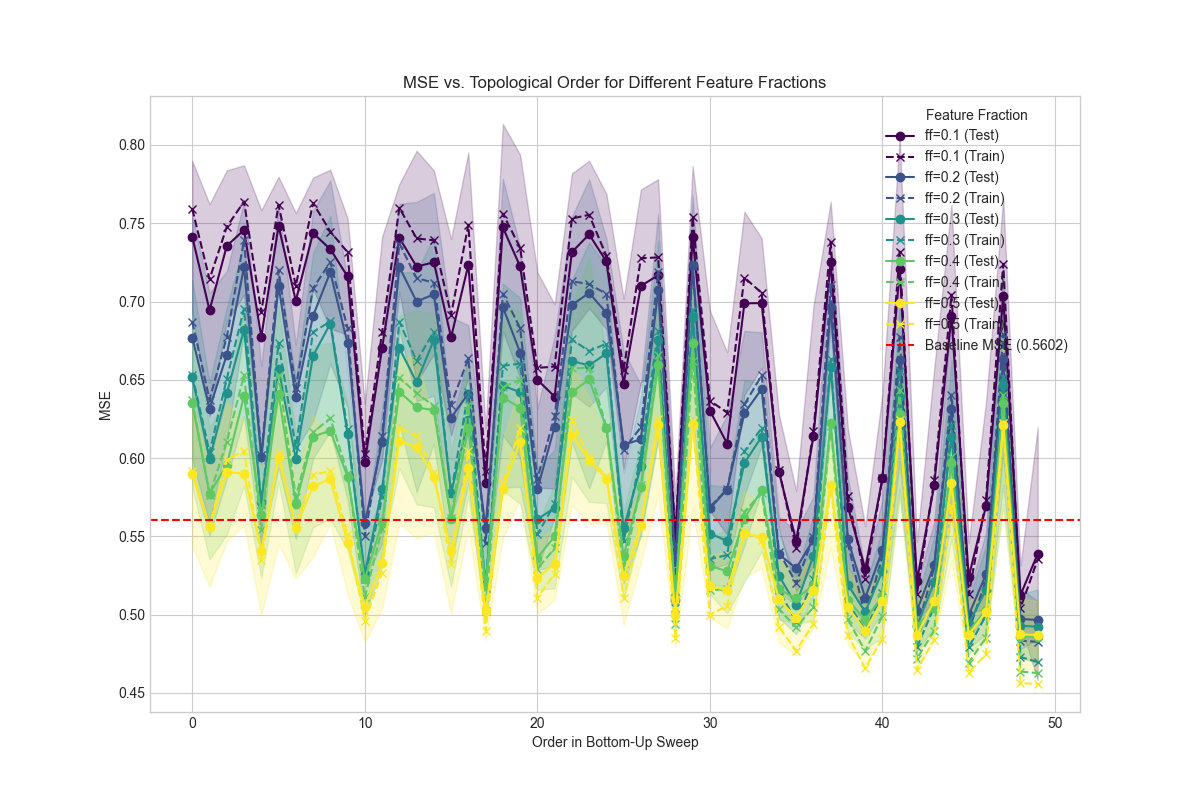}\hfill%
    \includegraphics[height=0.198\textheight]{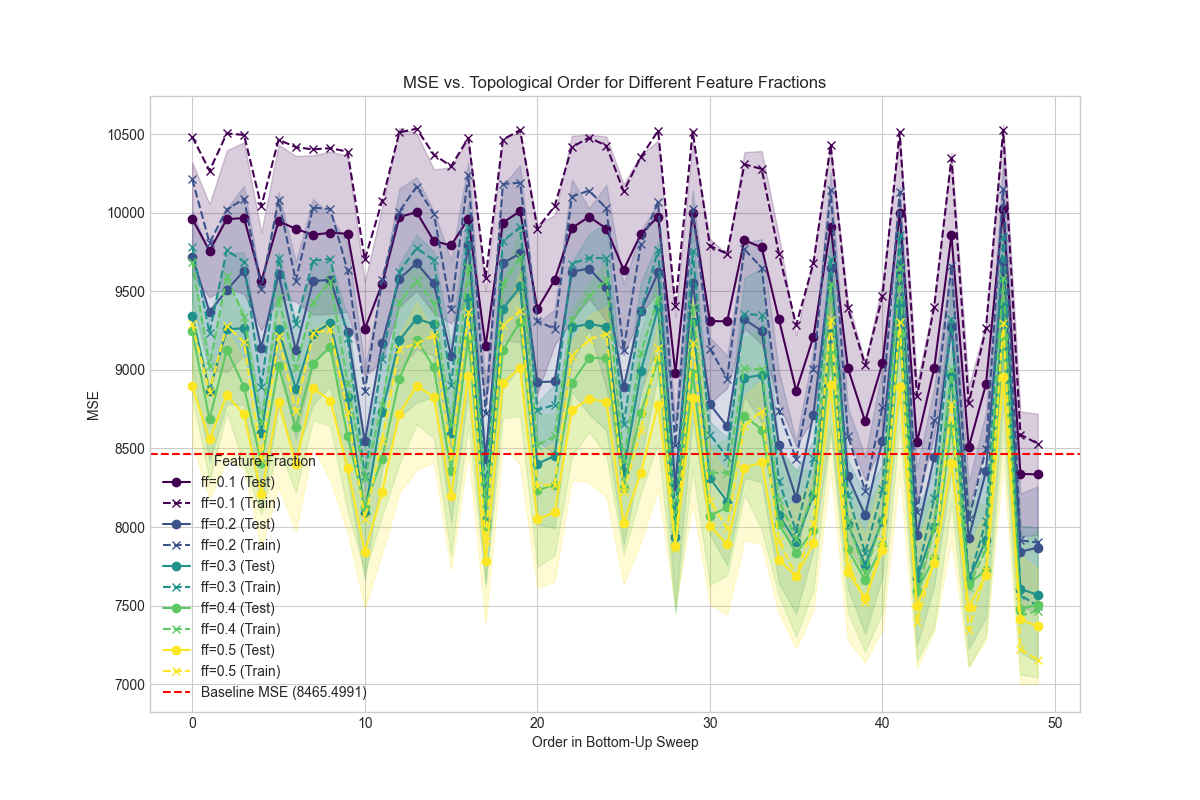}
    \vfill
    \includegraphics[height=0.198\textheight]{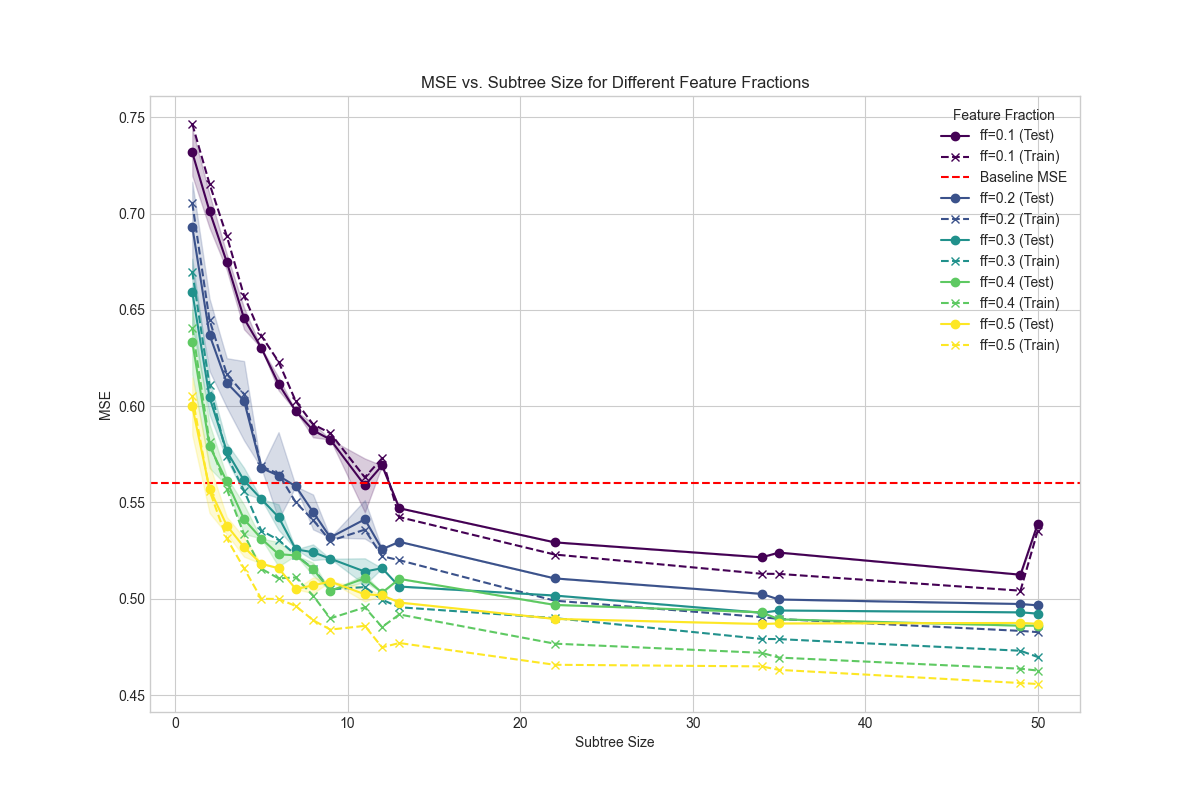}\hfill%
    \includegraphics[height=0.198\textheight]{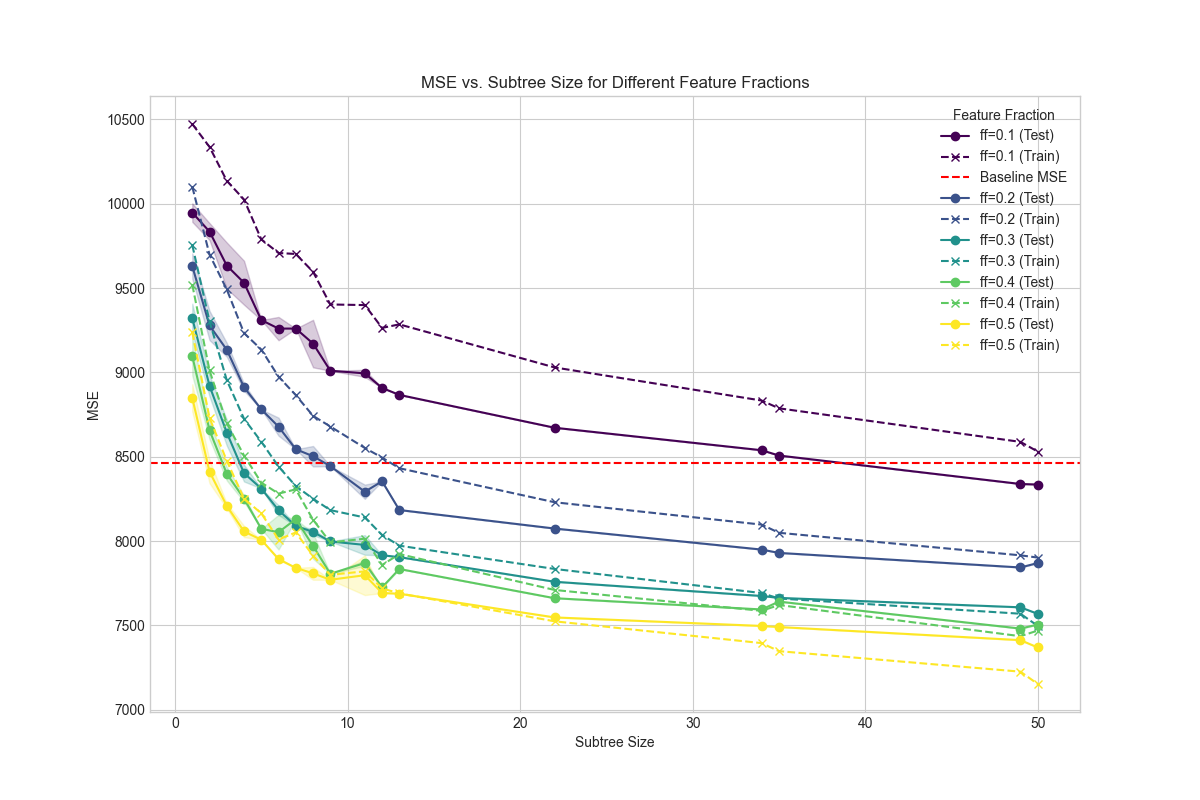}
    \caption{Experimental results for neural networks. Left column corresponds to wine quality dataset, right
        column to appliance energy usage dataset. See text for details.}
    \label{fig:nn}
\end{figure}

\clearpage

\subsection*{Acknowledgments}
The authors gratefully acknowledge support from NSF grant CCF-2217062, the NSF Graduate Research Fellowship (grant DGE-2139899), and a grant from the Simons Foundation. We used Gemini 2.5 Pro within the Windsurf Cascade environment as an aid in writing code and proving lemmas, with detailed instructions from the authors. AI tools were not used for exposition or discussion of related work. 

\bibliographystyle{plainnat}
\bibliography{bib}
\newpage

\appendix
\section{Deferred Proofs from Section \ref{sec:general_classes}} \label{sec:nonlinear_proofs}

Here, we provide complete proofs deferred from Section \ref{sec:general_classes} in the main text.

\deltamultiacc*
\begin{proof}
Let $R = Y - \hat{y}_k$ be the final residual. The algorithm's termination condition is that for $h_{\text{next}} = \arg\max_{h \in \mathcal{H}_k} |\mathbb{E}[h \cdot R]|$, we have $|\mathbb{E}[h_{\text{next}} \cdot R]| < \Delta$. Since $h_{\text{next}}$ is the function with the maximum possible correlation, this inequality must hold for all other functions $h \in \mathcal{H}_k$ as well. Therefore, $|\mathbb{E}[h \cdot R]| \le |\mathbb{E}[h_{\text{next}} \cdot R]| < \Delta$.
\end{proof}

\genmsedecomp*
\begin{proof}
We start with the identity:
\[ \mathbb{E}[(y - g)^2] = \mathbb{E}[(y - \hat{y}_k + \hat{y}_k - g)^2] = \mathbb{E}[(y - \hat{y}_k)^2] + \mathbb{E}[(\hat{y}_k - g)^2] + 2\mathbb{E}[(y - \hat{y}_k)(\hat{y}_k - g)]. \]
Rearranging gives the MSE of $\hat{y}_k$:
\[ \MSE{\hat{y}_k} = \MSE{g} - \mathbb{E}[(\hat{y}_k - g)^2] - 2\mathbb{E}[(y - \hat{y}_k)(\hat{y}_k - g)]. \]
We focus on the cross-term. We substitute the decomposition $g = g_{\text{pool}} + g_{\text{rest}}$ into the term $(\hat{y}_k - g)$:
\[ \hat{y}_k - g = \hat{y}_k - (g_{\text{pool}} + g_{\text{rest}}) = (\hat{y}_k - g_{\text{pool}}) - g_{\text{rest}}. \]
By definition, both $\hat{y}_k$ and $g_{\text{pool}}$ lie in the subspace $\text{span}(\mathcal{F}_k)$. Therefore, their difference $(\hat{y}_k - g_{\text{pool}})$ also lies in this subspace. By the properties of orthogonal projection, the residual $(y - \hat{y}_k)$ is orthogonal to every function in $\text{span}(\mathcal{F}_k)$. Thus, the inner product with the first part of the difference is zero:
\[ \mathbb{E}[(y - \hat{y}_k)(\hat{y}_k - g_{\text{pool}})] = 0. \]
The cross-term simplifies to:
\[ \mathbb{E}[(y - \hat{y}_k)(\hat{y}_k - g)] = \mathbb{E}[(y - \hat{y}_k)((\hat{y}_k - g_{\text{pool}}) - g_{\text{rest}})] = - \mathbb{E}[(y - \hat{y}_k)g_{\text{rest}}]. \]
Substituting this back into the rearranged MSE equation gives:
\[ \MSE{\hat{y}_k} = \MSE{g} - \mathbb{E}[(\hat{y}_k - g)^2] - 2(-\mathbb{E}[(y - \hat{y}_k)g_{\text{rest}}]) = \MSE{g} - \mathbb{E}[(\hat{y}_k - g)^2] + 2\mathbb{E}[(y - \hat{y}_k)g_{\text{rest}}]. \]
This completes the proof.
\end{proof}

\genmseimprovement*
\begin{proof}
The proof is a direct corollary of Lemma~\ref{lem:gen_mse_decomposition}. We apply the lemma with the benchmark predictor $g = \hat{y}_p$. By the construction of the algorithm for agent $A_k$, the parent predictor $\hat{y}_p$ is always included in the initial feature pool $\mathcal{F}_{k,0}$, and thus is in the final pool $\mathcal{F}_k$.
Therefore, $\hat{y}_p$ lies entirely within the subspace $\text{span}(\mathcal{F}_k)$. When we decompose $g=\hat{y}_p$ into $g_{\text{pool}} + g_{\text{rest}}$, we get $g_{\text{pool}} = \hat{y}_p$ and $g_{\text{rest}} = 0$.
Substituting $g=\hat{y}_p$ and $g_{\text{rest}}=0$ into the result of Lemma~\ref{lem:gen_mse_decomposition}:
\begin{align*}
\MSE{\hat{y}_k} &= \MSE{\hat{y}_p} - \mathbb{E}[(\hat{y}_k - \hat{y}_p)^2] + 2\mathbb{E}[(y - \hat{y}_k) \cdot 0] \\
&= \MSE{\hat{y}_p} - \mathbb{E}[(\hat{y}_k - \hat{y}_p)^2].
\end{align*}
This holds for any parent $p \in \text{Pa}(k)$. This result shows that the MSE is guaranteed to be non-increasing, and the reduction in error at agent $k$ relative to a parent $p$ is precisely the squared $L_2$ distance between their respective predictors.
\end{proof}

\pathbenchmark*
\begin{proof}
The proof proceeds via a blocking argument. Consider a ``block'' of $M$ contiguous agents along any path with i.i.d. hypothesis classes $\mathcal{S}_M$. Let $X = \text{MSE}(\mathcal{S}_M, L_{1,g}, B_g)$ be the random variable for the block's MSE, with expectation $\mathbb{E}[X] = \text{MSE}_{\text{bench}}(M, L_{1,g}, B_g)$.
A block is ``bad'' if $X > \mathbb{E}[X] + \gamma$. Let $p_{\text{bad}}$ be the probability of this event. Since $X$ is a non-negative random variable, we can apply Markov's inequality:
\[ p_{\text{bad}} = \mathbb{P}(X \ge \mathbb{E}[X] + \gamma) \le \frac{\mathbb{E}[X]}{\mathbb{E}[X] + \gamma}. \]
Since $\mathbb{E}[X] = \text{MSE}_{\text{bench}}(M) \le C_{\text{mse}}$ and the function $f(z)=z/(z+\gamma)$ is increasing for $z>0$, we can establish an upper bound:
\[ p_{\text{bad}} \le \frac{C_{\text{mse}}}{C_{\text{mse}} + \gamma}. \]
Let $p_{\text{upper}} = \frac{C_{\text{mse}}}{C_{\text{mse}} + \gamma}$. We have $p_{\text{bad}} \le p_{\text{upper}} < 1$.

We partition a path of length $L$ into $k = \lfloor L/M \rfloor$ disjoint blocks. The proposition fails only if all $k$ blocks are ``bad''. The probability of this is $(p_{\text{bad}})^k$.
\[ \mathbb{P}(\text{failure}) = (p_{\text{bad}})^k \le (p_{\text{upper}})^k = \left( \frac{C_{\text{mse}}}{C_{\text{mse}} + \gamma} \right)^k. \]
We want this failure probability to be at most $\delta$. Assuming $C_{\text{mse}} > 0$:
\[ \left( \frac{C_{\text{mse}}}{C_{\text{mse}} + \gamma} \right)^k \le \delta. \]
Solving for $k$ by taking logarithms:
\[ k \ln\left( \frac{C_{\text{mse}}}{C_{\text{mse}} + \gamma} \right) \le \ln(\delta) \implies k \ge \frac{\ln(1/\delta)}{\ln\left(1 + \frac{\gamma}{C_{\text{mse}}}\right)}. \]
Since we need $L \ge k \cdot M$, we require:
\[ L \ge \frac{M \ln(1/\delta)}{\ln\left(1 + \frac{\gamma}{C_{\text{mse}}}\right)}. \]
Choosing $L$ according to the proposition statement ensures the failure probability is at most $\delta$.
\end{proof}

\section{Generalization Bound Preliminaries}
\label{sec:generalization_prelim}

This section introduces the foundational concepts and key theoretical results that will be used throughout our analysis of generalization error for both linear and non-linear models.

\subsection{In-Sample and Out-of-Sample Error}

Let $\mathcal{D}$ be the underlying true data distribution over $(x,y)$ pairs, where $x \in \R^d$ and $y \in \R$. For any predictor $f: \R^d \to \R$, its true (out-of-sample) squared error, or risk, is:
\[ R(f) = \E_{(x,y) \sim \mathcal{D}}[(f(x) - y)^2] \]
In practice, models are trained on a finite dataset. Let $D_m = \{(x^{(j)}, y^{(j)})\}_{j=1}^m$ be a training set of $m$ i.i.d. samples drawn from $\mathcal{D}$. The empirical (in-sample) risk for a predictor $f$ is:
\[ \hat{R}_{D_m}(f) = \frac{1}{m} \sum_{j=1}^m (f(x^{(j)}) - y^{(j)})^2 \]
The central goal of generalization theory is to bound the difference $|R(f) - \hat{R}_{D_m}(f)|$, known as the generalization gap.

\subsection{Complexity Measures and Concentration}

The generalization gap is typically controlled by the complexity of the class of functions from which the predictor is chosen. Rademacher complexity is a standard measure of this complexity.

\begin{definition}[Rademacher Complexity]
Let $\mathcal{G}$ be a class of functions mapping from $\mathcal{Z}$ to $\R$, and let $Z = (z_1, \ldots, z_m)$ be a fixed set of $m$ points in $\mathcal{Z}$. The empirical Rademacher complexity of $\mathcal{G}$ with respect to the set $Z$ is:
\[ \hat{\mathcal{R}}_Z(\mathcal{G}) = \E_{\sigma_1, \ldots, \sigma_m} \left[ \sup_{g \in \mathcal{G}} \frac{1}{m} \sum_{i=1}^m \sigma_i g(z_i) \right], \]
where $\sigma_i$ are independent random variables drawn from the Rademacher distribution ($\mathbb{P}(\sigma_i=1) = \mathbb{P}(\sigma_i=-1) = 1/2$). The (population) Rademacher complexity is the expectation over all datasets of size $m$: $\mathcal{R}_m(\mathcal{G}) = \E_{Z \sim \mathcal{D}^m}[\hat{\mathcal{R}}_Z(\mathcal{G})]$.
\end{definition}

The following lemma shows how the Rademacher complexity of a function class changes when its outputs are passed through a Lipschitz function.

\begin{lemma}[Talagrand's Contraction Principle]
\label{lem:talagrand_contraction}
Let $\mathcal{G}$ be a class of functions mapping from $\mathcal{Z}$ to $\mathbb{R}$. Let $\phi: \mathbb{R} \to \mathbb{R}$ be a function that is $L_\phi$-Lipschitz, i.e., $|\phi(a) - \phi(b)| \le L_\phi |a - b|$ for all $a, b \in \mathbb{R}$. Let $\phi \circ \mathcal{G} = \{ z \mapsto \phi(g(z)) \mid g \in \mathcal{G} \}$ be the class of functions formed by composing $\phi$ with each function in $\mathcal{G}$. Then the Rademacher complexity of the composed class is bounded by:
\[ \mathcal{R}_m(\phi \circ \mathcal{G}) \le L_\phi \mathcal{R}_m(\mathcal{G}). \]
\end{lemma}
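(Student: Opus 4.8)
The plan is to establish the inequality first at the level of the \emph{empirical} Rademacher complexity for an arbitrary fixed sample $Z = (z_1, \ldots, z_m)$, and then recover the population statement by taking an expectation over $Z \sim \mathcal{D}^m$. Thus it suffices to show, for every fixed $Z$,
\[ \hat{\mathcal{R}}_Z(\phi \circ \mathcal{G}) \le L_\phi\, \hat{\mathcal{R}}_Z(\mathcal{G}), \]
since $\mathcal{R}_m(\cdot) = \mathbb{E}_{Z}[\hat{\mathcal{R}}_Z(\cdot)]$ and expectation preserves the inequality.

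The core of the argument is a coordinate-by-coordinate ``peeling'' induction. I would prove that for any index $i$, one may replace the term $\phi(g(z_i))$ appearing inside the supremum by $L_\phi\, g(z_i)$ without decreasing the expected supremum. Applying this once for each of the $m$ coordinates and then factoring out the nonnegative constant $L_\phi$ yields
\[ \mathbb{E}_\sigma\left[ \sup_{g \in \mathcal{G}} \sum_{i=1}^m \sigma_i \phi(g(z_i)) \right] \le \mathbb{E}_\sigma\left[ \sup_{g \in \mathcal{G}} \sum_{i=1}^m \sigma_i L_\phi\, g(z_i) \right] = L_\phi\, \mathbb{E}_\sigma\left[ \sup_{g \in \mathcal{G}} \sum_{i=1}^m \sigma_i g(z_i) \right], \]
which after dividing by $m$ is exactly the empirical claim.

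The single-coordinate step is where the Lipschitz hypothesis and the main difficulty lie. Fix the coordinate to be peeled (say coordinate $m$) and condition on all other Rademacher signs, collecting their contribution into a fixed functional $h(g)$ that depends only on $g(z_1),\dots,g(z_{m-1})$. Averaging explicitly over the two equally likely values $\sigma_m = \pm 1$ gives
\[ \mathbb{E}_{\sigma_m}\left[ \sup_g \left( h(g) + \sigma_m \phi(g(z_m)) \right) \right] = \tfrac{1}{2}\left[ \sup_g\big( h(g) + \phi(g(z_m)) \big) + \sup_{g'}\big( h(g') - \phi(g'(z_m)) \big) \right]. \]
For any $\varepsilon > 0$ I would choose $g, g'$ that come within $\varepsilon$ of attaining the two suprema, so that this average is at most $\tfrac{1}{2}\big[ h(g) + h(g') + \phi(g(z_m)) - \phi(g'(z_m)) \big] + \varepsilon$. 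The crucial point is to control $\phi(g(z_m)) - \phi(g'(z_m))$: the Lipschitz bound yields only $\phi(g(z_m)) - \phi(g'(z_m)) \le L_\phi |g(z_m) - g'(z_m)|$, and the absolute value must be removed with the correct sign. This is resolved by symmetry: because the expression is invariant under simultaneously swapping the roles of $g$ and $g'$ and flipping $\sigma_m$, one may assume without loss of generality that $g(z_m) \ge g'(z_m)$, so that $|g(z_m) - g'(z_m)| = g(z_m) - g'(z_m)$. Substituting $L_\phi\,(g(z_m) - g'(z_m))$ then bounds the quantity by $\tfrac{1}{2}\big[\sup_g(h(g) + L_\phi g(z_m)) + \sup_{g'}(h(g') - L_\phi g'(z_m))\big] = \mathbb{E}_{\sigma_m}\big[\sup_g(h(g) + \sigma_m L_\phi g(z_m))\big]$, and letting $\varepsilon \to 0$ completes the single-coordinate swap.

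I expect the sign-resolution of the absolute value (the WLOG reduction via the $\sigma_m$-symmetry) to be the main obstacle, as it is the only place that genuinely exploits the averaging over the Rademacher sign; every other step is bookkeeping. I would also note that, because we work with the one-sided supremum (no outer absolute value in the definition of $\hat{\mathcal{R}}_Z$), no additional assumption such as $\phi(0)=0$ is needed: the additive constants carried inside $h(g)$ never obstruct the per-coordinate bound.
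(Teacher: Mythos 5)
The paper does not actually prove this lemma: it is stated in the preliminaries as a known result (the Ledoux--Talagrand contraction principle) and used as a black box, so there is no in-paper proof to compare against. Your argument is the standard and correct proof of that result: peel one coordinate at a time, condition on the remaining signs, write out the two-point average over $\sigma_m = \pm 1$, pick $\varepsilon$-approximate maximizers $g, g'$, and use the Lipschitz bound together with the sign symmetry to replace $\phi(g(z_m)) - \phi(g'(z_m))$ by $L_\phi$ times a signed difference that re-pairs into the two suprema of the contracted expression. The only place I would tighten the wording is the ``WLOG via symmetry'' step: the quantity $\tfrac{1}{2}[h(g)+h(g')+\phi(g(z_m))-\phi(g'(z_m))]$ is not itself invariant under swapping $g$ and $g'$, so the cleaner statement is a two-case analysis --- if $g(z_m) \ge g'(z_m)$ you pair $L_\phi g(z_m)$ with $h(g)$ and $-L_\phi g'(z_m)$ with $h(g')$, and in the opposite case you pair them the other way --- with both cases landing on the same symmetric upper bound $\tfrac{1}{2}[\sup_g(h(g)+L_\phi g(z_m)) + \sup_g(h(g)-L_\phi g(z_m))]$. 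That is a presentational point, not a gap. Your observation that the one-sided (no outer absolute value) definition of Rademacher complexity used in the paper makes the constant $L_\phi$ (rather than $2L_\phi$) correct and removes any need for $\phi(0)=0$ is also accurate.
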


Additionally, our analysis relies on concentration inequalities, which bound the deviation of sample averages from their expectations. McDiarmid's inequality is a general-purpose tool for functions of independent variables that have bounded differences.

\begin{lemma}[McDiarmid's Inequality]
\label{lem:mcdiarmid}
Let $Z_1, \ldots, Z_m$ be independent random variables, and let $g: \mathcal{Z}^m \to \mathbb{R}$ be a function of these variables. Suppose that for each $i \in \{1, \ldots, m\}$, the function satisfies the bounded differences property:
\[ \sup_{z_1, \ldots, z_m, z'_i} |g(z_1, \ldots, z_i, \ldots, z_m) - g(z_1, \ldots, z'_i, \ldots, z_m)| \le c_i. \]
Then for any $t > 0$, the following holds:
\[ \mathbb{P}\left( g(Z_1, \ldots, Z_m) - \mathbb{E}[g(Z_1, \ldots, Z_m)] \ge t \right) \le \exp\left( \frac{-2t^2}{\sum_{i=1}^m c_i^2} \right). \]
\end{lemma}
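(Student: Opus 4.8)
The plan is to prove the inequality via the standard \emph{method of bounded differences}, which reduces it to a concentration bound for a martingale with bounded increments. First I would set $V = g(Z_1,\ldots,Z_m)$ and construct the Doob martingale obtained by revealing the coordinates one at a time: define $V_i = \mathbb{E}[g(Z_1,\ldots,Z_m) \mid Z_1,\ldots,Z_i]$ for $i = 0,1,\ldots,m$, so that $V_0 = \mathbb{E}[V]$ is deterministic and $V_m = V$. The quantity to control is the telescoping sum $V - \mathbb{E}[V] = \sum_{i=1}^m D_i$, where $D_i = V_i - V_{i-1}$ are the martingale increments, each satisfying $\mathbb{E}[D_i \mid Z_1,\ldots,Z_{i-1}] = 0$.

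The crucial structural step is to show that, conditionally on $Z_1,\ldots,Z_{i-1}$, each increment $D_i$ lies in an interval of width at most $c_i$. Here I would exploit the independence of the $Z_j$: conditioning on $Z_i$ does not alter the distribution of $Z_{i+1},\ldots,Z_m$, so $\mathbb{E}[g \mid Z_1,\ldots,Z_{i-1},Z_i=z] - \mathbb{E}[g \mid Z_1,\ldots,Z_{i-1},Z_i=z']$ equals the expectation over $Z_{i+1},\ldots,Z_m$ of $g(\ldots,z,\ldots) - g(\ldots,z',\ldots)$, which the bounded-differences hypothesis pins to magnitude at most $c_i$ pointwise. Defining $A_i$ and $B_i$ as the conditional infimum and supremum of $\mathbb{E}[g \mid Z_1,\ldots,Z_i]$ over the value of $Z_i$ (both measurable with respect to $Z_1,\ldots,Z_{i-1}$), this yields $A_i \le V_i \le B_i$ with $B_i - A_i \le c_i$, so $D_i$ falls in a range of width $c_i$.

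Next I would pass to the moment generating function. By Hoeffding's lemma, any mean-zero random variable confined to an interval of width $w$ has MGF at most $\exp(\lambda^2 w^2/8)$; applied to $D_i$ conditionally on the past, this gives $\mathbb{E}[e^{\lambda D_i}\mid Z_1,\ldots,Z_{i-1}] \le \exp(\lambda^2 c_i^2/8)$ for every $\lambda > 0$. Iterating the tower property from $i=m$ down to $i=1$ then yields $\mathbb{E}[e^{\lambda(V-\mathbb{E}[V])}] \le \exp\!\left(\tfrac{\lambda^2}{8}\sum_{i=1}^m c_i^2\right)$. Finally a Chernoff argument, namely Markov's inequality applied to $e^{\lambda(V-\mathbb{E}[V])}$, gives $\mathbb{P}(V - \mathbb{E}[V] \ge t) \le \exp\!\left(-\lambda t + \tfrac{\lambda^2}{8}\sum_{i=1}^m c_i^2\right)$, and optimizing over $\lambda$ (taking $\lambda = 4t/\sum_{i=1}^m c_i^2$) produces the claimed bound $\exp(-2t^2/\sum_{i=1}^m c_i^2)$.

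The main obstacle is the second step: correctly establishing that the martingale increments have conditional width at most $c_i$. This is where independence is essential and where a naive argument can go astray, since one must control the \emph{width} of the conditional range rather than merely the deviation from its mean, and must verify that the suprema and infima defining $A_i$ and $B_i$ are compatible with the bounded-differences inequality holding pointwise. Once this localization is in place, the remaining steps (Hoeffding's lemma, the tower property, and the Chernoff optimization) are routine.
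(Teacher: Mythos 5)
Your proposal is correct: it is the canonical proof of McDiarmid's inequality via the Doob martingale $V_i = \mathbb{E}[g \mid Z_1,\ldots,Z_i]$, the conditional bounded-increment argument using independence, Hoeffding's lemma, the tower property, and Chernoff optimization with $\lambda = 4t/\sum_i c_i^2$. The paper itself states this lemma as a standard tool in its preliminaries and offers no proof, so there is nothing to compare against; your argument is complete and the one subtle point you flag --- that one must bound the \emph{width} of the conditional range of the increment rather than its deviation from the conditional mean --- is handled correctly.
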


The final tool is a matrix-valued concentration inequality, which is crucial for analyzing sample covariance matrices.

\begin{lemma}[Matrix Bernstein Inequality, \cite{tropp2015introductionmatrixconcentrationinequalities}, Theorem 1.6.2]
\label{lem:matrix_bernstein}
Let $Y_1, \ldots, Y_m$ be independent, zero-mean, $d \times d$ random matrices. Assume there exists a constant $L > 0$ such that $\|Y_i\|_{op} \le L$ almost surely for all $i$. Let $\sigma^2$ be the norm of the total variance, $\sigma^2 = \left\| \sum_{i=1}^m \mathbb{E}[Y_i^2] \right\|_{op}$. Then, for any $t \ge 0$:
\[ \mathbb{P}\left( \left\| \sum_{i=1}^m Y_i \right\|_{op} \ge t \right) \le 2d \cdot \exp\left( \frac{-t^2/2}{\sigma^2 + Lt/3} \right). \]
\end{lemma}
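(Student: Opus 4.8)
The plan is to prove the bound by the \emph{matrix Laplace transform method}, the matrix analogue of the classical Chernoff/Bernstein argument, since it is precisely this route that produces both the dimensional prefactor and the Bernstein-type denominator $\sigma^2 + Lt/3$. I would first treat the case where the $Y_i$ are Hermitian and control the largest eigenvalue $\lambda_{\max}\paren*{\sum_i Y_i}$; the stated rectangular, two-sided operator-norm version then follows at the end by a dilation argument.

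First I would reduce the tail probability to a trace-exponential moment. Writing $S = \sum_{i=1}^m Y_i$, for any $\theta > 0$ Markov's inequality applied to $e^{\theta \lambda_{\max}(S)}$, together with the elementary facts $e^{\theta\lambda_{\max}(S)} = \lambda_{\max}\paren*{e^{\theta S}} \le \mathrm{tr}\, e^{\theta S}$, gives
\[ \mathbb{P}\paren*{\lambda_{\max}(S) \ge t} \le e^{-\theta t}\, \E\bracket*{\mathrm{tr}\, \exp\paren*{\textstyle\sum_i \theta Y_i}}. \]
The second step is the crux: I would invoke Lieb's concavity theorem to establish subadditivity of the matrix cumulant generating function,
\[ \E\bracket*{\mathrm{tr}\,\exp\paren*{\textstyle\sum_i \theta Y_i}} \le \mathrm{tr}\,\exp\paren*{\textstyle\sum_i \log \E\bracket*{e^{\theta Y_i}}}, \]
which is what lets independence be exploited one summand at a time. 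The third step bounds each individual matrix moment generating function: since the scalar map $x \mapsto (e^{\theta x} - \theta x - 1)/x^2$ is increasing, hence at most its value at $x = L$ on the spectrum $[-L,L]$, one obtains $e^{\theta Y_i} \preceq I + \theta Y_i + g(\theta)\, Y_i^2$ in the Loewner order with $g(\theta) = (e^{\theta L} - \theta L - 1)/L^2$. Taking expectations (using $\E[Y_i]=0$) and applying $I + A \preceq e^{A}$ yields $\log \E\bracket*{e^{\theta Y_i}} \preceq g(\theta)\,\E[Y_i^2]$.

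Assembling these, monotonicity of the trace exponential under the Loewner order gives
\[ \mathbb{P}\paren*{\lambda_{\max}(S) \ge t} \le e^{-\theta t}\,\mathrm{tr}\,\exp\paren*{g(\theta)\textstyle\sum_i \E[Y_i^2]} \le d\, \exp\paren*{-\theta t + g(\theta)\,\sigma^2}. \]
It then remains to optimize the scalar exponent over $\theta$: using $g(\theta) \le \theta^2/\paren*{2(1 - \theta L/3)}$ for $0 \le \theta < 3/L$ and choosing $\theta = t/(\sigma^2 + Lt/3)$ reproduces the Bernstein exponent $-\tfrac{t^2/2}{\sigma^2 + Lt/3}$, giving the one-sided Hermitian bound with prefactor $d$. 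To recover the stated general (non-Hermitian) bound and the operator norm, I would pass to the Hermitian dilation $\mathcal{H}(Y_i) = \bigl(\begin{smallmatrix} 0 & Y_i \\ Y_i^{*} & 0 \end{smallmatrix}\bigr)$, which is $2d \times 2d$, satisfies $\norm{\mathcal{H}(Y_i)}_{op} = \norm{Y_i}_{op}$ and $\lambda_{\max}\paren*{\mathcal{H}(S)} = \norm{S}_{op}$, and preserves the variance proxy; applying the Hermitian result to the dilated matrices turns the prefactor $d$ into $2d$ and yields the two-sided operator-norm control at once.

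The main obstacle is the second step. In the scalar case the moment generating function of a sum of independent variables factors as a product, but for non-commuting matrices $e^{A+B} \ne e^{A}e^{B}$, so this factorization fails outright. Its substitute --- subadditivity of the matrix CGF --- is genuinely nontrivial and rests on Lieb's concavity theorem (concavity of $A \mapsto \mathrm{tr}\,\exp(H + \log A)$ on positive-definite $A$); everything else is a matrix-valued but otherwise routine transcription of the classical Bernstein calculation. Since the statement is quoted directly from \cite{tropp2015introductionmatrixconcentrationinequalities}, in the paper this deep ingredient is simply imported rather than reproved.
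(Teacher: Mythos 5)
Your proposal is correct, and it reproduces the canonical proof of this result: the paper itself offers no proof, importing the lemma verbatim from \cite{tropp2015introductionmatrixconcentrationinequalities}, so there is nothing internal to compare against. Your outline --- Markov applied to the trace exponential, Lieb-based subadditivity of the matrix cumulant generating function, the semidefinite bound $e^{\theta Y} \preceq I + \theta Y + g(\theta)Y^2$ with $g(\theta) = (e^{\theta L}-\theta L - 1)/L^2$, and the optimization $g(\theta) \le \theta^2/(2(1-\theta L/3))$ at $\theta = t/(\sigma^2 + Lt/3)$ --- is exactly Tropp's argument, and you correctly identify Lieb's concavity theorem as the one genuinely deep ingredient. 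One small remark: as stated, the lemma's variance proxy $\sigma^2 = \|\sum_i \mathbb{E}[Y_i^2]\|_{op}$ only has the right meaning for Hermitian (symmetric) matrices, which is the case in the paper's sole application ($Y_i = x_i x_i^T - \Sigma$); there the factor $2d$ comes most directly from a union bound over $\lambda_{\max}(S)$ and $\lambda_{\max}(-S)$ rather than from the $2d\times 2d$ Hermitian dilation, though the dilation route you describe is equally valid and is what one would use for genuinely rectangular matrices.
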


\section{Generalization Bounds for Linear Predictors}
\label{sec:generalization-linear}

\subsection{Linearity of Sequential Predictors}

We begin by establishing that each predictor $\hat{y}_k$ generated by the sequential learning process is a linear function of the original $d$-dimensional feature vector $x$. This will then allow us to apply standard generalization bounds for linear models.

\begin{lemma}[Linearity of Predictors in a DAG]
\label{lem:linearity_of_predictors}
For each agent $A_k$ in the DAG, its prediction $\hat{y}_k$ can be expressed as a linear function of the full feature vector $x \in \mathbb{R}^d$. That is, for each $k$, there exists a vector $\beta_k \in \mathbb{R}^d$ such that $\hat{y}_k = \beta_k^T x$.
\end{lemma}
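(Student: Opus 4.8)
The plan is to proceed by induction on the topological ordering of the agents in the DAG. Recall that in the linear learning model of Section~\ref{sec:linear}, each agent's prediction is defined by $\hat{y}_k = w_k^T x_{S_k} + \sum_{j \in \text{Pa}(k)} v_{kj} \hat{y}_j$, and that agents learn in an order consistent with a topological sort, so every parent $j \in \text{Pa}(k)$ is processed strictly before $k$. The inductive hypothesis will be that every agent processed before $A_k$ has a prediction expressible as $\hat{y}_j = \beta_j^T x$ for some $\beta_j \in \mathbb{R}^d$.

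For the base case, a root agent $A_k$ has no parents, so $\hat{y}_k = w_k^T x_{S_k}$. I would define $\beta_k \in \mathbb{R}^d$ to be the zero-padded extension of $w_k$ --- placing the entries of $w_k$ in the coordinates indexed by $S_k$ and zeros elsewhere --- so that $w_k^T x_{S_k} = \beta_k^T x$, which establishes linearity at the roots.

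For the inductive step, consider a non-root agent $A_k$. Every parent $j \in \text{Pa}(k)$ precedes $k$ in the topological order, so by the inductive hypothesis $\hat{y}_j = \beta_j^T x$. Letting $\tilde{w}_k \in \mathbb{R}^d$ denote the zero-padded extension of $w_k$ as above, substitution into the definition of $\hat{y}_k$ gives
\[ \hat{y}_k = \tilde{w}_k^T x + \sum_{j \in \text{Pa}(k)} v_{kj} \beta_j^T x = \Big( \tilde{w}_k + \sum_{j \in \text{Pa}(k)} v_{kj} \beta_j \Big)^T x, \]
so the claim holds with $\beta_k = \tilde{w}_k + \sum_{j \in \text{Pa}(k)} v_{kj} \beta_j$.

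Since the DAG is finite and acyclic, the topological order is well defined and every agent is reached by the induction, completing the argument. There is no substantive obstacle here; the only points requiring minor care are the bookkeeping of extending each $w_k$ from a vector on the coordinates $S_k$ to a full $d$-dimensional vector, and ensuring the inductive hypothesis is invoked only on strictly earlier agents --- both of which are guaranteed by the topological ordering.
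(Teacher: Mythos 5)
Your proof is correct and follows essentially the same argument as the paper: induction along a topological sort, zero-padding $w_k$ to a full $d$-dimensional vector at the roots, and substituting the parents' linear representations to obtain $\beta_k = \tilde{w}_k + \sum_{j \in \text{Pa}(k)} v_{kj}\beta_j$ in the inductive step. No gaps.
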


\begin{proof}
We prove this by induction over the agents, following a topological sort of the DAG.

\textbf{Base Case (Root Agents):}
Let $A_k$ be a root agent (an agent with no parents). Its predictor is $f_k(x_{S_k}) = w_k^T x_{S_k}$. The prediction is $\hat{y}_k = w_k^T x_{S_k}$.
We can define a vector $\beta_k \in \mathbb{R}^d$ such that its components corresponding to features in $S_k$ are given by $w_k$, and all other components are zero. Then, $\hat{y}_k = \beta_k^T x$. The statement holds for all root agents.

\textbf{Inductive Step:}
Assume that for all parents $A_p$ of agent $A_k$ (i.e., for all $p \in \text{Pa}(k)$), the prediction $\hat{y}_p$ is a linear function of $x$. That is, for each $p \in \text{Pa}(k)$, there exists a vector $\beta_p \in \mathbb{R}^d$ such that $\hat{y}_p = \beta_p^T x$.

Now consider agent $A_k$. Its predictor is $f_k(x_{S_k}, \{\hat{y}_p\}_{p \in \text{Pa}(k)}) = w_k^T x_{S_k} + \sum_{p \in \text{Pa}(k)} v_{k,p} \hat{y}_p$.
The prediction is $\hat{y}_k = w_k^T x_{S_k} + \sum_{p \in \text{Pa}(k)} v_{k,p} \hat{y}_p$.
Substituting the inductive hypothesis for each $\hat{y}_p$:
\[ \hat{y}_k = w_k^T x_{S_k} + \sum_{p \in \text{Pa}(k)} v_{k,p} (\beta_p^T x) \]
Let $w'_k \in \mathbb{R}^d$ be a vector whose components corresponding to features in $S_k$ are given by $w_k$, and other components are zero. So, $w_k^T x_{S_k} = (w'_k)^T x$.
Then,
\[ \hat{y}_k = (w'_k)^T x + \left( \sum_{p \in \text{Pa}(k)} v_{k,p} \beta_p \right)^T x = \left(w'_k + \sum_{p \in \text{Pa}(k)} v_{k,p} \beta_p \right)^T x \]
Let $\beta_k = w'_k + \sum_{p \in \text{Pa}(k)} v_{k,p} \beta_p$. Since $w'_k$ and all $\beta_p$ are vectors in $\mathbb{R}^d$, and all $v_{k,p}$ are scalars, $\beta_k$ is also a vector in $\mathbb{R}^d$.
Thus, $\hat{y}_k = \beta_k^T x$, which shows that $\hat{y}_k$ is a linear function of $x$.

By induction, the statement holds for all agents in the DAG.
\end{proof}

\subsection{Standard Generalization Bound for Linear Predictors}

To apply generalization bounds, we typically make some assumptions on the data distribution and the complexity of the learned models.
Assume:
\begin{enumerate}
    \item The features are bounded: $\|x\|_{\infty} \le X_{\max}$ for all $x \sim \mathcal{D}_x$.
    \item The labels are bounded: $|y| \le Y_{\max}$ for all $y \sim \mathcal{D}_y$.
    \item The linear predictors $\hat{y}_k = \beta_k^T x$ considered are such that their coefficient vectors have a bounded $L_1$ norm: $\|\beta_k\|_1 \le \Lambda_1$. This aligns with the conditions in Theorem~\ref{thm:small_improvement_path} and Theorem~\ref{thm:overall_guarantee_random_model} which compare against predictors with bounded $L_1$ norms.
\end{enumerate}

Under these assumptions, for any predictor $f(x) = \beta^T x$ with $\|\beta\|_1 \le \Lambda_1$, we have $|f(x)| \le \|\beta\|_1 \|x\|_{\infty} \le \Lambda_1 X_{\max}$.
The squared loss $(f(x)-y)^2$ is then bounded by $M_L^2$, where $M_L = \Lambda_1 X_{\max} + Y_{\max}$.

The following is a standard generalization bound for linear predictors with squared loss, derived from uniform convergence principles:

\begin{theorem}[Generalization Bound for $L_1$-Bounded Linear Regression]
\label{thm:gen_bound_linear}
Let $\mathcal{H}_{\Lambda_1} = \{ x \mapsto \beta^T x \mid \|\beta\|_1 \le \Lambda_1 \}$ be the class of linear predictors with $L_1$-norm bounded by $\Lambda_1$.
Assume $\|x\|_{\infty} \le X_{\max}$ and $|y| \le Y_{\max}$. Let $D_m$ be a training set of $m$ i.i.d. samples from $\mathcal{D}$.
Then, for any $\delta \in (0,1)$, with probability at least $1-\delta$ over the random draw of $D_m$, for all $f \in \mathcal{H}_{\Lambda_1}$ (and thus for each learned predictor $\hat{y}_k$ provided $\|\beta_k\|_1 \le \Lambda_1$):
\[ |R(f) - \hat{R}_{D_m}(f)| \le C \cdot \Lambda_1 X_{\max} \sqrt{\frac{\log d}{m}} + (\Lambda_1 X_{\max} + Y_{\max})^2 \sqrt{\frac{\log(C'/\delta)}{2m}} \]
where $C, C'$ are universal constants. The term $\log d$ arises from the complexity of $L_1$-constrained linear functions in $d$ dimensions.
\end{theorem}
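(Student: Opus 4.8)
The plan is to derive this as a standard uniform-convergence bound by combining the three tools already stated in the excerpt: McDiarmid's inequality (Lemma~\ref{lem:mcdiarmid}), a symmetrization argument introducing the Rademacher complexity, and Talagrand's contraction principle (Lemma~\ref{lem:talagrand_contraction}). Define the induced loss class $\mathcal{L} = \{(x,y) \mapsto (\beta^T x - y)^2 : \|\beta\|_1 \le \Lambda_1\}$. Under the boundedness assumptions, any prediction satisfies $|\beta^T x| \le \Lambda_1 X_{\max}$, so every loss in $\mathcal{L}$ takes values in $[0, M_L^2]$ with $M_L = \Lambda_1 X_{\max} + Y_{\max}$. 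These two facts --- a uniform bound on the loss and a Lipschitz constant for the squared loss --- are the only quantitative inputs needed.

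\textbf{First step (concentration).} Let $\Phi(D_m) = \sup_{f \in \mathcal{H}_{\Lambda_1}} (R(f) - \hat{R}_{D_m}(f))$. Replacing a single sample changes any empirical average of a loss bounded by $M_L^2$ by at most $M_L^2/m$, so $\Phi$ has bounded differences with constants $c_i = M_L^2/m$. McDiarmid's inequality then gives, with probability $1-\delta$, that $\Phi(D_m) \le \mathbb{E}[\Phi(D_m)] + M_L^2 \sqrt{\log(1/\delta)/(2m)}$, which already produces the second term of the bound; an identical argument applied to $\hat{R} - R$ handles the other side of the absolute value, at the cost of the constant $C'$ inside the logarithm.

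\textbf{Second step (expected deviation via Rademacher complexity).} By the standard symmetrization inequality, $\mathbb{E}[\Phi(D_m)] \le 2\,\mathcal{R}_m(\mathcal{L})$. The map $u \mapsto (u-y)^2$, restricted to $|u| \le \Lambda_1 X_{\max}$ and $|y| \le Y_{\max}$, is $2M_L$-Lipschitz in $u$, since $|(a-y)^2 - (b-y)^2| = |a-b|\,|a+b-2y| \le 2M_L |a-b|$. Talagrand's contraction principle (Lemma~\ref{lem:talagrand_contraction}) therefore yields $\mathcal{R}_m(\mathcal{L}) \le 2M_L\,\mathcal{R}_m(\mathcal{H}_{\Lambda_1})$, reducing the problem to the Rademacher complexity of the $L_1$-bounded linear class itself.

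\textbf{Third step and main obstacle (the $\log d$ factor).} The remaining task --- the only genuinely non-routine ingredient --- is to show $\mathcal{R}_m(\mathcal{H}_{\Lambda_1}) \le \Lambda_1 X_{\max}\sqrt{2\log(2d)/m}$. The key observation is that the ball $\{\beta : \|\beta\|_1 \le \Lambda_1\}$ is the convex hull of the $2d$ signed basis vectors $\{\pm \Lambda_1 e_j\}_{j=1}^d$, and a linear functional over a convex hull attains its supremum at a vertex. Hence $\sup_{\|\beta\|_1 \le \Lambda_1} \frac{1}{m}\sum_i \sigma_i \beta^T x^{(i)} = \Lambda_1 \max_{j,\, s \in \{\pm 1\}} \frac{s}{m}\sum_i \sigma_i x^{(i)}_j$, a maximum of $2d$ mean-zero sub-Gaussian averages each with variance proxy at most $X_{\max}^2/m$; Massart's finite-class lemma then delivers the $\sqrt{2\log(2d)/m}$ bound, which is precisely where the $\log d$ term originates. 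Chaining the three steps and absorbing the numerical constants into $C, C'$ gives the stated inequality. The one subtlety worth flagging is that the contraction step introduces the factor $2M_L = 2(\Lambda_1 X_{\max} + Y_{\max})$ multiplying the leading term, so a fully explicit constant would carry this dependence; the stated form treats it as absorbed into the universal constant $C$.
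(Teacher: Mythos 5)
Your proposal is correct and follows essentially the same route as the paper's proof: McDiarmid plus symmetrization for the high-probability reduction to $\mathcal{R}_m(\mathcal{L})$, Talagrand's contraction with Lipschitz constant $2M_L$, and then reduction of $\mathcal{R}_m(\mathcal{H}_{\Lambda_1})$ to the expected maximum of $2d$ signed coordinate averages (your convex-hull/Massart phrasing is the same computation as the paper's dual-norm/Hoeffding/union-bound phrasing). You even flag the same subtlety the paper handles explicitly, namely that the factor $2M_L$ from contraction is absorbed into the ``universal'' constant $C$, which in the paper is actually taken to be $8(\Lambda_1 X_{\max} + Y_{\max})$.
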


\begin{proof}
The (standard) proof proceeds in three steps. First, we use a standard symmetrization argument to relate the generalization error to the Rademacher complexity of the associated loss class. Second, we use Talagrand's contraction principle to bound the complexity of the loss class in terms of the complexity of the hypothesis class $\mathcal{H}_{\Lambda_1}$. Third, we bound the complexity of $\mathcal{H}_{\Lambda_1}$.

\textbf{Step 1: Symmetrization and High-Probability Bound.}
Let $\mathcal{L} = \{ (x,y) \mapsto (f(x)-y)^2 \mid f \in \mathcal{H}_{\Lambda_1} \}$ be the loss class. The proof first relates the expected generalization error to the Rademacher complexity of this loss class, and then uses a concentration inequality to obtain a high-probability bound.

A standard result based on symmetrization (introducing a ``ghost sample'') states that the expected supremum of the generalization error is bounded by twice the Rademacher complexity of the loss class:
\[ \E_{D_m} \left[ \sup_{f \in \mathcal{H}_{\Lambda_1}} |R(f) - \hat{R}_{D_m}(f)| \right] \le 2 \mathcal{R}_m(\mathcal{L}). \]
To obtain a high-probability bound, we use McDiarmid's inequality. For any $f \in \mathcal{H}_{\Lambda_1}$, the loss is bounded by $(|f(x)| + |y|)^2 \le (\Lambda_1 X_{\max} + Y_{\max})^2$. Let $M_L = \Lambda_1 X_{\max} + Y_{\max}$. The maximum value of the loss is $M_L^2$. Changing a single data point in the training set $D_m$ can change the value of $\sup_{f} |R(f) - \hat{R}_{D_m}(f)|$ by at most $M_L^2/m$.

Applying McDiarmid's inequality to the function $\Phi(D_m) = \sup_{f} |R(f) - \hat{R}_{D_m}(f)|$, and combining with the expectation bound from symmetrization, we find that for any $\delta \in (0,1)$, with probability at least $1-\delta$:
\[ \sup_{f \in \mathcal{H}_{\Lambda_1}} |R(f) - \hat{R}_{D_m}(f)| \le 2 \mathcal{R}_m(\mathcal{L}) + M_L^2 \sqrt{\frac{\ln(2/\delta)}{2m}}. \]

\textbf{Step 2: Contraction Principle.}
We bound the complexity of the loss class $\mathcal{R}_m(\mathcal{L})$ using the complexity of the hypothesis class $\mathcal{H}_{\Lambda_1}$. The loss function can be seen as a composition $\ell(f(x), y) = \phi(f(x)-y)$, where $\phi(z) = z^2$. The function $\phi$ is Lipschitz on the interval $[-M_L, M_L]$, with a Lipschitz constant $L_\phi = \sup_{z \in [-M_L, M_L]} |\phi'(z)| = \sup_{z \in [-M_L, M_L]} |2z| = 2M_L$.

By Talagrand's contraction principle, the Rademacher complexity of the composed class is bounded by the Lipschitz constant times the complexity of the pre-composed class:
\[ \mathcal{R}_m(\mathcal{L}) \le L_\phi \mathcal{R}_m(\{ (x,y) \mapsto f(x)-y \mid f \in \mathcal{H}_{\Lambda_1} \}) = 2M_L \mathcal{R}_m(\mathcal{H}_{\Lambda_1}). \]
Note that subtracting a fixed value $y_i$ for each sample does not change the Rademacher complexity.

\textbf{Step 3: Bounding the Hypothesis Class Complexity.}
The final step is to bound the Rademacher complexity of the hypothesis class $\mathcal{H}_{\Lambda_1}$. By definition and the property of dual norms $(\norm{\cdot}_1, \norm{\cdot}_\infty)$:
\begin{align*}
\mathcal{R}_m(\mathcal{H}_{\Lambda_1}) &= \E_{D_m, \sigma} \left[ \sup_{\norm{\beta}_1 \le \Lambda_1} \frac{1}{m} \sum_{i=1}^m \sigma_i \beta^T x_i \right] \\
&= \E_{D_m, \sigma} \left[ \sup_{\norm{\beta}_1 \le \Lambda_1} \beta^T \left(\frac{1}{m}\sum_{i=1}^m \sigma_i x_i\right) \right] \\
&= \Lambda_1 \E_{D_m, \sigma} \left[ \norm{\frac{1}{m}\sum_{i=1}^m \sigma_i x_i}_{\infty} \right].
\end{align*}
Let $v = \frac{1}{m}\sum_{i=1}^m \sigma_i x_i$. For each coordinate $j \in \{1, \dots, d\}$, $v_j = \frac{1}{m}\sum_{i=1}^m \sigma_i x_{i,j}$ is an average of independent random variables with mean 0, bounded in $[-X_{\max}, X_{\max}]$. By Hoeffding's inequality, $\mathbb{P}(|v_j| \ge t) \le 2\exp(-m t^2 / (2X_{\max}^2))$. Using a union bound over all $d$ dimensions and a standard bounding argument for the expectation, we get:
\[ \mathcal{R}_m(\mathcal{H}_{\Lambda_1}) \le \Lambda_1 X_{\max} \sqrt{\frac{2\ln(2d)}{m}}. \]

\textbf{Combining the Bounds.}
Putting the steps together, we have that with probability at least $1-\delta$:
\begin{align*}
\sup_{f \in \mathcal{H}_{\Lambda_1}} |R(f) - \hat{R}_{D_m}(f)| &\le 2 \mathcal{R}_m(\mathcal{L}) + M_L^2 \sqrt{\frac{\ln(2/\delta)}{2m}} \\
&\le 2 (2M_L \mathcal{R}_m(\mathcal{H}_{\Lambda_1})) + M_L^2 \sqrt{\frac{\ln(2/\delta)}{2m}} \\
&\le 4M_L \left(\Lambda_1 X_{\max} \sqrt{\frac{2\ln(2d)}{m}}\right) + M_L^2 \sqrt{\frac{\ln(2/\delta)}{2m}} \\
&= 4(\Lambda_1 X_{\max} + Y_{\max}) \Lambda_1 X_{\max} \sqrt{\frac{2\ln(2d)}{m}} + (\Lambda_1 X_{\max} + Y_{\max})^2 \sqrt{\frac{\ln(2/\delta)}{2m}}.
\end{align*}
The second term matches the theorem by setting $C' = 2$. For the first term, we can bound the logarithmic factor for $d \ge 2$:
\[ \sqrt{2\ln(2d)} = \sqrt{2(\ln 2 + \ln d)} = \sqrt{2\ln d \left(1 + \frac{\ln 2}{\ln d}\right)} \le \sqrt{2\ln d (1 + 1)} = 2\sqrt{\ln d}. \]
The complexity term is therefore bounded by:
\[ 4(\Lambda_1 X_{\max} + Y_{\max}) \Lambda_1 X_{\max} \frac{2\sqrt{\ln d}}{\sqrt{m}} = 8(\Lambda_1 X_{\max} + Y_{\max}) \Lambda_1 X_{\max} \sqrt{\frac{\ln d}{m}}. \]
This matches the theorem's format by defining $C = 8(\Lambda_1 X_{\max} + Y_{\max})$.

    \end{proof}

This theorem states that the difference between the true error and the empirical error (the generalization gap) for any linear predictor learned by our agents (assuming its $L_1$ norm is appropriately bounded) will be small if the number of training samples $m$ is sufficiently large relative to the dimension $d$ and the desired confidence.

\subsection{Matrix Concentration for Sample Covariance} 

\begin{theorem}[Matrix Concentration for Sample Covariance]
\label{thm:matrix_concentration}
Let $x_1, \ldots, x_m$ be i.i.d. samples from a distribution with $\|x_i\|_2 \le R_X$ almost surely. Let $\Sigma = \mathbb{E}[xx^T]$ be the population covariance matrix and $\hat{\Sigma} = \frac{1}{m}\sum_{i=1}^m x_i x_i^T$ be the empirical covariance matrix. Then for any $\delta \in (0,1)$, with probability at least $1-\delta$:
\[ \|\hat{\Sigma} - \Sigma\|_{op} \le C \cdot R_X^2 \sqrt{\frac{\log d + \log(1/\delta)}{m}} \]
for some universal constant $C > 0$, where $\|\cdot\|_{op}$ denotes the operator (spectral) norm.
\end{theorem}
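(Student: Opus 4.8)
The plan is to apply the Matrix Bernstein inequality (Lemma~\ref{lem:matrix_bernstein}) to the centered summands $Y_i = \frac{1}{m}(x_i x_i^T - \Sigma)$. These are independent, zero-mean, $d \times d$ random matrices whose sum is exactly $\hat{\Sigma} - \Sigma$, so a tail bound on $\|\sum_i Y_i\|_{op}$ is precisely what we want. To invoke the lemma I must supply its two inputs: a uniform almost-sure bound $L$ on $\|Y_i\|_{op}$, and a bound $\sigma^2$ on $\|\sum_i \mathbb{E}[Y_i^2]\|_{op}$.

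Bounding $L$ is immediate: since $\|x_i x_i^T\|_{op} = \|x_i\|_2^2 \le R_X^2$ and $\|\Sigma\|_{op} = \|\mathbb{E}[xx^T]\|_{op} \le \mathbb{E}[\|xx^T\|_{op}] \le R_X^2$ by convexity of the operator norm, the triangle inequality gives $\|x_i x_i^T - \Sigma\|_{op} \le 2R_X^2$, so $L = 2R_X^2/m$ works. The variance term is the crux of the argument. By the i.i.d.\ assumption, $\sum_i \mathbb{E}[Y_i^2] = \frac{1}{m}\mathbb{E}[(xx^T - \Sigma)^2]$. Expanding the square and using $\mathbb{E}[xx^T] = \Sigma$ together with $xx^Txx^T = \|x\|_2^2\, xx^T$ yields the identity $\mathbb{E}[(xx^T-\Sigma)^2] = \mathbb{E}[\|x\|_2^2\, xx^T] - \Sigma^2$. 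Since $\|x\|_2^2 \le R_X^2$ holds pointwise, we have $\|x\|_2^2\, xx^T \preceq R_X^2\, xx^T$ in the Loewner (PSD) order, and taking expectations preserves this order, giving $\mathbb{E}[\|x\|_2^2\, xx^T] \preceq R_X^2\,\Sigma$. As $\Sigma^2 \succeq 0$, we conclude $0 \preceq \mathbb{E}[(xx^T-\Sigma)^2] \preceq R_X^2\Sigma$, and monotonicity of the operator norm on PSD matrices gives $\|\mathbb{E}[(xx^T-\Sigma)^2]\|_{op} \le R_X^2\|\Sigma\|_{op} \le R_X^4$. Hence $\sigma^2 \le R_X^4/m$.

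With $L = 2R_X^2/m$ and $\sigma^2 = R_X^4/m$ in hand, I would finish by solving the tail bound for the deviation. Writing $K = \log(2d/\delta) = O(\log d + \log(1/\delta))$, the bound $2d\exp\!\big(\tfrac{-t^2/2}{\sigma^2 + Lt/3}\big) \le \delta$ is equivalent to $t^2 \ge 2K(\sigma^2 + Lt/3)$, and one checks directly that $t = \sqrt{2\sigma^2 K} + \tfrac{2}{3}LK$ satisfies this (indeed $t^2 - \tfrac{2}{3}LtK = t\sqrt{2\sigma^2 K} \ge 2\sigma^2 K$). Substituting the bounds gives $t = R_X^2\sqrt{2K/m} + \tfrac{4}{3}R_X^2 K/m$. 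Once $m \ge \log(2d/\delta)$ we have $K/m \le \sqrt{K/m}$, so the linear term is dominated and $t = O\!\big(R_X^2\sqrt{(\log d + \log(1/\delta))/m}\big)$, which is the claimed bound for an appropriate universal constant $C$.

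The main obstacle is the variance computation, and specifically the PSD-ordering step $\mathbb{E}[\|x\|_2^2\, xx^T] \preceq R_X^2\Sigma$ together with the correct handling of the $-\Sigma^2$ term; the point to get right is that this term is PSD, so discarding it yields a valid upper bound in the operator norm rather than a lower one. The final two-regime split is routine, but it is worth stating explicitly that the sub-Gaussian $\sqrt{\cdot}$ term is the operative one only in the regime $m \gtrsim \log(2d/\delta)$, which is implicit in the form of the stated bound.
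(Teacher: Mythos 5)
Your proposal is correct and follows essentially the same route as the paper: Matrix Bernstein applied to the centered summands $x_ix_i^T-\Sigma$, with $L\le 2R_X^2$, the identity $\mathbb{E}[(xx^T-\Sigma)^2]=\mathbb{E}[\|x\|_2^2\,xx^T]-\Sigma^2$, and the same two-regime finish. The only (immaterial) difference is that you bound $\|\mathbb{E}[\|x\|_2^2\,xx^T]\|_{op}\le R_X^2\|\Sigma\|_{op}$ via the Loewner order, whereas the paper gets the same $R_X^4$ bound by Jensen's inequality through $\mathbb{E}[\|x\|_2^4]$.
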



\begin{proof}
We apply Lemma~\ref{lem:matrix_bernstein} (Matrix Bernstein Inequality), defining the random matrices as follows: 

For $i=1, \ldots, m$, let $Y_i = x_i x_i^T - \Sigma$. The random vectors $x_i$ are i.i.d., so the random matrices $Y_i$ are also independent and identically distributed. The mean of each $Y_i$ is zero:
\[ \mathbb{E}[Y_i] = \mathbb{E}[x_i x_i^T - \Sigma] = \mathbb{E}[x_i x_i^T] - \Sigma = \Sigma - \Sigma = \mathbf{0} \]
The quantity we wish to bound is the operator norm of the difference between the empirical and true covariance matrices, which we can write as:
\[ \|\hat{\Sigma} - \Sigma\|_{op} = \left\| \frac{1}{m} \sum_{i=1}^m x_i x_i^T - \Sigma \right\|_{op} = \left\| \frac{1}{m} \sum_{i=1}^m (x_i x_i^T - \Sigma) \right\|_{op} = \frac{1}{m} \left\| \sum_{i=1}^m Y_i \right\|_{op}. \]

Next, we establish bounds for the parameters $L$ and $\sigma^2$ for our matrices $Y_i$.

To bound the operator norm parameter $L$, note that by the triangle inequality:
\[ \|Y_i\|_{op} = \|x_i x_i^T - \Sigma\|_{op} \le \|x_i x_i^T\|_{op} + \|\Sigma\|_{op} \]
The first term is bounded by our assumption on $x_i$: $\|x_i x_i^T\|_{op} = \|x_i\|_2^2 \le R_X^2$. For the second term, we use Jensen's inequality for the convex operator norm function:
\[ \|\Sigma\|_{op} = \|\mathbb{E}[x x^T]\|_{op} \le \mathbb{E}[\|x x^T\|_{op}] = \mathbb{E}[\|x\|_2^2] \le R_X^2 \]
Therefore, we can set our parameter $L$ as:
\[ L \le R_X^2 + R_X^2 = 2R_X^2. \]

To bound the variance parameter $\sigma^2$, first note that since the $Y_i$ are i.i.d., $\sigma^2 = \left\| \sum_{i=1}^m \mathbb{E}[Y_i^2] \right\|_{op} = m \left\| \mathbb{E}[Y_1^2] \right\|_{op}$. We analyze $\mathbb{E}[Y_1^2]$:
\begin{align*}
\mathbb{E}[Y_1^2] &= \mathbb{E}[(x_1 x_1^T - \Sigma)^2] \\
&= \mathbb{E}[(x_1 x_1^T)^2] - \mathbb{E}[x_1 x_1^T]\Sigma - \Sigma\mathbb{E}[x_1 x_1^T] + \Sigma^2 \\
&= \mathbb{E}[\|x_1\|_2^2 x_1 x_1^T] - \Sigma^2
\end{align*}
where we used $(x_1 x_1^T)^2 = x_1(x_1^T x_1)x_1^T = \|x_1\|_2^2 (x_1 x_1^T)$. To bound its norm, we have:
\[ \|\mathbb{E}[Y_1^2]\|_{op} = \|\mathbb{E}[\|x_1\|_2^2 x_1 x_1^T] - \Sigma^2\|_{op} \le \|\mathbb{E}[\|x_1\|_2^2 x_1 x_1^T]\|_{op} \]
This last inequality follows because $\mathbb{E}[\|x_1\|_2^2 x_1 x_1^T] \succeq \Sigma^2$, and for positive semidefinite matrices $A \succeq B \succeq 0$, we have $\|A\|_{op} \ge \|B\|_{op}$. Using Jensen's inequality again:
\[ \|\mathbb{E}[\|x_1\|_2^2 x_1 x_1^T]\|_{op} \le \mathbb{E}[\|\|x_1\|_2^2 x_1 x_1^T\|_{op}] = \mathbb{E}[\|x_1\|_2^4] \le R_X^4 \]
Thus, the bound on the total variance parameter is:
\[ \sigma^2 = m \|\mathbb{E}[Y_1^2]\|_{op} \le m R_X^4. \]

Finally, to apply the Matrix Bernstein inequality, let $\epsilon = \|\hat{\Sigma} - \Sigma\|_{op}$. We want the probability of this event being large to be bounded by $\delta$. This holds if
\[ \mathbb{P}\left(\left\|\sum_{i=1}^m Y_i\right\|_{op} \ge m\epsilon \right) \le 2d \cdot \exp\left( \frac{-(m\epsilon)^2/2}{\sigma^2 + L(m\epsilon)/3} \right)= \delta. \]

Solving for $\epsilon$, we first rearrange the equality and then substitute our bounds $L \le 2R_X^2$ and $\sigma^2/m \le R_X^4$:
\begin{align*}
    \frac{2d}{\delta} &= \exp\left( \frac{m^2\epsilon^2/2}{\sigma^2 + Lm\epsilon/3}\right) \\
    \log\left(\frac{2d}{\delta}\right) &= \frac{m^2\epsilon^2/2}{\sigma^2 + Lm\epsilon/3} \\
    \varepsilon^2 &= \frac{2\log(2d/\delta)}{m} \left( \frac{\sigma^2}{m} + \frac{L\epsilon}{3} \right) \\
    &\le \frac{2\log(2d/\delta)}{m} \left( R_X^4 + \frac{2R_X^2\epsilon}{3} \right).
\end{align*}

An inequality of the form $\epsilon^2 \le A + B\epsilon$ for $A, B > 0$ implies that $\epsilon \le \sqrt{A} + B$. Applying this simplification yields:
\[ \epsilon \le \sqrt{\frac{2R_X^4 \log(2d/\delta)}{m}} + \frac{4R_X^2 \log(2d/\delta)}{3m} =  \sqrt{2} R_X^2 \sqrt{\frac{\log(2d/\delta)}{m}} + \frac{4R_X^2}{3} \frac{\log(2d/\delta)}{m}. \]
Let $u = \sqrt{\frac{\log d + \log(2/\delta)}{m}}$. The bound is of the form $c_1 R_X^2 u + c_2 R_X^2 u^2$. To unify these two terms by finding a single constant $C$ such that $c_1 R_X^2 u + c_2 R_X^2 u^2 \le C \cdot R_X^2 u$, we write:  
\[ \epsilon \le \left( \sqrt{2} + \frac{4}{3} \sqrt{\frac{\log(2d/\delta)}{m}} \right) R_X^2 \sqrt{\frac{\log(2d/\delta)}{m}} \]
If $m \ge \log(2d/\delta)$, then the term $\sqrt{\frac{\log(2d/\delta)}{m}} \le 1$, and we can bound the expression by $(\sqrt{2} + 4/3) R_X^2 \sqrt{\frac{\log(2d/\delta)}{m}}$. If $m < \log(2d/\delta)$, the bound is trivially satisfied by choosing a large enough $C$, since we know $\|\hat{\Sigma} - \Sigma\|_{op} \le \| \hat{\Sigma} \|_{op} + \| \Sigma \|_{op} \le 2R_X^2$, and the right-hand side of the theorem statement, $C R_X^2 \sqrt{\frac{\log(2d/\delta)}{m}}$, becomes larger than $2R_X^2$ for $C > 2$.

Therefore, we can always find a universal constant $C$ (e.g., $C=4$ is sufficient) that makes the following inequality hold for all $m, d, \delta$:
\[ \|\hat{\Sigma} - \Sigma\|_{op} \le C \cdot R_X^2 \sqrt{\frac{\log d + \log(2/\delta)}{m}} \]
\end{proof}

\begin{corollary}[Empirical Eigenvalue Bound]
\label{cor:empirical_eigenvalue_bound}
Under the conditions of Theorem~\ref{thm:matrix_concentration}, if the true covariance matrix $\Sigma$ has minimum eigenvalue $\lambda_{\min}(\Sigma) > 0$, and the number of samples $m$ satisfies
\[ m > \frac{4 C^2 R_X^4 (\log d + \log(1/\delta))}{\lambda_{\min}(\Sigma)^2} \]
then with probability at least $1-\delta$, the minimum eigenvalue of the empirical covariance matrix $\hat{\Sigma}$ is bounded as:
\[ \lambda_{\min}(\hat{\Sigma}) > \frac{\lambda_{\min}(\Sigma)}{2}. \]
\end{corollary}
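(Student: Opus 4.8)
The plan is to combine the operator-norm concentration bound of Theorem~\ref{thm:matrix_concentration} with a standard eigenvalue-perturbation (Weyl) inequality, and then show that the sample-size hypothesis forces the perturbation to be strictly smaller than half of $\lambda_{\min}(\Sigma)$. First I would invoke Theorem~\ref{thm:matrix_concentration}: with probability at least $1-\delta$ the event
\[ \mathcal{E} := \left\{ \|\hat{\Sigma} - \Sigma\|_{op} \le C R_X^2 \sqrt{\frac{\log d + \log(1/\delta)}{m}} \right\} \]
holds, and I would condition on $\mathcal{E}$ for the remainder of the argument.

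The key step is eigenvalue stability. Since both $\Sigma$ and $\hat{\Sigma}$ are real symmetric matrices, Weyl's inequality gives $|\lambda_{\min}(\hat{\Sigma}) - \lambda_{\min}(\Sigma)| \le \|\hat{\Sigma} - \Sigma\|_{op}$, and in particular
\[ \lambda_{\min}(\hat{\Sigma}) \ge \lambda_{\min}(\Sigma) - \|\hat{\Sigma} - \Sigma\|_{op} \ge \lambda_{\min}(\Sigma) - C R_X^2 \sqrt{\frac{\log d + \log(1/\delta)}{m}}, \]
where the final inequality uses the event $\mathcal{E}$. I would either cite Weyl's inequality as standard or justify it in one line from the variational characterization $\lambda_{\min}(A) = \min_{\|u\|_2 = 1} u^T A u$ together with $|u^T(\hat{\Sigma} - \Sigma)u| \le \|\hat{\Sigma} - \Sigma\|_{op}$.

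Finally I would show the perturbation term is strictly below $\lambda_{\min}(\Sigma)/2$. The hypothesis $m > 4 C^2 R_X^4 (\log d + \log(1/\delta))/\lambda_{\min}(\Sigma)^2$ rearranges—by cross-multiplying, dividing, and taking square roots, all valid since every quantity is positive—to exactly $C R_X^2 \sqrt{(\log d + \log(1/\delta))/m} < \lambda_{\min}(\Sigma)/2$. Substituting this into the displayed lower bound gives $\lambda_{\min}(\hat{\Sigma}) > \lambda_{\min}(\Sigma) - \lambda_{\min}(\Sigma)/2 = \lambda_{\min}(\Sigma)/2$, as claimed; the factor of $4$ in the sample-complexity condition is precisely what cancels the $1/2$ after squaring. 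I do not anticipate a genuine obstacle here: the result is an essentially immediate corollary once the operator-norm bound is in hand, with the only ingredient beyond Theorem~\ref{thm:matrix_concentration} being the Lipschitz stability of the minimum eigenvalue under symmetric perturbations.
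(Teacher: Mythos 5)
Your proposal is correct and follows essentially the same route as the paper's own proof: invoke the operator-norm concentration bound of Theorem~\ref{thm:matrix_concentration}, apply Weyl's inequality to get $\lambda_{\min}(\hat{\Sigma}) \ge \lambda_{\min}(\Sigma) - \|\hat{\Sigma}-\Sigma\|_{op}$, and observe that the sample-size hypothesis is exactly the rearrangement of the condition that the perturbation be strictly less than $\lambda_{\min}(\Sigma)/2$. Your additional remarks (conditioning on the good event, the one-line variational justification of Weyl, and the careful handling of the strict inequality) are all consistent with, and slightly more explicit than, the paper's argument.
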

\begin{proof}
By Weyl's inequality, the eigenvalues of two symmetric matrices $A$ and $B$ are related by $|\lambda_i(A) - \lambda_i(B)| \le \|A-B\|_{op}$. Applying this to our case:
\[ |\lambda_{\min}(\hat{\Sigma}) - \lambda_{\min}(\Sigma)| \le \|\hat{\Sigma} - \Sigma\|_{op} \]
This implies $\lambda_{\min}(\hat{\Sigma}) \ge \lambda_{\min}(\Sigma) - \|\hat{\Sigma} - \Sigma\|_{op}$.
From Theorem~\ref{thm:matrix_concentration}, with probability at least $1-\delta$, we have $\|\hat{\Sigma} - \Sigma\|_{op} \le C \cdot R_X^2 \sqrt{\frac{d + \log(1/\delta)}{m}}$.
If we choose $m$ such that the right-hand side is less than $\lambda_{\min}(\Sigma)/2$, the result follows. The condition on $m$ in the corollary statement ensures this.
\[ C \cdot R_X^2 \sqrt{\frac{\log d + \log(1/\delta)}{m}} < \frac{\lambda_{\min}(\Sigma)}{2} \implies m > \frac{4 C^2 R_X^4 (\log d + \log(1/\delta))}{\lambda_{\min}(\Sigma)^2}. \]
\end{proof}

\subsection{Bounding Predictor Norms via Covariance}

A key challenge in applying the generalization bound in Theorem~\ref{thm:gen_bound_linear} is to establish a bound $\Lambda_1$ on the $L_1$ norm of the learned coefficient vectors $\beta_k$. We can derive such a bound by making a standard assumption on the data distribution's covariance matrix, without altering the unregularized least-squares procedure at each step. This preserves the multiaccuracy properties central to our main results.

The assumption that the covariance matrix $\Sigma = \mathbb{E}[xx^T]$ has a positive minimum eigenvalue $\lambda_{\min}(\Sigma) > 0$ is reasonable in many practical settings where features are not perfectly collinear and the data distribution is well-conditioned. This assumption is weaker than requiring feature independence and is commonly made in theoretical analyses of linear regression.

We first establish these bounds in the population setting, where predictors are learned by minimizing the true risk $R(f)$.

\begin{lemma}[Predictor Norm Bound]
\label{lem:predictor_norm_bound}
For any agent $A_k$, its predictor $\hat{y}_k$ is the orthogonal projection of the true label $y$ onto the linear subspace spanned by its input features. As a result, the $L_2$ norm of the predictor is bounded by the $L_2$ norm of the label:
\[ \|\hat{y}_k\|_{L_2} \le \|y\|_{L_2} \]
where the $L_2$ norm is defined as $\|f\|_{L_2} = \sqrt{\mathbb{E}[f(x)^2]}$.
\end{lemma}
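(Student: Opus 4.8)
The plan is to exploit the fact, already recorded in the paper, that $\hat{y}_k$ is the orthogonal projection of $y$ onto the linear span of its inputs. In the linear case this is precisely the self-orthogonality established in Corollary~\ref{cor:orthogonal}, and in the general case it is guaranteed by the construction in Definition~\ref{prop:greedy_orthogonal_regression}, where by design the residual $y - \hat{y}_k$ is orthogonal to every element of $\mathrm{span}(\mathcal{F}_k)$ and in particular to $\hat{y}_k$ itself (since $\hat{y}_k$ lies in that span). The single property I will need is therefore the self-orthogonality relation $\mathbb{E}[\hat{y}_k(\hat{y}_k - y)] = 0$.

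First I would record the Pythagorean decomposition that follows immediately from self-orthogonality. Writing $y = \hat{y}_k + (y - \hat{y}_k)$ and expanding the square, the cross term $2\mathbb{E}[\hat{y}_k(y - \hat{y}_k)]$ vanishes by self-orthogonality, leaving
\[ \mathbb{E}[y^2] = \mathbb{E}[\hat{y}_k^2] + \mathbb{E}[(y - \hat{y}_k)^2], \]
that is, $\|y\|_{L_2}^2 = \|\hat{y}_k\|_{L_2}^2 + \|y - \hat{y}_k\|_{L_2}^2$. Since the final term is a squared $L_2$ norm and hence nonnegative, I conclude $\|\hat{y}_k\|_{L_2}^2 \le \|y\|_{L_2}^2$, and taking square roots gives the claim.

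An equivalent route, should one prefer to bypass the full Pythagorean identity, is to combine self-orthogonality with Cauchy--Schwarz directly: self-orthogonality gives $\mathbb{E}[\hat{y}_k^2] = \mathbb{E}[\hat{y}_k y]$, and Cauchy--Schwarz bounds the right-hand side by $\|\hat{y}_k\|_{L_2}\|y\|_{L_2}$; dividing through by $\|\hat{y}_k\|_{L_2}$ (the degenerate case $\hat{y}_k \equiv 0$ being trivial) yields the same inequality. I expect essentially no obstacle here, as this is the standard fact that an orthogonal projection is norm non-expansive; the only point that merits an explicit line is the justification that the residual is orthogonal to $\hat{y}_k$, which is exactly the self-orthogonality already proved earlier in the paper.
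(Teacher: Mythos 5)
Your proposal is correct and takes essentially the same route as the paper: the paper simply invokes the non-expansiveness of orthogonal projections in the Hilbert space of square-integrable random variables, and your Pythagorean argument via self-orthogonality is just the standard proof of that fact written out explicitly.
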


\begin{proof}
At step $k$, the predictor $\hat{y}_k$ is the solution to the least-squares problem, minimizing $\mathbb{E}[(f - y)^2]$ over all functions $f$ in the linear span of agent $A_k$'s inputs (i.e., $x_{S_k}$ and $\hat{y}_{k-1}$). The solution to this problem is the orthogonal projection of $y$ onto this subspace. A fundamental property of orthogonal projections in a Hilbert space (like the space of random variables with finite second moments) is that they are non-expansive. Therefore, the norm of the projection $\hat{y}_k$ cannot exceed the norm of the original vector $y$.
\end{proof}

\begin{lemma}[Coefficient Vector Norm Bound]
\label{lem:coeff_norm_bound}
Let $\hat{y}_k = \beta_k^T x$ be the linear representation of the $k$-th predictor. Let $\Sigma = \mathbb{E}[xx^T]$ be the covariance matrix of the full feature vector $x$, and assume its minimum eigenvalue $\lambda_{\min}(\Sigma)$ is strictly positive. Then, the $L_2$ norm of the coefficient vector $\beta_k$ is bounded:
\[ \|\beta_k\|_2 \le \frac{\|y\|_{L_2}}{\sqrt{\lambda_{\min}(\Sigma)}} \]
\end{lemma}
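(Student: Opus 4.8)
The plan is to translate the bound on the predictor's $L_2$ norm (established in the immediately preceding Lemma~\ref{lem:predictor_norm_bound}) into a bound on the coefficient vector via the conditioning of the covariance matrix. The key observation is that, because $\hat{y}_k = \beta_k^T x$ is linear, its squared $L_2$ norm is exactly a quadratic form in $\beta_k$ governed by $\Sigma$:
\[ \|\hat{y}_k\|_{L_2}^2 = \mathbb{E}[(\beta_k^T x)^2] = \beta_k^T \mathbb{E}[x x^T] \beta_k = \beta_k^T \Sigma \beta_k. \]
This identity is what connects the ``output side'' quantity $\|\hat{y}_k\|_{L_2}$, which we already control, to the ``parameter side'' quantity $\|\beta_k\|_2$, which we want to bound.

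Next I would lower-bound this quadratic form using the spectral assumption. Since $\Sigma$ is symmetric positive semidefinite with minimum eigenvalue $\lambda_{\min}(\Sigma)$, the Rayleigh-quotient (variational) characterization of eigenvalues gives $\beta_k^T \Sigma \beta_k \ge \lambda_{\min}(\Sigma)\,\|\beta_k\|_2^2$ for every $\beta_k$. Chaining this with the identity above yields
\[ \lambda_{\min}(\Sigma)\,\|\beta_k\|_2^2 \le \beta_k^T \Sigma \beta_k = \|\hat{y}_k\|_{L_2}^2. \]

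Finally I would invoke Lemma~\ref{lem:predictor_norm_bound}, which states $\|\hat{y}_k\|_{L_2} \le \|y\|_{L_2}$ because each predictor is an orthogonal projection of $y$ and projections are non-expansive. Substituting $\|\hat{y}_k\|_{L_2}^2 \le \|y\|_{L_2}^2$ into the previous inequality, dividing by $\lambda_{\min}(\Sigma) > 0$, and taking square roots gives the claimed bound $\|\beta_k\|_2 \le \|y\|_{L_2}/\sqrt{\lambda_{\min}(\Sigma)}$. There is no real obstacle here: the argument is a short chain combining one algebraic identity, the eigenvalue lower bound, and an already-proven norm bound. The only point requiring a moment of care is ensuring that the $\beta_k$ appearing in the statement is genuinely the global linear representation of $\hat{y}_k$ over all $d$ features (guaranteed by Lemma~\ref{lem:linearity_of_predictors}), so that the covariance $\Sigma = \mathbb{E}[xx^T]$ and hence $\lambda_{\min}(\Sigma)$ are the right objects to use; once that is noted, the rest is immediate.
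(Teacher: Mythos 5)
Your proposal is correct and follows essentially the same route as the paper's proof: the identity $\|\hat{y}_k\|_{L_2}^2 = \beta_k^T \Sigma \beta_k$, the eigenvalue lower bound $\beta_k^T \Sigma \beta_k \ge \lambda_{\min}(\Sigma)\|\beta_k\|_2^2$, and the non-expansiveness bound from Lemma~\ref{lem:predictor_norm_bound}. Your added remark that $\beta_k$ must be the global $d$-dimensional representation from Lemma~\ref{lem:linearity_of_predictors} is a sensible precaution that the paper leaves implicit.
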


\begin{proof}
The squared $L_2$ norm of the predictor function is:
\[ \|\hat{y}_k\|_{L_2}^2 = \mathbb{E}[(\beta_k^T x)^2] = \mathbb{E}[\beta_k^T x x^T \beta_k] = \beta_k^T \mathbb{E}[x x^T] \beta_k = \beta_k^T \Sigma \beta_k \]
By the definition of the minimum eigenvalue of a symmetric matrix, for any vector $\beta_k$, we have $\beta_k^T \Sigma \beta_k \ge \lambda_{\min}(\Sigma) \|\beta_k\|_2^2$.
Combining this with the result from Lemma~\ref{lem:predictor_norm_bound}:
\[ \lambda_{\min}(\Sigma) \|\beta_k\|_2^2 \le \beta_k^T \Sigma \beta_k = \|\hat{y}_k\|_{L_2}^2 \le \|y\|_{L_2}^2 \]
Rearranging the terms gives the desired result:
\[ \|\beta_k\|_2^2 \le \frac{\|y\|_{L_2}^2}{\lambda_{\min}(\Sigma)} \implies \|\beta_k\|_2 \le \frac{\|y\|_{L_2}}{\sqrt{\lambda_{\min}(\Sigma)}} \]
\end{proof}

This provides a bound on the $L_2$ norm of the coefficients. We can convert this to the needed $L_1$ norm bound using the standard inequality $\|\beta\|_1 \le \sqrt{d} \|\beta\|_2$.

\subsection{Bounding the Norm of Empirically Learned Predictors}

We now combine the population-level norm bounds with the matrix concentration results to bound the norm of the coefficient vectors $\beta_k^{emp}$ that are actually learned from a finite data sample $D_m$.

\begin{lemma}[Bound on Empirically Learned Coefficients]
\label{lem:empirical_coeff_bound}
Let $\hat{y}_k^{emp} = (\beta_k^{emp})^T x$ be the predictor learned by agent $A_k$ from a training set $D_m$ of size $m$. Assume $|y| \le Y_{\max}$ and that the conditions for Corollary~\ref{cor:empirical_eigenvalue_bound} hold. Then, with probability at least $1-\delta$ over the draw of $D_m$, the $L_2$ norm of the learned coefficient vector is bounded:
\[ \|\beta_k^{emp}\|_2 \le \frac{Y_{\max}}{\sqrt{\lambda_{\min}(\Sigma)/2}} \]
Consequently, the $L_1$ norm is bounded by:
\[ \|\beta_k^{emp}\|_1 \le \frac{\sqrt{2d} \cdot Y_{\max}}{\sqrt{\lambda_{\min}(\Sigma)}} \]
\end{lemma}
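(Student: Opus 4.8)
The plan is to mirror the population-level argument of Lemmas~\ref{lem:predictor_norm_bound} and~\ref{lem:coeff_norm_bound}, but carried out in the \emph{empirical} Hilbert space of functions equipped with the inner product $\langle f, g \rangle_{\hat{L}_2} = \frac{1}{m}\sum_{j=1}^m f(x^{(j)})g(x^{(j)})$, and then to swap the population covariance for the empirical one using the eigenvalue transfer of Corollary~\ref{cor:empirical_eigenvalue_bound}. The point is that every structural fact used in the population proof (that the predictor is a non-expansive projection of the label, and that the squared predictor norm is a quadratic form in the covariance) holds verbatim in the empirical inner product; the only new ingredient needed is a high-probability lower bound on $\lambda_{\min}(\hat{\Sigma})$.

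First I would establish the empirical analogue of Lemma~\ref{lem:predictor_norm_bound}. By definition $\hat{y}_k^{emp}$ minimizes the empirical risk $\hat{R}_{D_m}(f) = \frac{1}{m}\sum_{j=1}^m (f(x^{(j)}) - y^{(j)})^2$ over all $f$ in the linear span of agent $A_k$'s inputs (its local features together with the already-learned parent predictions, each of which is linear in $x$ by the empirical version of Lemma~\ref{lem:linearity_of_predictors}). This minimizer is exactly the orthogonal projection of the label onto that span in the empirical Hilbert space, and since orthogonal projections are non-expansive, $\|\hat{y}_k^{emp}\|_{\hat{L}_2} \le \|y\|_{\hat{L}_2}$. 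Using $|y| \le Y_{\max}$ gives $\|y\|_{\hat{L}_2}^2 = \frac{1}{m}\sum_j (y^{(j)})^2 \le Y_{\max}^2$, so $\|\hat{y}_k^{emp}\|_{\hat{L}_2} \le Y_{\max}$.

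Next I would rewrite this empirical norm as a quadratic form in the empirical covariance. Writing $\hat{y}_k^{emp} = (\beta_k^{emp})^T x$ and $\hat{\Sigma} = \frac{1}{m}\sum_j x^{(j)}(x^{(j)})^T$, we have
\[ \|\hat{y}_k^{emp}\|_{\hat{L}_2}^2 = \frac{1}{m}\sum_{j=1}^m \big((\beta_k^{emp})^T x^{(j)}\big)^2 = (\beta_k^{emp})^T \hat{\Sigma}\, \beta_k^{emp} \ge \lambda_{\min}(\hat{\Sigma})\,\|\beta_k^{emp}\|_2^2, \]
so combining with the previous step yields $\lambda_{\min}(\hat{\Sigma})\,\|\beta_k^{emp}\|_2^2 \le Y_{\max}^2$.

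Finally I would invoke the eigenvalue transfer. By Corollary~\ref{cor:empirical_eigenvalue_bound}, under the stated sample-size condition, with probability at least $1-\delta$ we have $\lambda_{\min}(\hat{\Sigma}) > \lambda_{\min}(\Sigma)/2$; substituting gives $\|\beta_k^{emp}\|_2^2 \le 2Y_{\max}^2/\lambda_{\min}(\Sigma)$, i.e.\ $\|\beta_k^{emp}\|_2 \le Y_{\max}/\sqrt{\lambda_{\min}(\Sigma)/2}$, the first claimed bound. The $L_1$ bound then follows from $\|\beta\|_1 \le \sqrt{d}\,\|\beta\|_2$. The main obstacle — really the only nontrivial point — is the reliance on the \emph{empirical} inner product: one must be careful that the relevant quadratic form involves $\hat{\Sigma}$ rather than $\Sigma$, which is exactly why Corollary~\ref{cor:empirical_eigenvalue_bound} is the load-bearing step, converting an empirical eigenvalue we cannot bound directly into the population quantity we have assumed to be positive.
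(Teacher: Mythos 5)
Your proposal is correct and follows essentially the same route as the paper's own proof: bound the empirical $\hat{L}_2$ norm of the predictor by $Y_{\max}$ via the non-expansiveness of the empirical orthogonal projection, express that norm as the quadratic form $(\beta_k^{emp})^T \hat{\Sigma}\, \beta_k^{emp}$, and transfer from $\lambda_{\min}(\hat{\Sigma})$ to $\lambda_{\min}(\Sigma)/2$ using Corollary~\ref{cor:empirical_eigenvalue_bound} before applying $\|\beta\|_1 \le \sqrt{d}\,\|\beta\|_2$. No gaps.
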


\begin{proof}
The proof mirrors that of Lemma~\ref{lem:coeff_norm_bound}, but uses the empirical (in-sample) norms and the empirical covariance matrix $\hat{\Sigma}$.
The predictor $\hat{y}_k^{emp}$ is the orthogonal projection of the vector of labels $(y^{(1)}, \ldots, y^{(m)})$ onto the subspace spanned by the corresponding input feature vectors. Therefore, its empirical squared norm is bounded by the empirical squared norm of the labels:
\[ \frac{1}{m}\sum_{j=1}^m (\hat{y}_k^{emp}(x^{(j)}))^2 \le \frac{1}{m}\sum_{j=1}^m (y^{(j)})^2 \]
Given the assumption $|y| \le Y_{\max}$, the right-hand side is bounded by $Y_{\max}^2$.
The empirical squared norm of the predictor function can be written in terms of the empirical covariance matrix:
\[ \frac{1}{m}\sum_{j=1}^m ((\beta_k^{emp})^T x^{(j)})^2 = (\beta_k^{emp})^T \left(\frac{1}{m}\sum_{j=1}^m x^{(j)}(x^{(j)})^T\right) \beta_k^{emp} = (\beta_k^{emp})^T \hat{\Sigma} \beta_k^{emp} \]
From Corollary~\ref{cor:empirical_eigenvalue_bound}, we know that with probability at least $1-\delta$, $\lambda_{\min}(\hat{\Sigma}) > \lambda_{\min}(\Sigma)/2$.
Putting these pieces together:
\[ (\lambda_{\min}(\Sigma)/2) \|\beta_k^{emp}\|_2^2 < \lambda_{\min}(\hat{\Sigma}) \|\beta_k^{emp}\|_2^2 \le (\beta_k^{emp})^T \hat{\Sigma} \beta_k^{emp} \le Y_{\max}^2 \]
Rearranging gives the $L_2$ norm bound:
\[ \|\beta_k^{emp}\|_2 \le \frac{Y_{\max}}{\sqrt{\lambda_{\min}(\Sigma)/2}} = \frac{\sqrt{2} Y_{\max}}{\sqrt{\lambda_{\min}(\Sigma)}} \]
The $L_1$ norm bound follows from the standard inequality $\|\beta\|_1 \le \sqrt{d} \|\beta\|_2$:
\[ \|\beta_k^{emp}\|_1 \le \sqrt{d} \cdot \frac{\sqrt{2} Y_{\max}}{\sqrt{\lambda_{\min}(\Sigma)}} = \frac{\sqrt{2d} \cdot Y_{\max}}{\sqrt{\lambda_{\min}(\Sigma)}}. \]
\end{proof}

This lemma provides the explicit bound $\Lambda_1 = \frac{\sqrt{2d} \cdot Y_{\max}}{\sqrt{\lambda_{\min}(\Sigma)}}$ that we can substitute into Theorem~\ref{thm:gen_bound_linear}.

\subsection{Combined Performance Guarantee with Finite Samples}

We are now ready to state our main result, which combines the multi-agent performance guarantee from Theorem~\ref{thm:overall_guarantee_random_model} with the generalization bounds developed in this section. The following theorem shows that with high probability, the out-of-sample (true) error of the empirically trained predictors is close to the error of the optimal linear predictor, provided the path length in the DAG and the size of the training set are sufficiently large.

\begin{theorem}[Overall Performance Guarantee with Finite Samples in a DAG]
\label{thm:overall_guarantee_finite_sample}
Let all assumptions from Theorem~\ref{thm:overall_guarantee_random_model} hold. Further, assume the data distribution satisfies the assumptions for generalization (bounded features and labels, $\lambda_{\min}(\Sigma) > 0$). Let each agent train on a dataset $D_m$ of size $m$, where $m$ is large enough to satisfy the condition in Corollary~\ref{cor:empirical_eigenvalue_bound} for a given confidence $\delta_m$.

Let $\delta_L$ be the confidence for the random feature model and $\delta_m$ be the confidence for the sampling bounds.

Then for any target error $\eta > 0$, there exists a required path length $N_0$ (as defined in Theorem~\ref{thm:overall_guarantee_random_model}) such that for any path $P$ in the DAG of length at least $N_0$, the following holds with probability at least $1 - \delta_L - 2\delta_m$ (by a union bound). There exists an agent $A_{j^*}$ on the path $P$ (with $j^*$ being an index along the path, $j^* \le N_0$) such that for all subsequent agents $A_k$ on that same path, the true error of the empirically learned predictor $\hat{y}_k^{emp}$ is bounded by:
\[ \MSE{\hat{y}_k^{emp}} \le \MSE{g^*} + \eta + 2 \cdot \epsilon_{gen}(m, d, \delta_m) \]
where $g^*$ is the optimal linear predictor, and $\epsilon_{gen}$ is the one-sided generalization error from Theorem~\ref{thm:gen_bound_linear}, which is on the order of:
\[ \epsilon_{gen}(m, d, \delta_m) \approx O\left( \frac{\sqrt{d}X_{\max}Y_{\max}}{\sqrt{\lambda_{\min}(\Sigma)}} \sqrt{\frac{\log d}{m}} \right). \]
\end{theorem}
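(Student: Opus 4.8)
The plan is to reduce the finite-sample statement to the distributional result of Theorem~\ref{thm:overall_guarantee_random_model} by proving an \emph{in-sample} version of that guarantee and then paying for the passage between empirical and true risk with two applications of the uniform convergence bound of Theorem~\ref{thm:gen_bound_linear}. The key observation enabling this is that each empirically learned predictor $\hat{y}_k^{emp}$ is itself the solution of an empirical least-squares problem at its agent, and hence satisfies the empirical analogues of the orthogonality conditions (multiaccuracy with respect to its features and parents, and self-orthogonality) that drove the entire analysis of Section~\ref{sec:linear}.

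First I would control the norms of the learned predictors. By Lemma~\ref{lem:linearity_of_predictors} every $\hat{y}_k^{emp}$ is a linear function $(\beta_k^{emp})^T x$ of the full feature vector, and by Lemma~\ref{lem:empirical_coeff_bound}, with probability at least $1-\delta_m$ over the draw of $D_m$, each coefficient vector satisfies $\|\beta_k^{emp}\|_1 \le \Lambda_1 := \frac{\sqrt{2d}\,Y_{\max}}{\sqrt{\lambda_{\min}(\Sigma)}}$. Thus all empirical predictors, as well as the benchmark $g^*$ (which has $\|\beta^*\|_1 \le A_{g^*}$), lie in the class $\mathcal{H}_{\Lambda_1}$ to which Theorem~\ref{thm:gen_bound_linear} applies. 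Next I would re-run the chain of structural results of Section~\ref{sec:linear} verbatim, replacing every population expectation $\E[\cdot]$ by the empirical average $\hat{\E}[\cdot]$. Because the first-order optimality conditions of empirical least squares give exactly empirical multiaccuracy with respect to an agent's features and its parents' predictions, and exact empirical self-orthogonality, the empirical versions of Lemma~\ref{lem:mse_decomposition}, Lemma~\ref{lem:stability}, the monotonicity lemma, and Theorem~\ref{thm:small_improvement_path} all hold as stated, since these are algebraic identities in the relevant linear functional. As $\|x\|_\infty \le X_{\max}$, the empirical second moments obey $\hat{\E}[x_l^2] \le X_{\max}^2$ deterministically, so $M_X = X_{\max}$ may be used throughout. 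The random-feature coverage statement of Proposition~\ref{prop:feature_coverage} is a property of the feature allocation only and is independent of the sample, so it continues to hold with probability $1-\delta_L$. Combining these exactly as in Theorem~\ref{thm:overall_guarantee_random_model} yields, with probability at least $1-\delta_L$, an agent $A_{j^*}$ on the path such that for every subsequent agent $A_k$,
\[ \hat{R}_{D_m}(\hat{y}_k^{emp}) \le \hat{R}_{D_m}(g^*) + \eta. \]

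Finally I would convert this in-sample guarantee into the claimed out-of-sample one by sandwiching with Theorem~\ref{thm:gen_bound_linear}. On the event that the uniform bound holds (probability $\ge 1-\delta_m$), for any $f \in \mathcal{H}_{\Lambda_1}$ we have $|\MSE{f} - \hat{R}_{D_m}(f)| \le \epsilon_{gen}(m,d,\delta_m)$; applying this once to $f = \hat{y}_k^{emp}$ and once to $f = g^*$ gives
\[ \MSE{\hat{y}_k^{emp}} \le \hat{R}_{D_m}(\hat{y}_k^{emp}) + \epsilon_{gen} \le \hat{R}_{D_m}(g^*) + \eta + \epsilon_{gen} \le \MSE{g^*} + \eta + 2\epsilon_{gen}. \]
A union bound over the feature-allocation event ($\delta_L$), the norm-control event ($\delta_m$), and the uniform-convergence event ($\delta_m$) gives total failure probability $\delta_L + 2\delta_m$, as claimed, and substituting $\Lambda_1$ into the leading term of Theorem~\ref{thm:gen_bound_linear} produces the stated order of $\epsilon_{gen}$. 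The main obstacle is the second step: one must verify carefully that every link in the Section~\ref{sec:linear} argument is an identity in the relevant (now empirical) inner product and therefore transfers without loss to the in-sample setting, and in particular that the benchmark $g^*$ remains an admissible comparator for the stable subsequence once coverage holds. The remaining steps are routine norm bookkeeping and a union bound.
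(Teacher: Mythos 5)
Your proof is correct, but it takes a genuinely different route from the paper's. The paper keeps the information-aggregation argument entirely at the population level: it invokes Theorem~\ref{thm:overall_guarantee_random_model} for the population predictors $\hat{y}_k$ and pays the two generalization penalties to move between $\hat{y}_k^{emp}$ and $\hat{y}_k$ at the \emph{same} agent, via the chain $R(\hat{y}_k^{emp}) \le \hat{R}(\hat{y}_k^{emp}) + \epsilon_{gen} \le \hat{R}(\hat{y}_k) + \epsilon_{gen} \le R(\hat{y}_k) + 2\epsilon_{gen}$, where the middle inequality uses that $\hat{y}_k^{emp}$ is an empirical risk minimizer. You instead re-prove the aggregation argument in the empirical inner product (which is legitimate: Lemma~\ref{lem:mse_decomposition} is an algebraic identity valid for any expectation operator, and the first-order conditions of empirical least squares give exact empirical multiaccuracy and self-orthogonality), obtaining the in-sample guarantee $\hat{R}_{D_m}(\hat{y}_k^{emp}) \le \hat{R}_{D_m}(g^*) + \eta$, and then pay the two generalization penalties at the \emph{endpoints} $\hat{y}_k^{emp}$ and $g^*$. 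Both routes land on the same $2\epsilon_{gen}$ slack and the same union bound over the three events. Your route is longer to verify but arguably more robust: the paper's middle inequality $\hat{R}(\hat{y}_k^{emp}) \le \hat{R}(\hat{y}_k)$ implicitly requires the population predictor $\hat{y}_k$ to lie in the span of features the empirical agent actually optimizes over, and those features include the parents' \emph{empirical} predictions rather than their population ones, so the containment is not automatic; your in-sample argument never needs it. Two small points to tidy in your version: (i) applying Theorem~\ref{thm:gen_bound_linear} to $g^*$ requires $A_{g^*} \le \Lambda_1$ (or a separate single-function concentration bound for the fixed predictor $g^*$, which is cheap), and (ii) the pigeonhole constant changes from $\MSE{\hat{y}_0}$ to its empirical counterpart, which is bounded by $Y_{\max}^2$ and so affects $N_0$ only by constants.
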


\begin{proof}
Let $R(f)$ denote the true risk (MSE) and $\hat{R}(f)$ denote the empirical risk. The proof proceeds by bounding the true risk of the empirically learned predictor, $R(\hat{y}_k^{emp})$, through a series of steps that connect it to the true risk of the optimal linear predictor, $R(g^*)$.

The argument relies on three key probabilistic events holding simultaneously:
\begin{enumerate}
    \item \textbf{Event $\mathcal{E}_L$ (Feature Coverage):} The random feature allocation model succeeds for a given path $P$, as per Theorem~\ref{thm:overall_guarantee_random_model}. This happens with probability at least $1-\delta_L$.
    \item \textbf{Event $\mathcal{E}_C$ (Covariance Concentration):} The empirical covariance matrix is close to the true one, satisfying Corollary~\ref{cor:empirical_eigenvalue_bound}. This ensures that the norm of the empirically learned coefficients $\beta_k^{emp}$ is bounded (Lemma~\ref{lem:empirical_coeff_bound}). This happens with probability at least $1-\delta_m$.
    \item \textbf{Event $\mathcal{E}_G$ (Uniform Generalization):} The generalization bound from Theorem~\ref{thm:gen_bound_linear} holds uniformly for all predictors in the hypothesis class $\mathcal{H}_{\Lambda_1}$, where $\Lambda_1$ is the norm bound from Lemma~\ref{lem:empirical_coeff_bound}. This happens with probability at least $1-\delta_m$.
\end{enumerate}
By a union bound, all three events occur with probability at least $1 - \delta_L - 2\delta_m$. We now proceed assuming all three events hold.

Let $\epsilon_{gen}$ be the one-sided generalization error bound from Theorem~\ref{thm:gen_bound_linear}:
\[ \epsilon_{gen} = C \cdot \Lambda_1 X_{\max} \sqrt{\frac{\log d}{m}} + (\Lambda_1 X_{\max} + Y_{\max})^2 \sqrt{\frac{\log(C'/\delta_m)}{2m}} \]
where $\Lambda_1 = \frac{\sqrt{2d} \cdot Y_{\max}}{\sqrt{\lambda_{\min}(\Sigma)}}$ is the high-probability bound on $\|\beta_k^{emp}\|_1$ established in Lemma~\ref{lem:empirical_coeff_bound} (which holds under event $\mathcal{E}_C$).

We can now bound the true risk of the empirical predictor $R(\hat{y}_k^{emp})$ for an agent $A_k$ on path $P$:
\begin{align*}
    R(\hat{y}_k^{emp}) &\le \hat{R}(\hat{y}_k^{emp}) + \epsilon_{gen} \tag{By uniform generalization, event $\mathcal{E}_G$} \\
    &\le \hat{R}(\hat{y}_k) + \epsilon_{gen} \tag{By definition of $\hat{y}_k^{emp}$ as the empirical risk minimizer} \\
    &\le R(\hat{y}_k) + 2\epsilon_{gen} \tag{By uniform generalization again, event $\mathcal{E}_G$} \\
\end{align*}
This chain of inequalities shows that the true risk of the empirically learned predictor is close to the true risk of its population counterpart, $\hat{y}_k$.

Next, we apply our main in-distribution result. Under event $\mathcal{E}_L$, Theorem~\ref{thm:overall_guarantee_random_model} guarantees that for our path $P$ (with length at least $N_0$), there exists an agent $A_{j^*}$ on the path such that for all subsequent agents $A_k$ on $P$:
\[ R(\hat{y}_k) \le R(g^*) + \eta \]

Combining this with our previous inequality, we get:
\[ R(\hat{y}_k^{emp}) \le (R(g^*) + \eta) + 2\epsilon_{gen} = \MSE{g^*} + \eta + 2\epsilon_{gen} \]
This result holds for any such agent $A_k$ on path $P$ with probability at least $1 - \delta_L - 2\delta_m$, which completes the proof.
\end{proof}

\section{Generalization Bounds for Non-Linear Models}
\label{sec:nonlinear_generalization}

In this section, we develop generalization bounds for the greedy orthogonal regression algorithm introduced in Section~\ref{sec:general_classes}. Our goal is to bridge the gap between the in-population performance guarantees derived previously and the performance of predictors learned from a finite training sample. We will show that, under standard assumptions on the complexity of the base function classes and the data distribution, the true error of an empirically trained predictor is close to its empirical error, allowing us to extend our main theorems to the finite-sample regime.

\subsection{Empirical Learning Process}

We first define the empirical version of the learning process. Let $D_m = \{(x^{(j)}, y^{(j)})\}_{j=1}^m$ be a training set of $m$ i.i.d. samples drawn from the true data distribution $\mathcal{D}$. All expectations $\mathbb{E}[\cdot]$ in the algorithm are replaced with empirical averages over $D_m$, denoted by $\hat{\mathbb{E}}[\cdot] = \frac{1}{m}\sum_{j=1}^m [\cdot]$.

The empirical version of the Greedy Orthogonal Regression algorithm for an agent $A_k$ proceeds as follows:

\begin{enumerate}
    \item \textbf{Initialization:} The initial feature pool $\mathcal{F}_k$ contains the empirically evaluated parent predictors, $\{\hat{y}_p^{emp}\}_{p \in \text{Pa}(k)}$. The initial predictor, $\hat{y}_k^{emp,0}$, is the solution to the empirical least squares problem: $\min_{f \in \text{span}(\mathcal{F}_k)} \hat{\mathbb{E}}[(f - Y)^2]$.
    
    \item \textbf{Greedy Selection Loop:} The algorithm iteratively adds functions from the local class $\mathcal{H}_k$.
    \begin{enumerate}
        \item Let the current empirical residual be $\hat{R} = Y - \hat{y}_k^{emp}$.
        \item Find the function $h_{\text{next}} \in \mathcal{H}_k$ with the highest empirical correlation with the residual:
        \[ h_{\text{next}} = \arg\max_{h \in \mathcal{H}_k} |\hat{\mathbb{E}}[h(x_{S_k}) \cdot \hat{R}]| \]
        \item \textbf{Termination:} If $|\hat{\mathbb{E}}[h_{\text{next}} \cdot \hat{R}]| < \Delta$, terminate.
        \item \textbf{Update:} Add $h_{\text{next}}$ to the pool $\mathcal{F}_k$ and update the predictor $\hat{y}_k^{emp}$ by solving the empirical least squares problem over the new, larger pool.
    \end{enumerate}
\end{enumerate}

Let $\hat{y}_k^{emp}$ denote the final predictor learned by this empirical process. Our main challenge is to bound the complexity of the class of functions to which $\hat{y}_k^{emp}$ belongs, which is necessary to apply uniform convergence bounds.

\subsection{Generalization Bound for Function Compositions}

To build our argument, we first state a standard generalization bound for predictors formed by taking linear combinations of functions from a base hypothesis class. This result connects the generalization error to the complexity of the base class,  measured by its Rademacher complexity.


The following theorem provides a uniform convergence bound for function classes constructed by taking $L_1$-norm-bounded linear combinations of functions from a base class.

\begin{theorem}[Generalization of $L_1$-norm Bounded Function Combinations]
\label{thm:gen_bound_nonlinear}
Let $\mathcal{H}$ be a base class of functions $h: \mathcal{X} \to \mathbb{R}$. Let $\mathcal{G}_{\Lambda_1}(\mathcal{H})$ be the class of predictors formed by taking linear combinations of functions from $\mathcal{H}$ with an $L_1$-norm constraint:
\[ \mathcal{G}_{\Lambda_1}(\mathcal{H}) = \left\{ g(x) = \sum_i \beta_i h_i(x) \mid h_i \in \mathcal{H}, \|\beta\|_1 \le \Lambda_1 \right\}. \]
Assume that for all $h \in \mathcal{H}$, $|h(x)| \le B_h$ almost surely, and for the learning task, $|y| \le Y_{\max}$. Let $D_m$ be a training set of $m$ i.i.d. samples.
Then for any $\delta \in (0,1)$, with probability at least $1-\delta$ over the random draw of $D_m$, for all $g \in \mathcal{G}_{\Lambda_1}(\mathcal{H})$:
\[ |R(g) - \hat{R}_{D_m}(g)| \le 4 (\Lambda_1 B_h + Y_{\max}) \Lambda_1 \mathcal{R}_m(\mathcal{H}) + (\Lambda_1 B_h + Y_{\max})^2 \sqrt{\frac{\ln(2/\delta)}{2m}}. \]
\end{theorem}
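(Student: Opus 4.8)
The plan is to follow the same three-step template used to prove Theorem~\ref{thm:gen_bound_linear}, simply replacing the coordinate functions $x \mapsto x_j$ of the linear case with the abstract base class $\mathcal{H}$. Throughout, write $M_L = \Lambda_1 B_h + Y_{\max}$, and note that any $g \in \mathcal{G}_{\Lambda_1}(\mathcal{H})$ satisfies $|g(x)| = |\sum_i \beta_i h_i(x)| \le \|\beta\|_1 \max_i |h_i(x)| \le \Lambda_1 B_h$, so the squared loss $(g(x)-y)^2$ is bounded pointwise by $M_L^2$.

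First I would pass to the loss class $\mathcal{L} = \{(x,y) \mapsto (g(x)-y)^2 : g \in \mathcal{G}_{\Lambda_1}(\mathcal{H})\}$ and apply the standard symmetrization bound $\E_{D_m}[\sup_{g} |R(g) - \hat{R}_{D_m}(g)|] \le 2\mathcal{R}_m(\mathcal{L})$, followed by McDiarmid's inequality (Lemma~\ref{lem:mcdiarmid}). Since changing a single sample perturbs $\sup_g |R(g)-\hat{R}_{D_m}(g)|$ by at most $M_L^2/m$, this yields, with probability at least $1-\delta$,
\[ \sup_{g \in \mathcal{G}_{\Lambda_1}(\mathcal{H})} |R(g) - \hat{R}_{D_m}(g)| \le 2\mathcal{R}_m(\mathcal{L}) + M_L^2 \sqrt{\frac{\ln(2/\delta)}{2m}}. \]
The second term already matches the claimed McDiarmid contribution, so it remains only to control $\mathcal{R}_m(\mathcal{L})$. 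For this, second, I would invoke Talagrand's contraction principle (Lemma~\ref{lem:talagrand_contraction}): writing the loss as $\phi(g(x)-y)$ with $\phi(z)=z^2$, which is $2M_L$-Lipschitz on $[-M_L,M_L]$, and using that subtracting the fixed labels leaves Rademacher complexity unchanged, gives $\mathcal{R}_m(\mathcal{L}) \le 2M_L\,\mathcal{R}_m(\mathcal{G}_{\Lambda_1}(\mathcal{H}))$.

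The crux is the third step: bounding $\mathcal{R}_m(\mathcal{G}_{\Lambda_1}(\mathcal{H}))$ by $\Lambda_1 \mathcal{R}_m(\mathcal{H})$. For a fixed sign vector $\sigma$ and a fixed tuple of base functions $\{h_i\}$, maximizing $\sum_i \beta_i\bigl(\tfrac1m\sum_j \sigma_j h_i(z_j)\bigr)$ over the ball $\|\beta\|_1 \le \Lambda_1$ is a linear program whose optimum, by $\ell_1/\ell_\infty$ duality, equals $\Lambda_1 \max_i |\tfrac1m\sum_j \sigma_j h_i(z_j)|$; since each $h_i$ itself ranges over $\mathcal{H}$, the supremum collapses to $\Lambda_1 \sup_{h \in \mathcal{H}} |\tfrac1m\sum_j \sigma_j h(z_j)|$. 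Equivalently, $\mathcal{G}_{\Lambda_1}(\mathcal{H})$ is the $\Lambda_1$-scaling of the symmetric convex hull of $\mathcal{H}$, and Rademacher complexity is invariant under convex hull and scales linearly under scalar multiplication. This is where I expect the only genuine subtlety: reconciling the absolute value produced by the dual norm with the paper's signed definition of $\mathcal{R}_m$. This is handled by observing that $\mathcal{G}_{\Lambda_1}(\mathcal{H}) = \mathcal{G}_{\Lambda_1}(\mathcal{H}\cup(-\mathcal{H}))$, so $\mathcal{H}$ may be taken symmetric, in which case the signed supremum in the definition of $\mathcal{R}_m$ already realizes the absolute value; this exactly mirrors the linear proof, where the analogous step produced the factor $X_{\max}\sqrt{2\ln(2d)/m}$ via a union bound over the $d$ coordinate base functions. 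Chaining the three bounds gives $2\mathcal{R}_m(\mathcal{L}) \le 4M_L \Lambda_1 \mathcal{R}_m(\mathcal{H}) = 4(\Lambda_1 B_h + Y_{\max})\Lambda_1 \mathcal{R}_m(\mathcal{H})$, which together with the McDiarmid term yields precisely the claimed inequality.
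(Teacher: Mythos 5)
Your proposal is correct and follows essentially the same three-step route as the paper's proof (symmetrization plus McDiarmid, Talagrand contraction of the squared loss with Lipschitz constant $2M_L$, then $\mathcal{R}_m(\mathcal{G}_{\Lambda_1}(\mathcal{H})) \le \Lambda_1 \mathcal{R}_m(\mathcal{H})$). The only difference is that where the paper simply cites the last step as a standard convex-hull/Ledoux--Talagrand fact, you spell out the $\ell_1/\ell_\infty$ duality argument and explicitly flag the signed-versus-absolute supremum issue (resolved by symmetrizing $\mathcal{H}$), which is a more careful treatment of the same step.
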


\begin{proof}
The proof follows the same three-step structure as in Theorem~\ref{thm:gen_bound_linear}.

\textbf{Step 1: Symmetrization and High-Probability Bound.}
Let $\mathcal{L} = \{ (x,y) \mapsto (g(x)-y)^2 \mid g \in \mathcal{G}_{\Lambda_1}(\mathcal{H}) \}$ be the loss class. For any $g \in \mathcal{G}_{\Lambda_1}(\mathcal{H})$, the function's output is bounded by $|g(x)| = |\sum_h \beta_h h(x)| \le \sum_h |\beta_h| |h(x)| \le B_h \norm{\beta}_1 \le \Lambda_1 B_h$. The loss is therefore bounded by $(|g(x)| + |y|)^2 \le (\Lambda_1 B_h + Y_{\max})^2$. Let $M_L = \Lambda_1 B_h + Y_{\max}$.

A standard result, combining a symmetrization argument with McDiarmid's inequality, shows that for any $\delta \in (0,1)$, with probability at least $1-\delta$:
\[ \sup_{g \in \mathcal{G}_{\Lambda_1}(\mathcal{H})} |R(g) - \hat{R}_{D_m}(g)| \le 2 \mathcal{R}_m(\mathcal{L}) + M_L^2 \sqrt{\frac{\ln(2/\delta)}{2m}}. \]

\textbf{Step 2: Contraction Principle.}
As before, we view the loss as a composition with $\phi(z)=z^2$, which is $2M_L$-Lipschitz on $[-M_L, M_L]$. Talagrand's contraction principle gives:
\[ \mathcal{R}_m(\mathcal{L}) \le 2M_L \mathcal{R}_m(\mathcal{G}_{\Lambda_1}(\mathcal{H})). \]

\textbf{Step 3: Bounding the Hypothesis Class Complexity.}
The final step is to bound the Rademacher complexity of $\mathcal{G}_{\Lambda_1}(\mathcal{H})$. A standard result, sometimes known as the Ledoux-Talagrand contraction lemma, bounds the complexity of an $L_1$-ball of a function class by the complexity of the base class:
\[ \mathcal{R}_m(\mathcal{G}_{\Lambda_1}(\mathcal{H})) \le \Lambda_1 \mathcal{R}_m(\mathcal{H}). \]

\textbf{Combining the Steps.}
Putting everything together, with probability at least $1-\delta$:
\begin{align*}
    \sup_{g \in \mathcal{G}_{\Lambda_1}(\mathcal{H})} |R(g) - \hat{R}_{D_m}(g)| &\le 2 \mathcal{R}_m(\mathcal{L}) + M_L^2 \sqrt{\frac{\ln(2/\delta)}{2m}} \\
    &\le 2 (2M_L \mathcal{R}_m(\mathcal{G}_{\Lambda_1}(\mathcal{H}))) + M_L^2 \sqrt{\frac{\ln(2/\delta)}{2m}} \\
    &\le 4M_L (\Lambda_1 \mathcal{R}_m(\mathcal{H})) + M_L^2 \sqrt{\frac{\ln(2/\delta)}{2m}} \\
    &= 4 (\Lambda_1 B_h + Y_{\max}) \Lambda_1 \mathcal{R}_m(\mathcal{H}) + (\Lambda_1 B_h + Y_{\max})^2 \sqrt{\frac{\ln(2/\delta)}{2m}}.
\end{align*}

\end{proof}

\subsection{Bounding the Norm of Empirically Learned Predictors}

A key challenge in applying Theorem~\ref{thm:gen_bound_nonlinear} is to establish a bound $\Lambda_1$ on the $L_1$ norm of the coefficients for the predictors $\hat{y}_k^{emp}$ learned by our algorithm. Similar to the linear case, we can derive such a bound by making a reasonable assumption on the data distribution and the function classes, which serves as a non-linear analogue to the assumption of a well-conditioned covariance matrix.

It is worth noting that the greedy selection mechanism of the algorithm provides a strong justification for this assumption, particularly for the features chosen iteratively from the local class $\mathcal{H}_k$. A new function $h_{\text{next}}$ is added only if its empirical correlation with the residual $\hat{R}$ is at least $\Delta > 0$. This implies that the component of $h_{\text{next}}$ that is empirically orthogonal to the span of previously selected features must have a norm of at least $\Delta / Y_{\max}$. This provides a quantitative guarantee that each new local feature is substantially linearly independent from those already in the pool.

However, the assumption is still formally required to cover two key aspects. First, and most critically, the guarantee does not apply to the initial set of features given to the agent---namely, the parent predictions $\{\hat{y}_p^{\text{emp}}\}_{p \in \text{Pa}(k)}$. These are fixed inputs that are not filtered by the greedy selection process and could be highly collinear. Second, the step-wise guarantee of independence does not automatically translate into a simple lower bound on the minimum eigenvalue of the final Gram matrix for the full set of selected functions. Therefore, we retain the assumption to ensure that the \emph{entire} basis for any agent, including parent predictions, is well-conditioned.

\begin{assumption}[Empirical Incoherence]
\label{assump:empirical_incoherence}
Let $\mathcal{H}_{univ}$ be the universe of all functions that can be selected by any agent. For any finite subset of functions $\mathcal{F} \subset \mathcal{H}_{univ}$ of size at most $T_{\max}$, let their vector representation on the training data $D_m$ be $h_1, \ldots, h_{|\mathcal{F}|} \in \mathbb{R}^m$. Let $\mathbf{H}$ be the $m \times |\mathcal{F}|$ matrix whose columns are these vectors. We assume that with high probability over the draw of $D_m$, the minimum eigenvalue of the empirical Gram matrix $\frac{1}{m}\mathbf{H}^T\mathbf{H}$ is bounded below by a positive constant $\lambda_{\min, H} > 0$.

This assumption states that any reasonably-sized set of functions that could form a basis for our predictors are not perfectly collinear on the training data.
\end{assumption}

\begin{lemma}[Bound on $L_1$ Norm of Coefficients]
\label{lem:l1_norm_bound}
Under Assumption~\ref{assump:empirical_incoherence}, for any agent $k$, let $\hat{y}_k^{\text{emp}}$ be the predictor learned by the empirical greedy algorithm (described at the start of Section~\ref{sec:nonlinear_generalization}) on dataset $D_m$. Let $\beta_k$ be its vector of coefficients. Then the $L_1$ norm of the coefficients is bounded as:
\[ ||\beta_k||_1 \le \frac{T_k \cdot B_{\text{basis}} \cdot Y_{\max}}{\lambda_{\min, H}} \]
where $T_k = |\text{Pa}(k)| + T_{k, \text{local}}$ is the number of basis functions used by agent $k$, $B_{\text{basis}} = \max(B_h, Y_{\max})$, and $\lambda_{\min, H}$ is the constant from Assumption~\ref{assump:empirical_incoherence}. The number of locally selected features, $T_{k, \text{local}}$, is bounded by $(B_h Y_{\max} / \Delta)^2$.
\end{lemma}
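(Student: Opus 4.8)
The plan is to work directly from the normal equations that characterize the empirically learned least-squares predictor, controlling separately (i) the number of locally selected features $T_{k,\text{local}}$ and (ii) the conditioning of the empirical Gram matrix supplied by Assumption~\ref{assump:empirical_incoherence}. Throughout I work on the high-probability event on which Assumption~\ref{assump:empirical_incoherence} holds.

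First I would bound $T_{k,\text{local}}$ by an empirical analogue of Proposition~\ref{prop:convergence_bound}. Whenever the greedy loop adds a function $h_{\text{next}} \in \mathcal{H}_k$, the termination test guarantees $|\hat{\mathbb{E}}[h_{\text{next}} \hat{R}]| \ge \Delta$ with the current residual $\hat{R}$. Exactly as in Proposition~\ref{prop:convergence_bound}, re-projecting onto the enlarged pool reduces the empirical MSE by the squared magnitude of the projection of $\hat{R}$ onto the component $h_{\text{next}}^\perp$ of $h_{\text{next}}$ orthogonal to the current span; since $\hat{R}$ is empirically orthogonal to that span we have $\hat{\mathbb{E}}[\hat{R} h_{\text{next}}^\perp] = \hat{\mathbb{E}}[\hat{R} h_{\text{next}}]$, so the reduction is $(\hat{\mathbb{E}}[\hat{R}h_{\text{next}}])^2 / \hat{\mathbb{E}}[(h_{\text{next}}^\perp)^2] \ge \Delta^2 / B_h^2$, using $|h| \le B_h$. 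Because the total empirical MSE reduction cannot exceed the initial empirical MSE, which is at most $\hat{\mathbb{E}}[Y^2] \le Y_{\max}^2$, we conclude $T_{k,\text{local}} \le (B_h Y_{\max}/\Delta)^2$.

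Next I would bound $\|\beta_k\|_1$. Let $\mathbf{H} \in \mathbb{R}^{m \times T_k}$ have as columns the evaluations of the $T_k$ pooled basis functions on $D_m$, and let $\mathbf{y}$ be the label vector. The coefficient vector satisfies the normal equations, so $\beta_k = (\mathbf{H}^T\mathbf{H})^{-1}\mathbf{H}^T\mathbf{y} = G^{-1} b$, where $G = \tfrac{1}{m}\mathbf{H}^T\mathbf{H}$ and $b = \tfrac{1}{m}\mathbf{H}^T\mathbf{y}$. Each entry $b_i = \tfrac{1}{m}\sum_j h_i(x^{(j)}) y^{(j)}$ obeys $|b_i| \le B_{\text{basis}} Y_{\max}$ from the pointwise bounds $|h_i| \le B_{\text{basis}}$ and $|y| \le Y_{\max}$, so $\|b\|_2 \le \sqrt{T_k}\, B_{\text{basis}} Y_{\max}$. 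Assumption~\ref{assump:empirical_incoherence} gives $\lambda_{\min}(G) \ge \lambda_{\min, H}$, hence $\|G^{-1}\|_{op} \le 1/\lambda_{\min, H}$, yielding $\|\beta_k\|_2 \le \|G^{-1}\|_{op}\|b\|_2 \le \sqrt{T_k}\, B_{\text{basis}} Y_{\max}/\lambda_{\min, H}$. Converting via the standard inequality $\|\beta_k\|_1 \le \sqrt{T_k}\,\|\beta_k\|_2$ gives exactly $\|\beta_k\|_1 \le T_k B_{\text{basis}} Y_{\max}/\lambda_{\min, H}$.

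The main obstacle is ensuring the pointwise bound $|h_i| \le B_{\text{basis}}$ holds for \emph{every} pooled basis function. The local features satisfy $|h| \le B_h$ by assumption, but the parent predictions $\hat{y}_p^{\text{emp}}$ are themselves learned linear combinations and are controlled a priori only in empirical $L_2$ norm rather than pointwise. The definition $B_{\text{basis}} = \max(B_h, Y_{\max})$ encodes the operative convention that parent predictions are bounded by $Y_{\max}$ (for instance after clipping predictions to the label range, which leaves the projection argument intact); I would state this convention explicitly. A secondary point, already noted above, is that Assumption~\ref{assump:empirical_incoherence} is a high-probability statement, so the bound is asserted on that event, which is precisely the event under which the downstream generalization bound of Theorem~\ref{thm:gen_bound_nonlinear} is invoked.
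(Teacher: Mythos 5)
Your proposal is correct and follows essentially the same route as the paper's proof: bounding $T_{k,\text{local}}$ via the per-step empirical MSE decrease of at least $\Delta^2/B_h^2$, then using the normal equations with $\|G^{-1}\|_{op} \le 1/\lambda_{\min,H}$ and the entrywise bound $|b_i| \le B_{\text{basis}} Y_{\max}$ to get the $L_2$ bound, and converting via $\|\beta_k\|_1 \le \sqrt{T_k}\|\beta_k\|_2$. Your flagged concern about the pointwise bound on parent predictions is well taken --- the paper's proof dismisses this with ``bounded by $Y_{\max}$ by induction,'' which is only controlled in empirical $L_2$ norm rather than pointwise, so your explicit clipping convention is a legitimate tightening rather than a deviation.
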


\begin{proof}
The predictor $\hat{y}_k^{\text{emp}}$ is the solution to a standard empirical risk minimization (least squares) problem over the chosen basis functions. Let the basis be $\mathcal{F}_k = \{\hat{y}_p^{\text{emp}}\}_{p \in \text{Pa}(k)} \cup \{h_j\}_{j=1}^{T_{k, \text{local}}}$, and let $\mathbf{H}_k$ be the $m \times T_k$ matrix of these basis functions evaluated on the training data. The coefficient vector $\beta_k$ is given by the normal equations:
\[ \left(\frac{1}{m}\mathbf{H}_k^T\mathbf{H}_k\right) \beta_k = \frac{1}{m}\mathbf{H}_k^T y. \]
We can bound the $L_2$ norm of $\beta_k$ as follows:
\[ ||\beta_k||_2 \le \left\|\left(\frac{1}{m}\mathbf{H}_k^T\mathbf{H}_k\right)^{-1}\right\|_{\text{op}} \left\|\frac{1}{m}\mathbf{H}_k^T y\right\|_2. \]
From Assumption~\ref{assump:empirical_incoherence}, the operator norm of the inverse Gram matrix is bounded by $1/\lambda_{\min, H}$.

For the second term, the $j$-th entry of the vector $\frac{1}{m}\mathbf{H}_k^T y$ is $\frac{1}{m} \sum_{i=1}^m f_j(x_i) y_i$ for some basis function $f_j \in \mathcal{F}_k$. The functions in the basis are bounded by $B_{\text{basis}} = \max(B_h, Y_{\max})$ (since parent predictions are also bounded by $Y_{\max}$ by induction). Thus, the infinity norm of this vector is bounded:
\[ \left\|\frac{1}{m}\mathbf{H}_k^T y\right\|_{\infty} \le B_{\text{basis}} Y_{\max}. \]
This implies an $L_2$ bound of $\|\frac{1}{m}\mathbf{H}_k^T y\|_2 \le \sqrt{T_k} B_{\text{basis}} Y_{\max}$.

Combining these, we get a bound on the $L_2$ norm of the coefficients:
\[ ||\beta_k||_2 \le \frac{\sqrt{T_k} B_{\text{basis}} Y_{\max}}{\lambda_{\min, H}}. \]
Finally, converting to the $L_1$ norm gives:
\[ ||\beta_k||_1 \le \sqrt{T_k} ||\beta_k||_2 \le \frac{T_k B_{\text{basis}} Y_{\max}}{\lambda_{\min, H}}. \]
To bound $T_k$, we note that the number of parent predictions is fixed at $|\text{Pa}(k)|$. The number of locally selected features, $T_{k, \text{local}}$, is bounded because each step of the greedy algorithm reduces the empirical squared error by at least $\Delta^2 / B_h^2$. Since the initial error is at most $Y_{\max}^2$, the number of steps is bounded by $Y_{\max}^2 / (\Delta^2/B_h^2) = (B_h Y_{\max}/\Delta)^2$.
\end{proof}

With this bound on the $L_1$ norm, we can now state the main generalization result for the predictors learned by our algorithm.

\begin{corollary}[Generalization Guarantee for Empirically Learned Predictors]
\label{cor:final_gen_bound}
Under the assumptions of Theorem~\ref{thm:gen_bound_nonlinear} and Assumption~\ref{assump:empirical_incoherence}, let $\hat{y}_k^{\text{emp}}$ be the predictor for agent $k$ learned via the empirical greedy algorithm. Let $\Lambda_{1,k} = \frac{T_k B_{\text{basis}} Y_{\max}}{\lambda_{\min, H}}$. Then with probability at least $1-\delta$ over the draw of the training data $D_m$:
\[ |R(\hat{y}_k^{\text{emp}}) - \hat{R}(\hat{y}_k^{\text{emp}})| \le 4 (\Lambda_{1,k} B_h + Y_{\max}) \Lambda_{1,k} \mathcal{R}_m(\mathcal{H}) + (\Lambda_{1,k} B_h + Y_{\max})^2 \sqrt{\frac{\ln(2/\delta)}{2m}}. \]
\end{corollary}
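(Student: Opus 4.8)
The plan is to obtain this corollary as a direct composition of the uniform generalization bound of Theorem~\ref{thm:gen_bound_nonlinear} with the coefficient-norm bound of Lemma~\ref{lem:l1_norm_bound}. The observation that makes this work is that Theorem~\ref{thm:gen_bound_nonlinear} is a \emph{uniform} convergence statement: it holds simultaneously for \emph{every} predictor $g$ in the class $\mathcal{G}_{\Lambda_1}(\mathcal{H})$ of $L_1$-norm-bounded linear combinations of base functions, with the budget $\Lambda_1$ fixed in advance. Since the learned predictor $\hat{y}_k^{\text{emp}}$ is data-dependent (it is the output of running the empirical greedy algorithm on $D_m$), a per-function concentration bound would not suffice; the uniformity is exactly what lets us apply the bound to a predictor selected after seeing the data.

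First I would fix the base class $\mathcal{H}$ to be the universal class $\mathcal{H}_{univ}$ of all functions that any agent can place in its feature pool, as introduced in Assumption~\ref{assump:empirical_incoherence}. By the linear-combination structure of the algorithm's output, $\hat{y}_k^{\text{emp}}$ is a linear combination of its basis functions in $\mathcal{F}_k$ with coefficient vector $\beta_k$, hence a member of $\mathcal{G}_{\Lambda_{1,k}}(\mathcal{H})$ \emph{provided} $\|\beta_k\|_1 \le \Lambda_{1,k}$. Lemma~\ref{lem:l1_norm_bound} supplies precisely this: under Assumption~\ref{assump:empirical_incoherence}, with high probability $\|\beta_k\|_1 \le \Lambda_{1,k} = T_k B_{\text{basis}} Y_{\max} / \lambda_{\min, H}$. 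I would then instantiate Theorem~\ref{thm:gen_bound_nonlinear} with $\Lambda_1 = \Lambda_{1,k}$ and base class $\mathcal{H}$; its conclusion holds for all $g \in \mathcal{G}_{\Lambda_{1,k}}(\mathcal{H})$, and in particular for $g = \hat{y}_k^{\text{emp}}$, yielding exactly the claimed bound with the $(\Lambda_{1,k} B_h + Y_{\max})$ factors and the $\mathcal{R}_m(\mathcal{H})$ term. A final union bound combines the failure probability of the norm bound (Lemma~\ref{lem:l1_norm_bound}) with that of the generalization event, splitting $\delta$ across the two.

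The step I expect to be the main obstacle is reconciling the \emph{recursive} dependence on parent predictions. The basis $\mathcal{F}_k$ treats each parent prediction $\hat{y}_p^{\text{emp}}$ as an atomic element, and Lemma~\ref{lem:l1_norm_bound} measures $\|\beta_k\|_1$ relative to that basis; but the parent predictions are themselves learned, data-dependent functions, so for $\mathcal{G}_{\Lambda_{1,k}}(\mathcal{H})$ to be a \emph{fixed} class to which Theorem~\ref{thm:gen_bound_nonlinear} applies, the base class $\mathcal{H}$ must already contain them. The clean resolution is to work with the universal class $\mathcal{H}_{univ}$ and take $\mathcal{R}_m(\mathcal{H})$ to bound the Rademacher complexity of this entire universe (including the functions that can appear as parent predictions), so that no expansion of the parents into deeper ancestors is required and the atomic $L_1$ bound of Lemma~\ref{lem:l1_norm_bound} is the correct quantity. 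One must verify that this universal class is rich enough to contain $\hat{y}_k^{\text{emp}}$ as a $\Lambda_{1,k}$-bounded combination while still having controlled complexity $\mathcal{R}_m(\mathcal{H})$; once this is granted, the remainder is the mechanical substitution described above.
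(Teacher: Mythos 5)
Your proposal is correct and takes essentially the same route as the paper, whose proof is exactly the one-line instantiation of Theorem~\ref{thm:gen_bound_nonlinear} with $\Lambda_1 = \Lambda_{1,k}$ supplied by Lemma~\ref{lem:l1_norm_bound}. Your additional care --- invoking uniformity to cover the data-dependent predictor, resolving the recursive parent-prediction issue via the universal class $\mathcal{H}_{univ}$, and union-bounding the failure probabilities of the norm bound and the generalization event --- makes explicit details the paper glosses over, but does not change the argument.
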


\begin{proof}
The result follows immediately by applying Theorem~\ref{thm:gen_bound_nonlinear} and substituting the value of $\Lambda_1$ with the high-probability bound $\Lambda_{1,k}$ derived in Lemma~\ref{lem:l1_norm_bound}.
\end{proof}

Under this assumption, we can bound the norm of the coefficients for any predictor produced by the empirical greedy algorithm.

\begin{lemma}[Bound on Empirically Learned Coefficients]
\label{lem:empirical_coeff_bound_nonlinear}
Let $\hat{y}_k^{emp}$ be a predictor learned by the empirical greedy algorithm from a training set $D_m$. The predictor can be written as $\hat{y}_k^{emp} = \sum_{i=1}^T \beta_i h_i$, where $\{h_i\}_{i=1}^T$ is the final pool of functions. Assume $|y| \le Y_{\max}$ and that Assumption~\ref{assump:empirical_incoherence} holds for the set $\{h_i\}$. Then the $L_2$ norm of the learned coefficient vector is bounded:
\[ \|\beta\|_2 \le \frac{Y_{\max}}{\sqrt{\lambda_{\min, H}}} \]
Consequently, if the pool size is at most $T_{\max}$, the $L_1$ norm is bounded by:
\[ \|\beta\|_1 \le \frac{\sqrt{T_{\max}} \cdot Y_{\max}}{\sqrt{\lambda_{\min, H}}} \]
\end{lemma}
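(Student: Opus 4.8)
The plan is to mirror the argument used in the linear case (Lemma~\ref{lem:empirical_coeff_bound}), exploiting the fact that $\hat{y}_k^{emp}$ is, by construction, an empirical least-squares projection. First I would observe that the final predictor $\hat{y}_k^{emp} = \sum_{i=1}^T \beta_i h_i$ is the orthogonal projection of the label vector $(y^{(1)},\ldots,y^{(m)})$ onto the span of the final feature pool $\{h_i\}$ (evaluated on the training data), since at termination the algorithm returns the least-squares solution over $\text{span}(\mathcal{F}_k)$. Because orthogonal projections are non-expansive in the empirical $L_2$ norm, the empirical squared norm of the predictor is bounded by that of the labels:
\[ \frac{1}{m}\sum_{j=1}^m \bigl(\hat{y}_k^{emp}(x^{(j)})\bigr)^2 \le \frac{1}{m}\sum_{j=1}^m (y^{(j)})^2 \le Y_{\max}^2, \]
where the last inequality uses $|y| \le Y_{\max}$.

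Next I would rewrite the left-hand side as a quadratic form in the coefficient vector. Letting $\mathbf{H}$ denote the $m \times T$ matrix whose columns are the functions $h_i$ evaluated on the sample, we have $\frac{1}{m}\sum_j (\hat{y}_k^{emp}(x^{(j)}))^2 = \beta^T \bigl(\frac{1}{m}\mathbf{H}^T\mathbf{H}\bigr) \beta$. Assumption~\ref{assump:empirical_incoherence} guarantees $\lambda_{\min}\bigl(\frac{1}{m}\mathbf{H}^T\mathbf{H}\bigr) \ge \lambda_{\min, H} > 0$, so this quadratic form is at least $\lambda_{\min, H}\|\beta\|_2^2$. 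Chaining the two bounds gives $\lambda_{\min, H}\|\beta\|_2^2 \le Y_{\max}^2$, and rearranging yields the claimed $L_2$ bound $\|\beta\|_2 \le Y_{\max}/\sqrt{\lambda_{\min, H}}$.

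Finally, the $L_1$ bound follows by the standard finite-dimensional norm comparison $\|\beta\|_1 \le \sqrt{T_{\max}}\,\|\beta\|_2$, applied under the hypothesis that the pool has size at most $T_{\max}$, which gives $\|\beta\|_1 \le \sqrt{T_{\max}}\,Y_{\max}/\sqrt{\lambda_{\min, H}}$.

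The steps above are essentially routine; the only point demanding care is the very first one, namely justifying that the returned predictor coincides with the orthogonal projection and that the stated coefficient vector $\beta$ is the unique one solving the normal equations. The positive minimum eigenvalue from Assumption~\ref{assump:empirical_incoherence} ensures the empirical Gram matrix $\frac{1}{m}\mathbf{H}^T\mathbf{H}$ is invertible, so the least-squares solution---and hence the coefficient representation---is well-defined and unique; without this, $\beta$ would be ambiguous and the norm bound ill-posed. This is the main (mild) obstacle, and it is dispatched directly by the incoherence assumption.
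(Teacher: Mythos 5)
Your proposal is correct and follows essentially the same argument as the paper's proof: bound the empirical norm of the projection by that of the labels, express it as the quadratic form $\beta^T \hat{\Sigma}_H \beta$, lower-bound via the incoherence assumption, and convert $L_2$ to $L_1$ via $\sqrt{T_{\max}}$. Your added remark that $\lambda_{\min,H}>0$ makes the Gram matrix invertible and hence the coefficient vector well-defined is a sensible clarification the paper leaves implicit, but it is not a different route.
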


\begin{proof}
The proof mirrors that of Lemma~\ref{lem:empirical_coeff_bound} for the linear case.
The predictor $\hat{y}_k^{emp}$ is the orthogonal projection of the vector of labels $(y^{(1)}, \ldots, y^{(m)})$ onto the subspace spanned by the function vectors $\{h_i\}$. Therefore, its empirical squared norm is bounded by the empirical squared norm of the labels:
\[ \frac{1}{m}\|\hat{y}_k^{emp}\|_2^2 = \hat{\mathbb{E}}[(\hat{y}_k^{emp})^2] \le \hat{\mathbb{E}}[y^2] \le Y_{\max}^2 \]
The empirical squared norm of the predictor can also be written in terms of the empirical Gram matrix $\hat{\Sigma}_H = \hat{\mathbb{E}}[h h^T]$:
\[ \hat{\mathbb{E}}[(\hat{y}_k^{emp})^2] = \beta^T \hat{\Sigma}_H \beta \]
By Assumption~\ref{assump:empirical_incoherence}, we have $\beta^T \hat{\Sigma}_H \beta \ge \lambda_{\min, H} \|\beta\|_2^2$.
Putting these pieces together:
\[ \lambda_{\min, H} \|\beta\|_2^2 \le \beta^T \hat{\Sigma}_H \beta = \hat{\mathbb{E}}[(\hat{y}_k^{emp})^2] \le Y_{\max}^2 \]
Rearranging gives the $L_2$ norm bound. The $L_1$ norm bound follows from the standard inequality $\|\beta\|_1 \le \sqrt{T} \|\beta\|_2$.
\end{proof}

This lemma provides the explicit bound $\Lambda_1 = \frac{\sqrt{T_{\max}} \cdot Y_{\max}}{\sqrt{\lambda_{\min, H}}}$ that we can substitute into Theorem~\ref{thm:gen_bound_nonlinear} to control the generalization error of any predictor generated by our distributed algorithm.

\subsection{Main Generalization Theorem}

We can now combine our results to state the main generalization theorem for the predictors learned by the empirical Greedy Orthogonal Regression algorithm.

\begin{theorem}[Generalization Guarantee for Empirically Trained Predictors]
\label{thm:nonlinear_main_gen_guarantee}
Let $\mathcal{H}_{univ} = \bigcup_{k=1}^N \mathcal{H}_k$ be the universe of all possible local functions. Let $\hat{y}_k^{emp}$ be the predictor for agent $k$ learned from a training set $D_m$ of size $m$. Assume:
\begin{itemize}
    \item The conditions of Theorem~\ref{thm:gen_bound_nonlinear} and Lemma~\ref{lem:empirical_coeff_bound_nonlinear} hold (i.e., bounded functions, bounded labels, and empirical incoherence).
    \item The maximum number of functions selected by any agent is bounded by $T_{\max}$.
\end{itemize}
Let $\Lambda_1 = \frac{\sqrt{T_{\max}} \cdot Y_{\max}}{\sqrt{\lambda_{\min, H}}}$.
Then for any $\delta \in (0,1)$, with probability at least $1-\delta$ over the draw of $D_m$, the true risk of the learned predictor is close to its empirical risk:
\[ |R(\hat{y}_k^{emp}) - \hat{R}_{D_m}(\hat{y}_k^{emp})| \le 4 (\Lambda_1 B_h + Y_{\max}) \Lambda_1 \mathcal{R}_m(\mathcal{H}_{univ}) + (\Lambda_1 B_h + Y_{\max})^2 \sqrt{\frac{\ln(2/\delta)}{2m}}. \]
\end{theorem}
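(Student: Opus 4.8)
The plan is to recognize that this theorem is a straightforward \emph{specialization} of the uniform convergence bound already proved in Theorem~\ref{thm:gen_bound_nonlinear}, once we have verified that the empirically learned predictor $\hat{y}_k^{emp}$ lands inside the fixed hypothesis class to which that theorem applies. Concretely, Theorem~\ref{thm:gen_bound_nonlinear} gives, for the class $\mathcal{G}_{\Lambda_1}(\mathcal{H})$ of $L_1$-norm-bounded linear combinations of a base class $\mathcal{H}$, a generalization gap of exactly the stated form with $\mathcal{R}_m(\mathcal{H})$ as the complexity term. So the entire task reduces to two points: (i) identifying the correct fixed base class, namely the universe $\mathcal{H}_{univ} = \bigcup_k \mathcal{H}_k$, whose Rademacher complexity $\mathcal{R}_m(\mathcal{H}_{univ})$ dominates that of every individual $\mathcal{H}_k$ since $\mathcal{H}_k \subseteq \mathcal{H}_{univ}$; and (ii) showing that $\hat{y}_k^{emp} \in \mathcal{G}_{\Lambda_1}(\mathcal{H}_{univ})$ for the prescribed $\Lambda_1 = \sqrt{T_{\max}}\,Y_{\max}/\sqrt{\lambda_{\min,H}}$.

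The key steps, in order, are as follows. First I would invoke Lemma~\ref{lem:empirical_coeff_bound_nonlinear}, which, under Assumption~\ref{assump:empirical_incoherence} and the bound $|y| \le Y_{\max}$, controls the $L_1$ norm of the coefficient vector of $\hat{y}_k^{emp}$ by $\sqrt{T}\,Y_{\max}/\sqrt{\lambda_{\min,H}}$, where $T$ is the size of the final feature pool. Second, I would use the hypothesis that the number of functions selected by any agent is at most $T_{\max}$ to replace $T$ by $T_{\max}$, yielding the uniform bound $\|\beta\|_1 \le \Lambda_1$; this certifies membership of $\hat{y}_k^{emp}$ in $\mathcal{G}_{\Lambda_1}(\mathcal{H}_{univ})$. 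Third, I would instantiate Theorem~\ref{thm:gen_bound_nonlinear} with base class $\mathcal{H} = \mathcal{H}_{univ}$ and norm bound $\Lambda_1$, obtaining a bound that holds \emph{uniformly} over all $g \in \mathcal{G}_{\Lambda_1}(\mathcal{H}_{univ})$ with probability at least $1-\delta$, and finally specialize it to the particular element $g = \hat{y}_k^{emp}$. This reproduces the claimed inequality term by term, with $B_h$ the uniform bound on the base functions and $Y_{\max}$ the label bound entering through $M_L = \Lambda_1 B_h + Y_{\max}$.

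The main obstacle is the \emph{data dependence} of the predictor's representation, which must be reconciled with the fact that uniform convergence requires a class fixed before the data is seen. The pool $\mathcal{F}_k$ is built both from greedily selected local functions \emph{and} from the parent predictions $\{\hat{y}_p^{emp}\}$, and these parent predictions are themselves learned objects, not members of $\mathcal{H}_{univ}$. The resolution is to unfold every parent prediction recursively into base functions along a topological sort (an induction analogous to Lemma~\ref{lem:linearity_of_predictors} in the linear case), so that $\hat{y}_k^{emp}$ is ultimately expressed purely as a combination of functions drawn from the fixed universe $\mathcal{H}_{univ}$; crucially, Lemma~\ref{lem:empirical_coeff_bound_nonlinear} bounds the coefficient norm through the projection/incoherence argument on the realized Gram matrix rather than through a product of per-layer norms, so the bound $\Lambda_1$ applies to whichever basis we use to express the predictor, including the unfolded base-function representation (whose distinct-function count is governed by $T_{\max}$ via Assumption~\ref{assump:empirical_incoherence}). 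Once membership in the \emph{fixed} class $\mathcal{G}_{\Lambda_1}(\mathcal{H}_{univ})$ is secured in this way, the uniform bound of Theorem~\ref{thm:gen_bound_nonlinear} applies to $\hat{y}_k^{emp}$ regardless of how it was selected; one should only take care that the failure probability of the incoherence event feeding Lemma~\ref{lem:empirical_coeff_bound_nonlinear} is folded into the overall $\delta$ by a union bound.
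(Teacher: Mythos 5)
Your proposal matches the paper's proof essentially verbatim: the paper likewise certifies membership of $\hat{y}_k^{emp}$ in $\mathcal{G}_{\Lambda_1}(\mathcal{H}_{univ})$ via Lemma~\ref{lem:empirical_coeff_bound_nonlinear} together with the $T_{\max}$ bound, and then applies the uniform bound of Theorem~\ref{thm:gen_bound_nonlinear} to that fixed class. Your additional discussion of recursively unfolding the parent predictions into base functions is in fact more careful than the paper's two-line proof, which simply asserts that $\hat{y}_k^{emp}$ is a combination of at most $T_{\max}$ functions from $\mathcal{H}_{univ}$ without addressing that the parent predictions are themselves learned objects outside $\mathcal{H}_{univ}$.
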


\begin{proof}
The proof is a direct application of the preceding results. From Lemma~\ref{lem:empirical_coeff_bound_nonlinear}, we know that with high probability, the learned predictor $\hat{y}_k^{emp}$ is a linear combination of at most $T_{\max}$ functions from $\mathcal{H}_{univ}$ with an $L_1$ norm bounded by $\Lambda_1$. Therefore, $\hat{y}_k^{emp}$ belongs to the class $\mathcal{G}_{\Lambda_1}(\mathcal{H}_{univ})$. The result follows immediately by applying Theorem~\ref{thm:gen_bound_nonlinear} to this class.
\end{proof}

\end{document}